\documentclass{article}
\usepackage{subcaption}
\usepackage{fullpage}

\usepackage[utf8]{inputenc}
\usepackage{authblk}
\usepackage{amsmath}

\title{\bf How Does Muon Work in Private Deep Learning? Scalable Differentially Private Training via Orthogonalization}

\author{Alexander Gaponov\thanks{Contacts: \texttt{agaponov04@gmail.com}} \quad Grigory Malinovsky \quad Egor Shulgin \quad Peter Richtárik}

\affil{King Abdullah University of Science and Technology (KAUST)\\ KAUST Center of Excellence for Generative AI \\ Thuwal, Saudi Arabia}

\date{}

\usepackage[round]{natbib}

\usepackage{stfloats}

\usepackage{parskip}
\usepackage{hyperref} 
\usepackage{subcaption}
\usepackage{booktabs}
\usepackage{dirtytalk}
\usepackage{makecell}
\usepackage{wrapfig}
\usepackage{tabularx}

\usepackage[hyperpageref]{backref}
\hypersetup{ 
    colorlinks=true,
    citecolor=darkscarlet,
    linkcolor=darkpowderblue,
    filecolor=magenta,      
    urlcolor=yaleblue,
    menucolor=gray
}
\renewcommand*{\backref}[1]{}
\renewcommand*{\backrefalt}[4]{%
   \ifcase #1 %
     \footnotesize{(Not cited.)}%
   \or
     \footnotesize{(Cited on page~#2)}%
   \else
     \footnotesize{(Cited on page~#2)}%
\fi }

\usepackage{multirow}
\usepackage{layout}

\usepackage{amsmath}
\usepackage{amsthm}
\usepackage{amssymb}
\usepackage{amsfonts}
\usepackage{mathtools}
\usepackage{siunitx}
\usepackage{lipsum}

\usepackage{enumitem}

\usepackage{bbm}

\usepackage{url}
\usepackage{color}
\usepackage{graphicx}
\usepackage{verbatim}
\usepackage{subcaption}

\usepackage{cleveref}

\usepackage{natbib}

\usepackage{apptools}
\usepackage{thmtools}
\usepackage{algorithm, algpseudocode}

\newcommand{\del}[1]{}

\newcommand{\clip}{\mathrm{clip}}

\usepackage{tcolorbox}
\usepackage{pifont}
\definecolor{mydarkgreen}{RGB}{39,130,67}
\definecolor{mydarkred}{RGB}{192,47,25}
\definecolor{mydarkorange}{RGB}{39,130,67}

\newtheorem{assumption}{Assumption}
\newtheorem{lemma}{Lemma}

\newtheorem{theorem}{Theorem}
\newtheorem{proposition}{Proposition}

\newtheorem{corollary}{Corollary}

\theoremstyle{plain}

\theoremstyle{definition}

\newtheorem{definition}[theorem]{Definition}

\usepackage{listings}
\usepackage{xcolor}

\definecolor{codegreen}{rgb}{0,0.6,0}
\definecolor{codegray}{rgb}{0.5,0.5,0.5}
\definecolor{codepurple}{rgb}{0.58,0,0.82}
\definecolor{backcolour}{rgb}{0.95,0.95,0.92}

\lstdefinestyle{mystyle}{
    backgroundcolor=\color{backcolour},   
    commentstyle=\color{codegreen},
    keywordstyle=\color{magenta},
    numberstyle=\tiny\color{codegray},
    stringstyle=\color{codepurple},
    basicstyle=\ttfamily\footnotesize,
    breakatwhitespace=false,         
    breaklines=true,                 
    captionpos=b,                    
    keepspaces=true,                 
    numbers=left,                    
    numbersep=5pt,                  
    showspaces=false,                
    showstringspaces=false,
    showtabs=false,                  
    tabsize=2
}

\definecolor{darkscarlet}{rgb}{0.34, 0.01, 0.1}
\definecolor{yaleblue}{rgb}{0.06, 0.3, 0.57}
\definecolor{darkpowderblue}{rgb}{0.0, 0.2, 0.6}
\definecolor{midnightblue}{rgb}{0.1, 0.1, 0.44}

\begin{document}

\maketitle

\begin{abstract}
Training deep neural networks requires optimizers that balance computational efficiency with formal data privacy.
While orthogonalized update rules, such as the recently introduced Muon optimizer, have shown strong empirical performance and scalability, their behavior under Differential Privacy (DP) is not yet understood.
In this paper, we propose DP-Muon, a differentially private variant of the Muon optimizer.
By analyzing Muon's orthogonalized updates through the lens of a Linear Minimization Oracle (LMO), we show that the leading effect of per-example clipping is a positive rescaling of the gradient, to which the LMO is blind, so that only a small directional residual controlled by the centered gradient variance enters the optimization error.
This yields an $\mathcal O(K^{-1/4})$ convergence rate, up to logarithmic factors, to a clipping-dependent neighborhood at a fixed noise scale and, with noise calibrated to a fixed $(\varepsilon,\delta)$ budget, the standard $(\sqrt d/(n\varepsilon))^{1/2}$ privacy--utility bound of DP-SGD, without assuming bounded gradients. Furthermore, we investigate the hyperparameter landscape of DP-Muon and document how the optimal learning rate, momentum, and batch size change under privacy.
Empirically, on image classification DP-Muon matches or improves on DP-SGD and DP-Adam, whereas on language modeling the best optimizer depends on the batch size, with DP-Muon best at small batches and DP-Adam best near the privacy-optimal batch size.
\end{abstract}

\section{Introduction}

Adaptive methods such as Adam \citep{kingma2014adam} and AdamW \citep{loshchilov2017decoupled} remain the default choice for training large models \citep{achiam2023gpt, liu2024deepseek}.
A prominent recent alternative is Muon \citep{jordan2024muon}, which applies an approximate orthogonalization to the momentum of matrix-valued parameters.
Muon has attracted considerable attention due to its strong empirical performance, efficient GPU implementation, and reduced optimizer-state memory; its scalability has been demonstrated by training the Moonlight 16B Mixture-of-Experts model on 5.7 trillion tokens \citep{liu2025muon}.
Its update can be interpreted through stochastic spectral descent \citep{carlson2015stochastic, kovalev2018stochastic} and, more generally, as a step along a \emph{linear minimization oracle} (LMO) over a norm ball \citep{jaggi2013revisiting, pethick2025training}.
The LMO view gives a unified treatment of non-Euclidean methods with momentum, recovering normalized SGD \citep{levy2016power, cutkosky2020momentum}, SignSGD and Signum \citep{bernstein2018signsgd, sun2023momentum}, Scion \citep{pethick2025training}, and Muon as special cases.

In this work, we study whether, and how, the advantages of momentum orthogonalization carry over to differential privacy (DP).
We consider the standard central, example-level setting of DP-SGD \citep{Abadi_2016}: each per-example gradient is clipped to norm $c$, Gaussian noise is added to the average, and the result is passed to the optimizer; the momentum and LMO steps are post-processing and do not affect privacy \citep{dwork2014algorithmic}.
The optimization problem, however, becomes substantially more delicate.
Active clipping introduces a state-dependent bias, privacy noise is injected at every step, momentum aggregates gradients along a changing trajectory, and the noisy momentum is then passed through a nonlinear map.
Existing analyses of private nonconvex optimization typically sidestep active clipping: the sharpest rates assume Lipschitz losses, that is, uniformly bounded per-example gradients, so that clipping at a threshold above the bound is inactive \citep{bassily2021non, tran2022momentum, wang2023efficient, arora2023faster, lowy2024make}, and analyses of private distributed methods make similar assumptions (Appendix~\ref{app:related}).
Generic analyses of biased stochastic gradients \citep{ajalloeian2020convergence, demidovich2023guide} measure the full additive bias and do not exploit the structure of the update.

\paragraph{Key observation: for an LMO, clipping is (almost) a rescaling.}
Let $\mu(g):=\mathbb E[\clip_c(g+\xi)]$ be the expected clipped gradient, where $g$ is the true gradient and $\xi$ is zero-mean sampling noise with variance $\sigma^2$.
We show (Lemma~\ref{lem:ray_main}) that
\begin{equation}
\label{eq:ray_intro}
\begin{gathered}
    \mu(g)=\lambda(g)\,g+r(g),\qquad
    \lambda(g)\asymp\frac{c}{\max\{c,\|g\|_2\}},\\
    \|r(g)\|_2\lesssim \lambda(g)\min\Bigl\{\sigma,\frac{\sigma^2}{c}\Bigr\}.
\end{gathered}
\end{equation}
The leading effect of clipping is thus a \emph{positive rescaling} of the true gradient, and an LMO is blind to it: $\mathrm{LMO}(\lambda g)=\mathrm{LMO}(g)$ for every $\lambda>0$.
Only the residual $r(g)$, which changes the \emph{direction}, enters the optimization error.
This lets us analyze momentum-LMO methods under active clipping assuming only bounded \emph{centered} variance, without bounded gradients and without a clipping threshold that grows with the horizon.

\paragraph{Contributions.}
Our answer to the question in the title has a theoretical and an empirical part.
\begin{itemize}[leftmargin=1.2em,itemsep=1pt,topsep=2pt]
    \item \textbf{Clipping as rescaling.} We prove the positive-ray decomposition~\eqref{eq:ray_intro} and a matching bound on the variance of clipped gradients relative to $\lambda(g)$ (Section~\ref{sec:theory}).
    \item \textbf{Convergence of private momentum-LMO methods.} For smooth nonconvex objectives in any finite-dimensional norm (covering Muon, Scion, sign descent, and normalized SGD), we prove a high-probability bound that separates optimization error, momentum tracking, sampling noise, privacy noise, and a clipping floor $\rho_1\min\{\sigma,\sigma^2/c\}$ (Theorem~\ref{thm:main}).
    The floor is the dual-norm version of the tight Euclidean floor of clipped SGD \citep{koloskova2023revisiting} and vanishes as $\sigma\to0$; passing through the LMO thus costs at most the norm-conversion constant $\rho_1$ relative to clipped SGD, and the guarantee does not show that it improves on it.
    At a fixed noise scale, the bound gives the rate $\widetilde{\mathcal O}(K^{-1/4})$, where $\widetilde{\mathcal O}$ hides factors logarithmic in $K$ and in the failure probability; and with noise calibrated to an $(\varepsilon,\delta)$ budget it gives the familiar $\mathcal O\big((c\,\mathbb E\|Z\|_\star\sqrt{\log(1/\delta)}/(n\varepsilon))^{1/2}\big)$ dependence (Corollary~\ref{cor:rates}, Appendix~\ref{app:calibrated}).
    \item \textbf{What changes under DP.} On CNN and a 30M-parameter GPT model, we sweep the batch size, learning rate, and momentum for DP-SGD, DP-Adam, and DP-Muon (and Scion).
    Under DP, the optimal batch size grows by one to two orders of magnitude and the ranking of optimizers \emph{depends on the batch size}, unlike in non-private training, where Muon is best at every batch size.
    On CIFAR-10, DP-Muon attains the lowest mean test loss at all but one batch size, with the gaps within one standard deviation at intermediate batch sizes; on language modeling, DP-Muon is best at small batch sizes, DP-Adam near the optimal batch size, and Scion at large batch sizes (Section~\ref{sec:experiments}).
\end{itemize}

\paragraph{Concurrent work.}
Two concurrent and independent works also study private Muon\footnote{Both also use the name ``DP-Muon'' for closely related algorithms.}.
\citet{kim2026dpmuon} analyze a Muon variant with finite-step Newton--Schulz (NS) orthogonalization and propose a bias-corrected DP-MuonBC; their vanishing-rate corollaries assume moment bounds on the raw (uncentered) per-example gradients and let the clipping threshold grow with the horizon, while their general bound contains a clipping term that persists even without sampling noise.
\citet{shanmugam2026spectral} study when orthogonalization recovers useful singular directions of privatized gradients through a Wedin-type signal-to-noise argument and report large gains of DP-Muon over DP-SGD in vision at large batch sizes.
Our analysis is complementary: it covers general LMO geometries, requires only centered variance, and isolates the part of the clipping bias that an LMO actually sees; a detailed comparison is given in Section~\ref{sec:comparison} and Appendix~\ref{app:comparison}.

\section{Setup and Algorithm}
\label{sec:setup}

We consider $\min_{x\in\mathcal X} f(x):=\frac1n\sum_{i=1}^n f_i(x)$ over a finite-dimensional normed space $(\mathcal X,\|\cdot\|)$ with dual norm $\|v\|_\star:=\sup_{\|d\|\le1}\langle v,d\rangle$.
For the unit ball $\mathcal C=\{v:\|v\|\le1\}$, the linear minimization oracle is $\mathrm{LMO}(g)\in\arg\min_{v\in\mathcal C}\langle g,v\rangle$, so that $\langle g,\mathrm{LMO}(g)\rangle=-\|g\|_\star$ and $\mathrm{LMO}(\lambda g)=\mathrm{LMO}(g)$ for $\lambda>0$.
The spectral norm on a weight matrix gives Muon, $\mathrm{LMO}(G)=-UV^\top$ for $G=U\Sigma V^\top$; the $\ell_\infty$ norm gives sign descent; the $\ell_2$ norm gives normalized gradient descent; and layer-wise combinations give Scion \citep{pethick2025training} (see Table~\ref{tab:lmo_norms} in the appendix).
Since $\mathcal X$ is finite-dimensional, there are constants $\rho_1,\rho_2$ with $\|v\|_\star\le\rho_1\|v\|_2$ and $\|v\|_2\le\rho_2\|v\|_\star$; for the spectral norm on an $m\times n'$ matrix, $\|\cdot\|_\star$ is the nuclear norm, $\rho_1=\sqrt{\min\{m,n'\}}$, and $\rho_2=1$.
Clipping is Euclidean, $\clip_c(v):=v\min\{1,c/\|v\|_2\}$, as in DP-SGD.

\begin{algorithm}[t]
\caption{DP-Muon (private momentum-LMO)}
\label{alg:dpmuon_}
\begin{algorithmic}[1]
    \Require stepsize $\gamma$, momentum $\alpha\in(0,1)$, clipping threshold $c$, noise scale $\sigma_{\rm DP}$, batch size $B$, iterations $K$
    \State $m_0 \gets 0$
    \For{$k= 0, \dots, K-1$}
        \State Sample $S_k\subseteq[n]$, $|S_k|=B$, uniformly at random
        \State $\hat{g}_k \gets \frac{1}{B} \sum_{i \in S_k} \clip_c (\nabla f_{i}(x_k)) + z_k$, \ $z_k \sim \mathcal{N}(0, c^2\sigma_{\rm DP}^2 I)$
        \State $m_{k+1} \gets (1-\alpha) m_k + \alpha \hat{g}_k$
        \State $x_{k+1} \gets x_k + \gamma \operatorname{LMO}(m_{k+1})$
    \EndFor
\end{algorithmic}
\end{algorithm}

DP-Muon (Algorithm~\ref{alg:dpmuon_}) privatizes the mini-batch gradient exactly as DP-SGD does; only the optimizer step differs.
In practice, Muon is applied with a few Newton--Schulz iterations in place of the exact polar factor; our analysis assumes an exact LMO.

A randomized algorithm $\mathcal M$ is $(\varepsilon,\delta)$-differentially private if $\mathbb P(\mathcal M(\mathcal D)\in S)\le e^{\varepsilon}\,\mathbb P(\mathcal M(\mathcal D')\in S)+\delta$ for all measurable $S$ and all adjacent datasets $\mathcal D,\mathcal D'$ \citep{dwork2014algorithmic}.
We use add/remove adjacency (one example added or removed) for Poisson subsampling, as in \citet{Abadi_2016}, and replace-one adjacency for fixed-size sampling without replacement \citep{wang2019subsampled}; an $(\varepsilon,\delta)$-DP guarantee under add/remove adjacency implies $(2\varepsilon,(1+e^{\varepsilon})\delta)$-DP under replace-one adjacency, but not conversely.

\begin{proposition}[Privacy]
\label{prop:privacy}
Let $x_0$, $K$, $B$, $c$, $\sigma_{\rm DP}$, $\alpha$, and $\gamma$ be fixed independently of the data (and, under replace-one adjacency, $n$ be public), and let the output be the whole trajectory $x_0,\dots,x_K$, so that any function of it, such as the tail average or a uniformly sampled iterate, inherits the guarantee.
The privacy of Algorithm~\ref{alg:dpmuon_} is that of DP-SGD with the same sampling, clipping, and noise: the momentum and LMO steps only post-process the privatized gradients $\hat g_0,\dots,\hat g_{K-1}$.
In particular, with $q=B/n$, there are constants $c_1,c_2>0$ such that Algorithm~\ref{alg:dpmuon_}, with fixed-size sampling without replacement, is $(\varepsilon,\delta)$-DP under replace-one adjacency for any $\varepsilon<c_1q^2K$ if
\begin{equation}
\label{eq:dp_noise}
    \sigma_{\rm DP}\ \ge\ \frac{c_2\sqrt{K\log(1/\delta)}}{n\varepsilon}.
\end{equation}
If the fixed-size sampling is replaced by Poisson subsampling at rate $q$ (normalized by the expected batch size $B$), a calibration of the same form, with different constants, holds under add/remove adjacency \citep[Theorem~1]{Abadi_2016}.
\end{proposition}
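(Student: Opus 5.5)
The plan is to reduce the statement to the known guarantee for DP-SGD by post-processing, and then quote the subsampled Gaussian mechanism analysis for the explicit calibration.

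\textbf{Step 1: a private core mechanism.} Define the mechanism $\mathcal M_{\rm core}$ that outputs the sequence $\hat g_0,\dots,\hat g_{K-1}$. This mechanism is defined adaptively: $\hat g_k$ is computed at $x_k$, and $x_k$ is a deterministic function of $x_0,\gamma,\alpha$ and $\hat g_0,\dots,\hat g_{k-1}$. The map is $m_{k+1}=(1-\alpha)m_k+\alpha\hat g_k$, $x_{k+1}=x_k+\gamma\,\mathrm{LMO}(m_{k+1})$, with a fixed, data-independent tie-breaking rule for the LMO. So $\mathcal M_{\rm core}$ is an adaptive composition of $K$ subsampled Gaussian mechanisms. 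At step $k$ the query is $v\mapsto\frac1B\sum_{i\in S_k}\clip_c(\nabla f_i(x_k))$, and the query point $x_k$ is allowed to depend on the previous outputs.

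\textbf{Step 2: sensitivity at each step.} Under replace-one adjacency with fixed batch size $B$, swapping one example changes at most one summand. Each summand has norm at most $c$, so the $\ell_2$-sensitivity is at most $2c/B$. The noise $z_k\sim\mathcal N(0,c^2\sigma_{\rm DP}^2 I)$ therefore has noise multiplier proportional to $B\sigma_{\rm DP}$ relative to the sensitivity, and this is exactly the DP-SGD step.

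\textbf{Step 3: the trajectory is post-processing.} The trajectory $x_0,\dots,x_K$ is a deterministic function of $\hat g_{0:K-1}$ and of public quantities ($x_0,\alpha,\gamma$, and $n$). By the post-processing property \citep{dwork2014algorithmic}, it inherits any guarantee of $\mathcal M_{\rm core}$. The same holds for any further function of the trajectory, including randomized ones whose randomness is independent of the data, such as choosing an iterate uniformly at random.

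\textbf{Step 4: the explicit calibration.} This is the main obstacle, because it requires privacy amplification by subsampling together with moments-accountant-style composition, rather than naive composition (which would lose $\sqrt{K}$ or require $q$-independent noise). Here I would invoke the known RDP bound for the subsampled Gaussian mechanism without replacement \citep{wang2019subsampled}. For small orders $\lambda$ and $\sigma$ above a constant, it gives RDP of order $O(q^2\lambda/\sigma^2)$ per step, where the effective multiplier is $\sigma\asymp B\sigma_{\rm DP}$. Composing over $K$ steps gives $\varepsilon_{\rm RDP}(\lambda)\lesssim Kq^2\lambda/(B\sigma_{\rm DP})^2$. Converting to $(\varepsilon,\delta)$-DP via $\varepsilon=\varepsilon_{\rm RDP}(\lambda)+\log(1/\delta)/(\lambda-1)$ and optimizing $\lambda$ gives $\varepsilon\lesssim q\sqrt{K\log(1/\delta)}/(B\sigma_{\rm DP})$. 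Substituting $q/B=1/n$ yields \eqref{eq:dp_noise}.

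The condition $\varepsilon<c_1q^2K$ ensures that the optimal $\lambda$ stays in the range where the small-order RDP bound is valid. This mirrors the argument behind \citet[Theorem~1]{Abadi_2016}, and I would track the constants only loosely. For Poisson subsampling, the same chain holds with add/remove adjacency: the sensitivity is $c/B$, and one cites \citet[Theorem~1]{Abadi_2016} directly.
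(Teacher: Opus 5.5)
Your proposal is correct and follows the paper's proof sketch essentially step for step. The trajectory is post-processing of an adaptive composition over the releases $\hat g_k$, and replace-one sensitivity $2c$ gives a noise multiplier $B\sigma_{\rm DP}$ up to constants. The small-order R\'enyi bound $\mathcal O(q^2a/\sigma_{\rm mult}^2)$ of \citet{wang2019subsampled} is composed over $K$ steps and converted to $(\varepsilon,\delta)$ as in \citet[Theorem~1]{Abadi_2016}, and $\varepsilon<c_1q^2K$ comes from keeping the optimizing order in the valid range. Your remark that the LMO tie-breaking rule must be data-independent is a small detail the paper leaves implicit.
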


\begin{proof}[Proof sketch]
The first claim is the post-processing property of differential privacy \citep{dwork2014algorithmic}, applied with adaptive composition over the $K$ releases $\hat g_k$.
For fixed-size sampling, replacing one example changes the clipped sum $\sum_{i\in S_k}\clip_c(\nabla f_i(x_k))$ by at most $2c$ in $\ell_2$, so each release is a Gaussian mechanism with noise multiplier $\sigma_{\rm mult}=B\sigma_{\rm DP}$ relative to sensitivity $c$.
For the subsampled Gaussian mechanism, \citet[Theorem~9 and the discussion after it]{wang2019subsampled} give the R\'enyi-DP bound $\varepsilon'(a)=\mathcal O(q^2a/\sigma_{\rm mult}^2)$ at small orders $a$ and large $\sigma_{\rm mult}$; the replace-one sensitivity $2c$ only changes the constant.
Composing over $K$ steps and converting to $(\varepsilon,\delta)$ with an order $a\asymp\log(1/\delta)/\varepsilon$, exactly as in the proof of \citet[Theorem~1]{Abadi_2016}, gives~\eqref{eq:dp_noise}; the requirement that this order lies in the range where the bound holds is what imposes $\varepsilon<c_1q^2K$.
\end{proof}

Note that, for a fixed number of steps $K$, the noise scale on the \emph{averaged} gradient, $c\,\sigma_{\rm DP}$, does not depend on $B$; at a fixed budget of epochs, $K\propto1/B$ and the noise on the average decreases with $B$ only through the smaller number of steps (Appendix~\ref{app:scaling_accountant} quantifies this with the accountant actually used in the experiments).

\begin{assumption}
\label{assum:smoothness}
$f$ is bounded below by $f^\star$, and $\nabla f$ is $L$-Lipschitz: $\|\nabla f(x)-\nabla f(y)\|_\star\le L\|x-y\|$ for all $x,y$.
\end{assumption}

\begin{assumption}
\label{assum:stochastic_oracle}
For all $x$, $\frac1n\sum_{i=1}^n\|\nabla f_i(x)-\nabla f(x)\|_2^2\le\sigma^2$.
\end{assumption}

Assumption~\ref{assum:stochastic_oracle} bounds only the \emph{centered} spread of per-example gradients; in particular, the gradients themselves may be arbitrarily large, so clipping can be active throughout training.

\section{Theory: How Muon Handles Clipping}
\label{sec:theory}

\subsection{Clipping is a rescaling up to a small residual}

For a fixed point $x$, write $g=\nabla f(x)$ and $\xi_i=\nabla f_i(x)-g$, so that $\mu(g)=\frac1n\sum_i\clip_c(g+\xi_i)$ is the expected clipped mini-batch gradient, and let $a_c(g):=c/\max\{c,\|g\|_2\}$ denote the factor by which $g$ itself would be clipped.

\begin{lemma}[Positive-ray decomposition]
\label{lem:ray_main}
Under Assumption~\ref{assum:stochastic_oracle}, for every $g$ there is $\lambda(g)\in[\tfrac14a_c(g),\,2a_c(g)]$ such that
\[
    \|\mu(g)-\lambda(g)g\|_2\le16\,\lambda(g)\min\Bigl\{\sigma,\frac{\sigma^2}{c}\Bigr\},
\]
and $\frac1n\sum_i\|\clip_c(g+\xi_i)-\mu(g)\|_2^2\le320\,\lambda(g)^2\sigma^2$.
\end{lemma}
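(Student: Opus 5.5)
We split into two regimes according to the size of $\|g\|_2$ relative to $c$, and in each we define $\lambda(g)$ explicitly. Throughout we use $\frac1n\sum_i\xi_i=0$ and $\frac1n\sum_i\|\xi_i\|_2^2\le\sigma^2$. By Markov, the fraction of indices with $\|\xi_i\|_2>t$ is at most $\sigma^2/t^2$. We also use that $\clip_c$ is $1$-Lipschitz (it is the projection onto the ball) and that $\|\clip_c(v)\|_2\le c$.

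\textbf{Regime A: $\|g\|_2\le 2c$, so $a_c(g)\ge\tfrac12$.} Take $\lambda(g)=1$, which lies in $[\tfrac14a_c,2a_c]$ since $a_c\in[\tfrac12,1]$. Then
\[
\mu(g)-g=\tfrac1n\sum_i\bigl(\clip_c(g+\xi_i)-(g+\xi_i)\bigr),
\]
and by Lipschitzness the total deviation is at most $\frac1n\sum_i\|\xi_i\|_2\le\sigma$. This gives the $\sigma$ branch.

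The $\sigma^2/c$ branch is needed only when $\sigma\le c$. There we must be more careful, because when $\|g\|_2$ is close to $c$ even small $\xi_i$ get clipped. A cleaner route is to choose $\lambda$ as the clipping factor of $g$ at a slightly inflated threshold, or to take $\lambda=\langle\mu(g),g\rangle/\|g\|_2^2$, the projection coefficient, and bound the orthogonal component. Write $\clip_c(g+\xi)=\phi(\|g+\xi\|)(g+\xi)$ with $\phi(s)=\min\{1,c/s\}$. Then
\[
\mu-\phi(\|g\|)g=\tfrac1n\sum_i\Bigl[\bigl(\phi(\|g+\xi_i\|)-\phi(\|g\|)\bigr)(g+\xi_i)\Bigr]+\phi(\|g\|)\,\bar\xi,
\]
and $\bar\xi=0$. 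The term $\bigl(\phi(\|g+\xi\|)-\phi(\|g\|)\bigr)g$ is parallel to $g$ and is absorbed into $\lambda$. The remaining term $\bigl(\phi(\|g+\xi\|)-\phi(\|g\|)\bigr)\xi$ is bounded by $|\phi'|\cdot\|\xi\|^2\lesssim \|\xi\|^2/\max\{c,\|g\|\}$, since $\phi$ is Lipschitz in $s$ with constant about $c/s^2\le 1/\max\{c,s\}$ on the relevant range. Averaging gives $\lesssim\sigma^2/\max\{c,\|g\|\}=a_c\sigma^2/c$.

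This single argument handles both regimes whenever $\|\xi_i\|$ is small compared with $\max\{c,\|g\|\}$. The large-$\xi$ indices, those with $\|\xi_i\|>\max\{c,\|g\|\}/2$, are treated separately. Each contributes at most $2c$ in norm, and by Markov they carry total mass at most $\sigma^2/\max\{c,\|g\|\}^2$ times $\max\{c,\|g\|\}$ after using $\|\xi\|\ge$ threshold. This again gives $\lesssim a_c\sigma^2/c$.

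\textbf{The parallel part.} We set $\lambda(g)=\phi(\|g\|)+\frac1n\sum_i\bigl(\phi(\|g+\xi_i\|)-\phi(\|g\|)\bigr)$, restricted to the small-$\xi$ indices. We then check that $\lambda\in[\tfrac14a_c,2a_c]$. Each $\phi(\|g+\xi_i\|)$ with $\|\xi_i\|\le\max\{c,\|g\|\}/2$ lies within a factor of $2/3$ to $2$ of $a_c$, while the excluded mass is small when $\sigma\lesssim c$. When $\sigma\gtrsim c$ the $\sigma$ branch is the active one. In that case we fall back to the crude bound $\|\mu-\lambda g\|\le\|\mu\|+\lambda\|g\|\le c+2c\lesssim\lambda\sigma$, valid when $\|g\|\lesssim\sigma$. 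The remaining case $\|g\|\gg\sigma\gtrsim c$ is handled by the small/large split above with threshold $\|g\|/2$. Tracking constants through this case analysis to reach exactly $16$ and $\tfrac14$ is the main bookkeeping obstacle.

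\textbf{The variance.} The bound $\frac1n\sum_i\|\clip_c(g+\xi_i)-\mu\|^2\le\frac1n\sum_i\|\clip_c(g+\xi_i)-\clip_c(g)\|^2$ holds because the mean minimizes squared deviation. We then improve the naive Lipschitz bound $\le\sigma^2$ to $\lesssim a_c^2\sigma^2$ when $\|g\|\gg c$. For small $\xi$, radial projection shrinks perpendicular perturbations by about $c/\|g\|$. For large $\xi$, the contribution is bounded by $(2c)^2$ times the Markov mass $\sigma^2/\|g\|^2$, which is again $\lesssim a_c^2\sigma^2$. Combining with $\lambda\ge a_c/4$ gives the stated $320\lambda^2\sigma^2$ after constants.
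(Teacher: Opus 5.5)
Your variance argument is the paper's. Your choice of $\lambda$ is, if anything, neater than the paper's. Let $M:=\max\{c,\|g\|_2\}$ and $L=\{i:\|\xi_i\|_2>M/2\}$. Your $\lambda$ averages $a_c(g)$ over $L$ and $a_c(g+\xi_i)\in[\tfrac23a_c(g),2a_c(g)]$ over the remaining indices. So $\lambda\in[\tfrac23a_c(g),2a_c(g)]$ holds with no condition on the excluded mass and no Jensen step.

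The gap is the case $\sigma>c$, $\|g\|_2\gg\sigma$, which you dispatch as ``handled by the small/large split above''. There you need $\|\mu(g)-\lambda g\|_2\lesssim\lambda\sigma\asymp c\sigma/\|g\|_2$. The estimate you recorded for the split is only $\sigma^2/M=\sigma^2/\|g\|_2$, too large by the unbounded factor $\sigma/c$. The factor $a_c(g)=c/M$ is lost where you weaken $|\phi'(s)|=c/s^2$ to $1/\max\{c,s\}$, and the large-noise part is likewise recorded only as $a_c\sigma^2/c$.

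Keep that factor. By $|a_c(y)-a_c(x)|\le a_c(x)a_c(y)\|y-x\|_2/c$, the small-noise part is at most $2a_c(g)^2\sigma^2/c=2a_c(g)\sigma^2/M$. Each large-noise term obeys $\|(a_c(g+\xi_i)-a_c(g))(g+\xi_i)\|_2\le a_c(g)\|\xi_i\|_2$, so that part is also at most $2a_c(g)\sigma^2/M$. The residual is then at most $4a_c(g)\sigma\cdot\sigma/M$. This is $\le6\lambda\sigma^2/c$ for every $g$, and $\le2\lambda\sigma$ once $\|g\|_2\ge3\sigma$. Your crude bound $3c\le16\lambda\sigma$ covers $\sigma>c$, $\|g\|_2<3\sigma$, and this closes the proof with the stated constants.

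Your looser large-noise accounting ($2c$ per index plus the mean shift) would only certify $11\sigma^2/c$ when $\|g\|_2<c$. That does not reach $16\lambda\sigma^2/c$ from the guaranteed $\lambda\ge\tfrac23$, so the sharper per-index bound is needed for the constants too.

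The paper handles this regime differently, with a first-order estimate. It takes $\lambda=\langle\mu(g),g\rangle/\|g\|_2^2$ and bounds only the part of $\mu(g)$ orthogonal to $g$. It uses $\|P_{g^\perp}\clip_c(g+\xi)\|_2\le2c\|\xi\|_2/\|g\|_2$ on $\{\|\xi\|_2\le\|g\|_2/2\}$ and Markov off it, with no cancellation from $\frac1n\sum_i\xi_i=0$.

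Separately, the first step of Regime A is false. With $\lambda=1$ and $c<\|g\|_2\le2c$, $\|\mu(g)-g\|_2$ is not bounded by $\frac1n\sum_i\|\xi_i\|_2$: if all $\xi_i=0$, then $\mu(g)-g=(a_c(g)-1)g\neq0$. Nonexpansiveness controls $\clip_c(g+\xi_i)-\clip_c(g)$, not $\clip_c(g+\xi_i)-(g+\xi_i)$. Compare with $\clip_c(g)=a_c(g)g$ instead, i.e.\ take $\lambda=a_c(g)$, which gives $\|\mu(g)-a_c(g)g\|_2\le\sigma\le2a_c(g)\sigma$. Your later case analysis makes this step unnecessary, but as written it is wrong.
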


The scale $\lambda(g)$ is comparable to the clipping factor of the \emph{true} gradient, so the residual is small \emph{relative to the shrunken signal} $\lambda(g)g$: the clipped mean points in the right direction up to an angle controlled by $\min\{\sigma,\sigma^2/c\}/\|g\|_2$.
When $c\ge\sigma$ and $\|g\|_2\ge4\sigma^2/c$, the proof takes $\lambda(g)=\langle\mu(g),g\rangle/\|g\|_2^2$, so the residual is the component orthogonal to $g$ of $\mathbb E[a_c(g+\xi)\xi]=\mathbb E[(a_c(g+\xi)-a_c(g))\xi]$, which is second order in $\xi$ by the two-point estimate $|a_c(y)-a_c(g)|\le a_c(y)a_c(g)\|y-g\|_2/c$. When $c\ge\sigma$ and $\|g\|_2<4\sigma^2/c$, it takes $\lambda(g)=a_c(g)$, and the residual contains in addition the longitudinal term $(\mathbb E\,a_c(g+\xi)-a_c(g))g$, which the same estimate bounds by $4a_c(g)\sigma^2/c$ because $g$ is small. When $c<\sigma$ and $\|g\|_2\gg\sigma$, most per-example gradients are clipped onto a cone around $g$.
Both cases use only the centered variance (proofs in Appendix~\ref{app:projective_lmo_lemma}).
\citet{koloskova2023revisiting} bound the additive deviation $\|\mu(g)-\clip_c(g)\|_2$ from the same ray, since $\clip_c(g)=a_c(g)g$; the new element here is that the deviation is bounded \emph{relative to} $\lambda(g)$, uniformly in $\|g\|_2$, which is exactly what a scale-invariant update needs.
Relatedly, \citet{xiao2023theory} identify the sampling noise of the gradients being clipped as a key source of the bias of clipped and normalized DP-SGD.

The second ingredient translates a small \emph{projective} error into a descent direction.

\begin{lemma}[Projective LMO bound]
\label{lem:projective_main}
For all $m,g\in\mathcal X^\star$ and $d\in\mathrm{LMO}(m)$,
\[
    \langle g,d\rangle\ \le\ -\|g\|_\star+2\,\mathcal R(m,g),
\]
where $\mathcal R(m,g):=\inf_{\lambda>0}\|m-\lambda g\|_\star/\lambda$.
\end{lemma}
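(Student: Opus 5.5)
The plan is to fix an arbitrary $\lambda>0$, prove the inequality with $\|m-\lambda g\|_\star/\lambda$ in place of $\mathcal R(m,g)$, and then take the infimum over $\lambda$. The argument uses only two properties of the LMO over the unit ball $\mathcal C$. First, $\langle m,d\rangle=-\|m\|_\star$ for $d\in\mathrm{LMO}(m)$. Second, $\|d\|\le1$, so $|\langle v,d\rangle|\le\|v\|_\star$ for every $v$ by the definition of the dual norm.

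First, write $g=\tfrac1\lambda m+\tfrac1\lambda(\lambda g-m)$ and pair it with $d$:
\[
\langle g,d\rangle=\tfrac1\lambda\langle m,d\rangle+\tfrac1\lambda\langle \lambda g-m,d\rangle\le -\tfrac1\lambda\|m\|_\star+\tfrac1\lambda\|m-\lambda g\|_\star .
\]
Second, the triangle inequality gives $\|m\|_\star\ge\lambda\|g\|_\star-\|m-\lambda g\|_\star$, hence $-\tfrac1\lambda\|m\|_\star\le-\|g\|_\star+\tfrac1\lambda\|m-\lambda g\|_\star$. Substituting this into the first display yields
\[
\langle g,d\rangle\le-\|g\|_\star+\tfrac2\lambda\|m-\lambda g\|_\star .
\]
Third, the left-hand side does not depend on $\lambda$, so taking the infimum over $\lambda>0$ gives the claim with $2\mathcal R(m,g)$.

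The degenerate cases need a brief check, since they are the only place anything could go wrong. If $m=0$, then $d$ is any point of $\mathcal C$. In that case $\|m-\lambda g\|_\star/\lambda=\|g\|_\star$ for every $\lambda$, so the right-hand side equals $\|g\|_\star\ge\langle g,d\rangle$ and the bound still holds. If $g=0$, both sides reduce to trivial statements.

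There is no real obstacle here. The one design choice is to apply the positive homogeneity of the LMO implicitly, by scaling $m$ by $1/\lambda$ instead of scaling $g$. This scaling is what produces the factor $1/\lambda$ and makes the bound scale-invariant, which is exactly what allows Lemma~\ref{lem:ray_main} to be plugged in later with $\lambda=\lambda(g)$.
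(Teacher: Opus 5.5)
Your proof is correct and is essentially the paper's argument: fix $\lambda>0$, write $\lambda\langle g,d\rangle=\langle m,d\rangle-\langle m-\lambda g,d\rangle\le-\|m\|_\star+\|m-\lambda g\|_\star\le-\lambda\|g\|_\star+2\|m-\lambda g\|_\star$ using the dual-norm and triangle inequalities, then divide by $\lambda$ and take the infimum over $\lambda>0$. Your separate check of $m=0$ is harmless but not needed, because $\langle m,d\rangle=-\|m\|_\star$ holds trivially when $m=0$, so the general argument already covers that case.
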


This is the normalized-step inequality of \citet[Lemma~2]{cutkosky2020momentum} (see also \citealp{pethick2025training}) applied to $m/\lambda$, using $\mathrm{LMO}(m)=\mathrm{LMO}(m/\lambda)$ and optimizing over $\lambda$.
Unlike the additive bias $\|m-g\|_\star$, $\mathcal R(m,g)$ vanishes whenever $m$ is \emph{any} positive multiple of $g$.

\subsection{Convergence of DP-Muon}

Let $k_{\rm b}:=\lceil\log2/\alpha\rceil$, $T:=K-k_{\rm b}$, $\Phi_0:=f(x_0)-f^\star$, $\bar\Phi:=(\sqrt{\Phi_0}+k_{\rm b}\gamma\sqrt{L/2})^2$, $\widetilde\sigma:=\rho_1\sigma$, $\widetilde b_c:=\rho_1\min\{\sigma,\sigma^2/c\}$, and, for $\eta\in(0,1)$ and $Z\sim\mathcal N(0,I)$,
\[
    \nu_{\rm DP}(\eta):=c\,\sigma_{\rm DP}\sqrt{3\alpha}\Bigl(\mathbb E\|Z\|_\star+\rho_1\sqrt{2\log(T/\eta)}\Bigr),
\]
the high-probability size of the momentum-filtered privacy noise.

\begin{theorem}
\label{thm:main}
Let Assumptions~\ref{assum:smoothness}--\ref{assum:stochastic_oracle} hold, $T>0$, and
\begin{equation}
\label{eq:main_conditions}
    \frac{\rho_2L\gamma}{c}\le\frac{\alpha}{4},
    \qquad
    \frac{10\rho_2\,\nu_{\rm DP}(\eta)}{c}\le\frac12.
\end{equation}
Then, with probability at least $1-\eta$ over the privacy noise, the iterates of Algorithm~\ref{alg:dpmuon_} satisfy
\[
\begin{aligned}
    &\frac1T\sum_{k=k_{\rm b}}^{K-1}\mathbb E_S\|\nabla f(x_k)\|_\star
    \le
    \frac{2\bar\Phi}{\gamma T}+\frac{45L\gamma}{\alpha}+64\,\widetilde b_c
    \\
    &\qquad+C\frac{\widetilde\sigma}{\sqrt B}\Bigl(\sqrt\alpha+\frac1{\alpha T}\Bigr)+20\,\nu_{\rm DP}(\eta),
\end{aligned}
\]
where $\mathbb E_S$ is the expectation over mini-batch sampling given the privacy noise and $C$ is a universal constant.
\end{theorem}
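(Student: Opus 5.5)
The plan is the standard momentum-LMO descent argument, but measured projectively: errors are compared with a positive multiple of $\nabla f(x_k)$, not with the gradient itself.

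\textbf{Step 1 (one-step descent).} By $L$-smoothness in $\|\cdot\|$ and $\|d_k\|\le1$,
\[
f(x_{k+1})\le f(x_k)+\gamma\langle\nabla f(x_k),d_k\rangle+\tfrac{L\gamma^2}{2}.
\]
Lemma~\ref{lem:projective_main} with $m=m_{k+1}$ then gives
\[
f(x_{k+1})\le f(x_k)-\gamma\|\nabla f(x_k)\|_\star+2\gamma\,\mathcal R(m_{k+1},\nabla f(x_k))+\tfrac{L\gamma^2}{2}.
\]
Telescoping over $k\ge k_{\rm b}$ reduces the theorem to bounding $\frac1T\sum_k\mathcal R(m_{k+1},\nabla f(x_k))$. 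The first $k_{\rm b}$ steps, where $(1-\alpha)^k$ from $m_0=0$ is not yet below $1/2$, are absorbed by noting that $f$ increases by at most $\gamma\|\nabla f\|_\star+L\gamma^2/2$ per step. Solving the resulting quadratic recursion in $\sqrt{f-f^\star}$ produces $\bar\Phi$.

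\textbf{Step 2 (decompose the momentum).} Write $g_j=\nabla f(x_j)$ and $\lambda_j=\lambda(g_j)$ from Lemma~\ref{lem:ray_main}. Unrolling,
\[
m_{k+1}=\sum_{j\le k}w_j\bigl(\lambda_j g_j+r_j+\zeta_j+z_j\bigr),\qquad w_j=\alpha(1-\alpha)^{k-j},
\]
where $r_j$ is the ray residual and $\zeta_j$ is the mini-batch sampling error of the clipped gradients. Choose $\Lambda_k=\sum_j w_j\lambda_j$ as the comparison scale in $\mathcal R$. Then $\mathcal R(m_{k+1},g_k)$ is at most $\Lambda_k^{-1}$ times the sum of four terms.
\begin{itemize}
\item \emph{Drift}, $\|\sum_j w_j\lambda_j(g_j-g_k)\|_\star$. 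Use $\|g_j-g_k\|_\star\le L\gamma(k-j)$, which gives order $L\gamma/\alpha$ after normalizing by $\Lambda_k$.
\item \emph{Residual}, $\sum_j w_j\|r_j\|_\star\le 16\rho_1\sum_j w_j\lambda_j\min\{\sigma,\sigma^2/c\}$. This is exactly $\widetilde b_c\,\Lambda_k$ up to constants.
\item \emph{Sampling}, a martingale sum with conditional variance bounded by $320\lambda_j^2\sigma^2/B$ (sampling without replacement only helps). Its expectation is of order $\widetilde\sigma\sqrt\alpha/\sqrt B$ relative to $\Lambda_k$. The $1/(\alpha T)$ correction comes from the burn-in and the boundary terms when summing.
\item \emph{Privacy}, the Gaussian $\sum_j w_jz_j$, whose covariance is at most $c^2\sigma_{\rm DP}^2\alpha/(2-\alpha)$. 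Gaussian concentration for the $\rho_1$-Lipschitz function $\|\cdot\|_\star$ plus a union bound over $T$ steps gives $\nu_{\rm DP}(\eta)$ with probability $1-\eta$.
\end{itemize}

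\textbf{Step 3 (main obstacle: lower-bounding $\Lambda_k$).} The terms in Step 2 are divided by $\Lambda_k$, which can be as small as roughly $c/\|g_j\|_2$, while the privacy noise is not proportional to $\lambda_j$. So the noise does not rescale away. I would handle this by splitting on $\|g_k\|_2$.

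\emph{Large gradients.} If $\|g_k\|_2\ge c$, then $\lambda_j\approx c/\|g_j\|_2$, and $\Lambda_k\|g_k\|_2\gtrsim c$. This uses the first condition in~\eqref{eq:main_conditions}: the gradient changes by at most $\rho_2L\gamma/\alpha\le c/4$ over the effective window $1/\alpha$, so $\|g_j\|_2$ is comparable to $\|g_k\|_2$. In this regime the normalized privacy error is compared with $\|g_k\|_\star$ itself. The second condition in~\eqref{eq:main_conditions} makes the noise at most half of the signal, so it is absorbed into the $-\|g_k\|_\star$ term with a constant loss.

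\emph{Small gradients.} If $\|g_k\|_2<c$, then $\lambda_j\ge1/4$ up to drift, and the noise contributes at most $O(\nu_{\rm DP})$ additively.

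Combining the two regimes yields the constants $20\nu_{\rm DP}$ and $64\widetilde b_c$. Dividing the telescoped inequality by $\gamma T$ and taking $\mathbb E_S$ conditional on the privacy noise completes the proof.
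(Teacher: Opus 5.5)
Your outline matches the paper's argument: the projective LMO lemma, the ray decomposition with comparison scale $\Lambda_k=\sum_jw_j\lambda_j$, Gaussian concentration with a union bound, absorption of the privacy term, and $\bar\Phi$ from $\|\nabla f\|_\star^2\le2L(f-f^\star)$. Applying the lemma to $m_{k+1}$ is fine, since $\mathcal R(sm,g)=\mathcal R(m,g)$ for $s>0$. Your two regimes are the paper's single inequality $1/A_k\le5(1+\|g_k\|_2/c)$.

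\textbf{The gap is the sampling term.} It does not go through as written. The normalizer $\Lambda_k$ is random. It depends on $x_0,\dots,x_k$, hence on the batches $S_0,\dots,S_{k-1}$ whose errors $\zeta_j$ make up the sampling sum. It also has no deterministic lower bound, since it scales like $c/\|g_k\|_2$ and gradients are unbounded. So you cannot bound $\mathbb E\|\sum_jw_j\zeta_j\|_2^2$ by the martingale identity and then divide by $\Lambda_k^2$.

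Rewriting the sum as $\sum_j(w_j\lambda_j/\Lambda_k)(\zeta_j/\lambda_j)$ does not help either. The coefficients $w_j\lambda_j/\Lambda_k$ are not fixed before $S_j$ is drawn, so the cross terms no longer vanish. The triangle inequality avoids this but loses the factor $\sqrt\alpha$.

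\textbf{The missing device is to normalize recursively.} Use normalized weights $w_{k,t}=\alpha(1-\alpha)^{k-t}/(1-(1-\alpha)^{k+1})$, and set $A_k=\sum_tw_{k,t}\lambda_t$ (your $\Lambda_k$ up to the factor $1-(1-\alpha)^{k+1}$) and $N_k=\sum_tw_{k,t}\zeta_t$. With $\beta_k=w_{k,k}$ and $q_k=(1-\beta_k)A_{k-1}/A_k=1-\beta_k\lambda_k/A_k$, one has $N_k/A_k=q_kN_{k-1}/A_{k-1}+(\beta_k/A_k)\zeta_k$. Here $q_k$, $\beta_k/A_k$ and $N_{k-1}/A_{k-1}$ are all determined by the history before $S_k$ is drawn. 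The cross term therefore vanishes conditionally. The deterministic bound $q_k\le1-3\beta_k/32$ then gives $X_k\le(1-3\beta_k/32)X_{k-1}+O(\beta_k^2\sigma^2/B)$ for $X_k=\mathbb E[\|N_k\|_2^2/A_k^2]$. The $1/(\alpha T)$ term is the transient of this recursion, starting from the $O(\sigma^2/B)$ level left by the early steps, where $\beta_k$ is close to $1$.

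Two smaller points. First, with $a_t:=a_c(g_t)$, you need $A_k\ge a_k/5$ and $\lambda_k/A_k\ge3/32$; the recursion above and your drift and privacy bounds all rest on them. These must hold over the whole geometric tail of the momentum, not only over a window of length $1/\alpha$ in which $\|g_j\|_2$ moves by at most $c/4$. The paper obtains them from the per-step bound $a_{t-1}/a_t\in[(1+h)^{-1},1+h]$ with $h=\rho_2L\gamma/c\le\alpha/4$, so that $a_k(1+h)^{-(k-t)}\le a_t\le a_k(1+h)^{k-t}$, together with $(1-\alpha)(1+h)\le1-3\alpha/4$.

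Second, the explicit constant $45$ comes from the analogous recursion for the drift $D_k=\sum_tw_{k,t}\lambda_t(g_t-g_k)$. That recursion, $\|D_k\|_\star/A_k\le q_k(\|D_{k-1}\|_\star/A_{k-1}+L\gamma)$, gives $\|D_k\|_\star/A_k\le32L\gamma/(3\alpha)$. Summing $L\gamma(k-j)$ directly against the crude ratio bound $\lambda_j/A_k\le10(1+h)^{k-j}$ still gives $O(L\gamma/\alpha)$, but with a constant larger than $45$ permits.
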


The first condition in~\eqref{eq:main_conditions} ties the stepsize to the clipping threshold and the momentum: the iterates must move slowly enough that the clipping scales $\lambda(\nabla f(x_k))$ seen by the momentum buffer change by at most a constant factor over its effective window $1/\alpha$.
The second requires the filtered privacy noise to be smaller than the clipping radius, which is the norm of the clipped signal when gradients are large.
Because the noise is controlled through its worst-case dual norm $\mathbb E\|Z\|_\star$, this condition is restrictive: in the experiments of Section~\ref{sec:experiments}, even in the optimistic $\ell_2$ specialization it holds, together with the run-in requirement $T>0$, only for CIFAR-10 at the largest batch size with momentum $0.99$, for no language-modeling configuration, and not at the observed optima; in the spectral geometry of DP-Muon, where $H^2/d$ grows with the matrix rank, it holds for none of our configurations (Appendix~\ref{app:scaling_batch} gives the numbers).

\begin{corollary}
\label{cor:rates}
For fixed $c$ and $\sigma_{\rm DP}$, the choice $\alpha=\Theta(1/\sqrt{K\log K})$ and $\gamma=\Theta(K^{-3/4}(\log K)^{-1/4})$ satisfies~\eqref{eq:main_conditions} for $K$ large enough (of order $\rho_2^4\sigma_{\rm DP}^4H^4$ up to logarithms, Appendix~\ref{sec:simplified_analysis}) and gives
\[
    \frac1T\sum_{k=k_{\rm b}}^{K-1}\mathbb E_S\|\nabla f(x_k)\|_\star
    =\mathcal O\Bigl(\frac{(\log K)^{1/4}}{K^{1/4}}\Bigr)+\mathcal O(\widetilde b_c).
\]
With $\sigma_{\rm DP}$ calibrated by~\eqref{eq:dp_noise} to a fixed $(\varepsilon,\delta)$ budget for the fixed-size sampling of Algorithm~\ref{alg:dpmuon_}, a suitable choice of $K,\alpha,\gamma$ gives instead
\[
    \mathcal O\Bigl((L\Phi_0)^{1/4}\Bigl(\frac{c\,H\sqrt{\log(1/\delta)}}{n\varepsilon}\Bigr)^{1/2}\Bigr)+\mathcal O(\widetilde b_c),
\]
with $H=\mathbb E\|Z\|_\star+\rho_1\sqrt{2\log(K/\eta)}$, whenever $n\varepsilon$ is large enough; in particular, it suffices that $n\varepsilon\gtrsim H\sqrt{\log(1/\delta)}\max\{\rho_2^2\sqrt{L\Phi_0}/c,\ c/\sqrt{L\Phi_0}\}$ up to lower-order terms (Appendix~\ref{app:calibrated}).
\end{corollary}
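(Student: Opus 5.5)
The plan is to substitute the parameter choices into the bound of Theorem~\ref{thm:main} and balance terms. First, for fixed $c,\sigma_{\rm DP}$, I will check the two conditions in~\eqref{eq:main_conditions}.

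\textbf{Verifying the conditions.} With $\alpha=\Theta(1/\sqrt{K\log K})$ and $\gamma=\Theta(K^{-3/4}(\log K)^{-1/4})$, the ratio satisfies
\[
\gamma/\alpha=\Theta\bigl(K^{-1/4}(\log K)^{1/4}\bigr)\to0,
\]
so $\rho_2L\gamma/c\le\alpha/4$ holds for large $K$. Since $\nu_{\rm DP}(\eta)\le c\,\sigma_{\rm DP}\sqrt{3\alpha}\,H$ with $H$ logarithmic in $K$, the second condition requires $\sqrt\alpha\,H\lesssim 1/(\rho_2\sigma_{\rm DP})$. This means $K^{1/4}\gtrsim \rho_2\sigma_{\rm DP}H$ up to logarithms, i.e.\ $K\gtrsim\rho_2^4\sigma_{\rm DP}^4H^4$. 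Also $k_{\rm b}=\Theta(1/\alpha)=o(K)$, so $T\ge K/2$. Moreover $k_{\rm b}\gamma=\Theta(K^{-1/4}(\log K)^{1/4})\to0$, so $\bar\Phi=O(\Phi_0)$.

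\textbf{Bounding the terms at fixed noise.} The five terms of the theorem scale as follows:
\begin{itemize}
\item $\bar\Phi/(\gamma T)=O(K^{-1/4}(\log K)^{1/4})$;
\item $L\gamma/\alpha=O(K^{-1/4}(\log K)^{1/4})$;
\item the sampling term $\widetilde\sigma B^{-1/2}(\sqrt\alpha+1/(\alpha T))=O(K^{-1/4}(\log K)^{-1/4}+\sqrt{\log K/K})$;
\item $\nu_{\rm DP}=O(c\sigma_{\rm DP}\sqrt\alpha\,(\mathbb E\|Z\|_\star+\rho_1\sqrt{\log K}))=O(K^{-1/4}(\log K)^{-1/4}\sqrt{\log K})=O((\log K)^{1/4}/K^{1/4})$, treating $\eta$ as fixed.
\end{itemize}
The choice of $\alpha$ with the $\log K$ factor is exactly what makes the $\nu_{\rm DP}$ term match the others. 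The clipping floor $64\widetilde b_c$ is carried along unchanged, which gives the first display.

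\textbf{Calibrated noise.} Substitute $\sigma_{\rm DP}=c_2\sqrt{K\log(1/\delta)}/(n\varepsilon)$ and write $\beta:=cH\sqrt{\log(1/\delta)}/(n\varepsilon)$, so that $\nu_{\rm DP}\lesssim\beta\sqrt{\alpha K}$. Ignoring the sampling terms, which vanish as $K$ grows (take $B$ fixed and $K$ large, or absorb them as lower order), the bound becomes
\[
\frac{\Phi_0}{\gamma K}+\frac{L\gamma}{\alpha}+\beta\sqrt{\alpha K}.
\]
Optimizing $\gamma$ gives $2\sqrt{L\Phi_0/(\alpha K)}$. Setting $s:=\alpha K$, this is $\sqrt{L\Phi_0/s}+\beta\sqrt s$, minimized at $s=\sqrt{L\Phi_0}/\beta$ with value $(L\Phi_0)^{1/4}\beta^{1/2}$, the claimed rate. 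Note that $K$ itself is still free, subject to $\alpha=s/K<1$ and $k_{\rm b}\le K/2$, both of which hold when $K\gg s$.

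\textbf{Main obstacle.} The hard step is re-checking~\eqref{eq:main_conditions} under calibration, because $\sigma_{\rm DP}$ now grows like $\sqrt K$. The second condition reads $10\rho_2c_2\sqrt{3s}\,H\sqrt{\log(1/\delta)}/(n\varepsilon)\le1/2$, i.e.\ $\rho_2\beta\sqrt s/c\lesssim1$. With $s=\sqrt{L\Phi_0}/\beta$, this becomes $n\varepsilon\gtrsim H\sqrt{\log(1/\delta)}\rho_2^2\sqrt{L\Phi_0}/c$, one branch of the stated requirement.

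The first condition, after substituting $\gamma=\sqrt{\Phi_0\alpha/(LK)}$, reads $\rho_2\sqrt{L\Phi_0}/(c\,s^{1/2})\lesssim\alpha^{1/2}\cdot\alpha^{0}$. That is, $\rho_2L\gamma/(c\alpha)=\rho_2\sqrt{L\Phi_0}/(c\sqrt{s})\lesssim1$, i.e.\ $s\gtrsim\rho_2^2L\Phi_0/c^2$. Combined with $s=\sqrt{L\Phi_0}/\beta$, this gives a further lower bound on $n\varepsilon$. If that lower bound conflicts with the other branch, I will instead saturate $s$ at its constraint, which costs only constants in that regime.

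The remaining branch $c/\sqrt{L\Phi_0}$ should come from requiring the sampling and $1/(\alpha T)$ contributions, together with $\bar\Phi=O(\Phi_0)$ (i.e.\ $k_{\rm b}\gamma\lesssim\sqrt{\Phi_0/L}$), to be lower order. I expect this bookkeeping, and the dependence of $H$ on $\log K$ (handled by choosing $K$ polynomial in $n\varepsilon$), to be where the detailed constants in Appendix~\ref{app:calibrated} are needed.
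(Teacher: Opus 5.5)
Your route is the paper's. The fixed-noise part is the same substitution and balancing as in Appendix~\ref{sec:simplified_analysis}. For the calibrated part, the paper (Corollary~\ref{cor:calibrated}) uses exactly your effective horizon $s=\alpha K$, with $s_\star=\sqrt{L\Phi_0}/D$ (your $\beta$ times $c_2$), $\alpha=s_\star/K$ and $\gamma=\sqrt{\Phi_0\alpha/(LK)}$, and admissibility produces the $\rho_2^2\sqrt{L\Phi_0}/c$ branch.

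The bookkeeping you defer, however, contains one false step, and correcting it is what produces the other branch. Since $k_{\rm b}\approx\log 2/\alpha=K\log 2/s$, the ratio $k_{\rm b}/K$ does not depend on $K$. So $k_{\rm b}\le K/2$ does not follow from $K\gg s$; it requires $s$ to be bounded below by a constant. The paper imposes $s_\star\ge4$, which gives two things:
\begin{itemize}
\item $k_{\rm b}\le 2/\alpha\le K/2$, so $T\ge K/2$;
\item $k_{\rm b}\gamma\sqrt{L/2}\le\sqrt{2\Phi_0/s_\star}\le\sqrt{\Phi_0}$, so $\bar\Phi\le4\Phi_0$.
\end{itemize}
The condition $s_\star\ge4$ is $D\le\sqrt{L\Phi_0}/4$, which is exactly the $c/\sqrt{L\Phi_0}$ branch of the requirement on $n\varepsilon$. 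Your guess via $\bar\Phi=O(\Phi_0)$ points to the right condition, but the sampling terms play no role in it.

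Two further corrections:
\begin{itemize}
\item The $1/(\alpha T)$ part of the sampling term does not vanish as $K$ grows. From $\alpha T\ge s_\star/2$ it contributes the $K$-independent term $2C\widetilde\sigma D/\sqrt{BL\Phi_0}$. This is $O(1/(n\varepsilon))$ and is simply kept as a lower-order term. Only the $\sqrt\alpha$ part is removed, by the lower bound $K\ge s_\star\widetilde\sigma^2/(B\sqrt{L\Phi_0}D)$.
\item Stability needs no fallback. At $s=s_\star$ it reads $D\le c^2/(16\rho_2^2\sqrt{L\Phi_0})$, which is implied by the admissibility requirement $D\le c^2/(1200\rho_2^2\sqrt{L\Phi_0})$. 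The two conditions never conflict, and saturating $s$ is unnecessary.
\end{itemize}
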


The fixed-noise rate matches the $K^{-1/4}$ rate of normalized SGD with momentum \citep{cutkosky2020momentum} and of clipped SGD \citep{koloskova2023revisiting} (the first-moment counterpart of the $K^{-1/2}$ squared-norm rate: a squared-norm rate implies ours by Jensen's inequality, not conversely), which is optimal under bounded variance \citep{arjevani2023lower}, up to the $(\log K)^{1/4}$ factor from the high-probability bound over the privacy noise.
For the $\ell_2$ geometry, $H\approx\sqrt d$ and the calibrated bound is the usual nonconvex DP-SGD dependence $(d\log(1/\delta))^{1/4}/\sqrt{n\varepsilon}$. The privacy term of the bound is below the initial gradient bound $\sqrt{2L\Phi_0}$ when $n\varepsilon\gtrsim cH\sqrt{\log(1/\delta)}/\sqrt{L\Phi_0}$, but the whole bound is nontrivial only if, in addition, the clipping floor $\widetilde b_c$, which does not depend on $n\varepsilon$, is small compared with $\sqrt{L\Phi_0}$. The condition on $n\varepsilon$ in Corollary~\ref{cor:rates} is stronger than this threshold by the factor $\rho_2^2L\Phi_0/c^2$ whenever $c\lesssim\sqrt{L\Phi_0}$, that is, the requirement on $n\varepsilon$ grows with the ratio of the gradient scale to the clipping threshold, which is the price of not assuming bounded gradients (Appendix~\ref{app:calibrated}). For the spectral geometry, $\mathbb E\|Z\|_\star$ is the nuclear norm of a Gaussian matrix, of order $\min\{m,n'\}\sqrt{\max\{m,n'\}}$ (Table~\ref{tab:lmo_norms}), and the bound controls the nuclear norm of the gradient. Translated to Frobenius stationarity, the ratio of the leading privacy terms of the spectral and Euclidean bounds is $(L_{\rm op}/L_F)^{1/4}\,r^{1/4}/s_0$, where $r=\min\{m,n'\}$, $L_{\rm op}$ and $L_F$ are the smoothness constants of Assumption~\ref{assum:smoothness} in the two geometries, and $s_0\in[1,\sqrt r]$ is such that $\|\nabla f\|_{\rm nuc}\ge s_0\|\nabla f\|_F$ along the trajectory. For gradients with a flat spectrum ($s_0=\sqrt r$) and comparable smoothness constants this is a gain of $r^{-1/4}$, for rank-one gradients a loss of $r^{1/4}$, and the gain disappears when $L_{\rm op}/L_F$ grows like $r$, as it does for $f=\tfrac12\|X\|_F^2$; the advantage of the spectral geometry therefore rests on $L_{\rm op}\ll rL_F$, which is the premise of Muon itself. This compares only the leading privacy terms: after the same conversion, the clipping floor $\widetilde b_c=\rho_1\min\{\sigma,\sigma^2/c\}$, with $\rho_1=\sqrt r$ in the spectral geometry, is $\sqrt r/s_0$ times its Euclidean counterpart, so the two floors coincide for flat spectra and the spectral one is larger by $\sqrt r$ for rank-one gradients. The worst-case dual-norm treatment of the noise is the price of the generality in the norm.

\paragraph{Proof sketch.}
The LMO is invariant to the bias-correction factor of the momentum, so we work with $\overline m_k=\sum_{t\le k}w_{k,t}\hat g_t$, $\sum_tw_{k,t}=1$.
By Lemma~\ref{lem:ray_main}, $\overline m_k=A_k\nabla f(x_k)+D_k+\sum_tw_{k,t}r_t+N_k+P_k$ with $A_k=\sum_tw_{k,t}\lambda_t$, a drift term $D_k$, sampling noise $N_k$, and privacy noise $P_k$.
Lemma~\ref{lem:projective_main} with $\lambda=A_k$ reduces the descent step to bounding each term divided by $A_k$.
The residuals contribute $16\widetilde b_c$.
The first condition in~\eqref{eq:main_conditions} keeps $A_k$ within a constant factor of $a_c(\nabla f(x_k))$, which controls the drift ($\lesssim L\gamma/\alpha$) and, with the variance bound of Lemma~\ref{lem:ray_main}, the sampling noise.
Gaussian concentration bounds $\|P_k\|_\star\le\nu_{\rm DP}(\eta)$ for all $k$, and dividing by $A_k$ produces the $\|\nabla f(x_k)\|_\star$-proportional term absorbed by the second condition.
Full proofs are in Appendix~\ref{app:convergence}.

\subsection{Comparison with existing guarantees}
\label{sec:comparison}

Table~\ref{tab:comparison} compares guarantees for mechanisms that clip \emph{individual} examples before aggregation, as DP-SGD and Algorithm~\ref{alg:dpmuon_} do.
Two points deserve emphasis.
First, our clipping floor $\rho_1\min\{\sigma,\sigma^2/c\}$ is the dual-norm version of the floor of \citet{koloskova2023revisiting}, which is tight for clipped SGD under bounded variance; passing through a non-Euclidean LMO thus costs only the norm-conversion constant $\rho_1$.
Second, the floor depends only on the \emph{centered} variance and vanishes as $\sigma\to0$, even when $\|\nabla f\|_2\gg c$.
In contrast, the general bound of \citet{kim2026dpmuon} contains a clipping term of the form $\mathbb E(\|\nabla f_i\|-c)_+$, which persists for deterministic gradients larger than $c$, and their vanishing-rate corollaries require moment bounds on raw gradients, which by $\frac1n\sum_i\|\nabla f_i\|_2^2=\|\nabla f\|_2^2+\frac1n\sum_i\|\nabla f_i-\nabla f\|_2^2$ also bound the mean gradient, together with a threshold growing with $K$. Outside privacy, \citet{cutkosky2021high} analyze the same clip--momentum--normalize pipeline in general norms under heavy-tailed noise, with an additive clipping bias controlled by a threshold that grows with the horizon.
Rates are stated under different conventions (fixed noise versus calibrated noise, average versus minimum, norm versus squared norm), which we align in Appendix~\ref{app:comparison}.

\begin{table*}[t]
\centering
\footnotesize
\setlength{\tabcolsep}{4pt}
\renewcommand{\arraystretch}{1.15}
\newcolumntype{L}[1]{>{\leavevmode\raggedright\arraybackslash}p{#1}}
\newcolumntype{Y}{>{\leavevmode\raggedright\arraybackslash}X}
\begin{tabularx}{\textwidth}{@{} L{2.6cm} L{3.2cm} L{2.3cm} Y L{2.0cm} @{}}
\toprule
\textbf{Work} & \textbf{Gradient assumption} & \textbf{Noise in rate} & \textbf{Guarantee} & \textbf{Momentum--LMO} \\
\midrule
\citet{koloskova2023revisiting}
& centered variance
& fixed
& avg.\ $\mathbb E\|\nabla f\|\lesssim K^{-1/4}+\min\{\sigma,\sigma^2/c\}$
& -- \\
\citet{yang2022normalized}
& a.s.\ relative envelope
& fixed / calibrated
& $\mathbb E\min_k\|\nabla f\|\lesssim K^{-1/4}$; calibrated: $(dc^3\log\frac1\delta)^{1/4}(n\varepsilon)^{-1/2}$
& -- \\
\citet{chen2020understanding}
& none (distance to symmetric)
& calibrated
& weighted clipped stationarity $\lesssim c\sqrt d/(n\varepsilon)$ $+$ Wasserstein term
& -- \\
\citet{shulgin2024convergence}
& centered $q$-th moment
& fixed
& weighted avg.\ $\mathbb E\|\nabla f\|^2\lesssim K^{-1/2}+\mathcal N_{\rm QC}$
& -- \\
\citet{vatankhah2025differentially}
& centered $p$-th moment
& fixed / calibrated
& $\min_k\|\nabla f\|^2\lesssim\widetilde{\mathcal O}(K^{-1/2})+\mathcal N_{\rm clip}$ (w.h.p.)
& -- \\
\citet{kim2026dpmuon}
& raw-gradient moment
& fixed multiplier, $c$ grows with $K$
& avg.\ $\mathbb E\|\nabla f\|_{\rm nuc}\lesssim K^{-1/4}\sqrt{\log K}$ or $K^{-1/6}$
& matrix, finite NS \\
\textbf{This work}
& \textbf{centered variance}
& fixed / calibrated
& $\mathbb E_S\|\nabla f\|_\star\lesssim(\log K)^{1/4}K^{-1/4}+\widetilde b_c$ (tail avg., w.h.p.)
& any norm, exact LMO \\
\bottomrule
\end{tabularx}
\caption{Optimization guarantees for per-example clipping. ``Fixed'': the rate in $K$ holds at a constant noise scale, so the privacy budget grows with $K$; ``calibrated'': noise calibrated to a fixed $(\varepsilon,\delta)$ is substituted into the bound. Constants and logarithms are suppressed unless shown. Theorem numbers, exact assumptions, and conventions are listed in Appendix~\ref{app:comparison}.}
\label{tab:comparison}
\end{table*}

\section{Experiments}
\label{sec:experiments}

\paragraph{Setup.}
We compare DP-SGD, DP-Adam, and DP-Muon (along with DP-Scion for language modeling) against their non-private counterparts on CIFAR-10 image classification~\citep{krizhevsky2009learning} and on language modeling using a 30M-parameter NanoGPT model. To approximate the orthogonalization update, we utilize the Newton-Schulz method~\cite{jordan2024muon} with 5 iterations.
DP training uses Opacus \citep{yousefpour2021opacus} with Poisson subsampling (our analysis uses fixed-size batches), clipping threshold $c=1$ following \citet{de2022unlocking}, $\varepsilon=8$, and $\delta=1/(2n)$.
For each optimizer and batch size $B$ we grid-search the learning rate and the momentum $1-\alpha$, with zero weight decay; grids are listed in Appendix~\ref{app:experiments}.
To make models compatible with per-example clipping, BatchNorm layers are replaced by GroupNorm and biases are disabled in all linear layers. We report the best value over the grid for each $B$.
The CIFAR-10 curves average three seeds; all language-modeling results and all heatmaps are single runs. For DP-SGD on CIFAR-10 ($B\ge256$) and for Scion at $B=16{,}384$, the best learning rate lies at the upper end of the grid (Appendix~\ref{app:experiments}), so these baselines may be slightly under-tuned.
Note that the hyperparameter search itself is not included in the privacy accounting, as is common in the DP literature \citep{papernot2022hyperparameter}. Note that we utilize standard momentum for SGD and Muon optimizers (equivalent to \texttt{nesterov=False} in PyTorch).

\begin{figure*}[]
    \centering
    \includegraphics[width=0.48\textwidth]{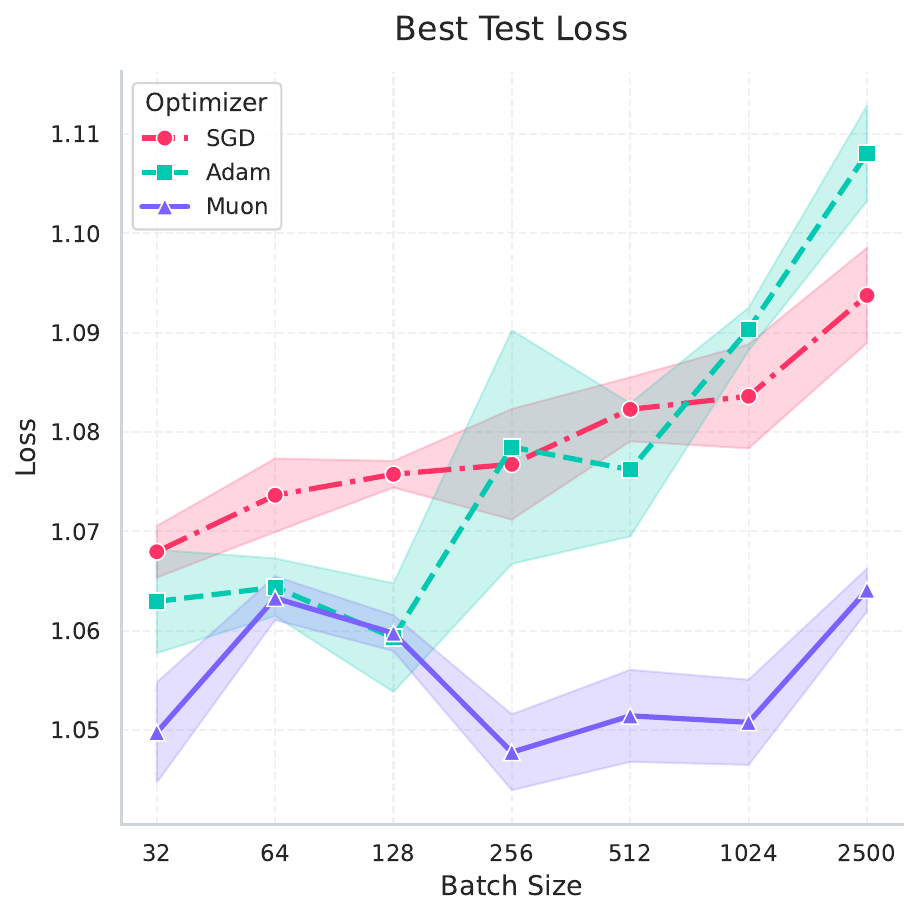}
    \hfill
    \includegraphics[width=0.48\textwidth]{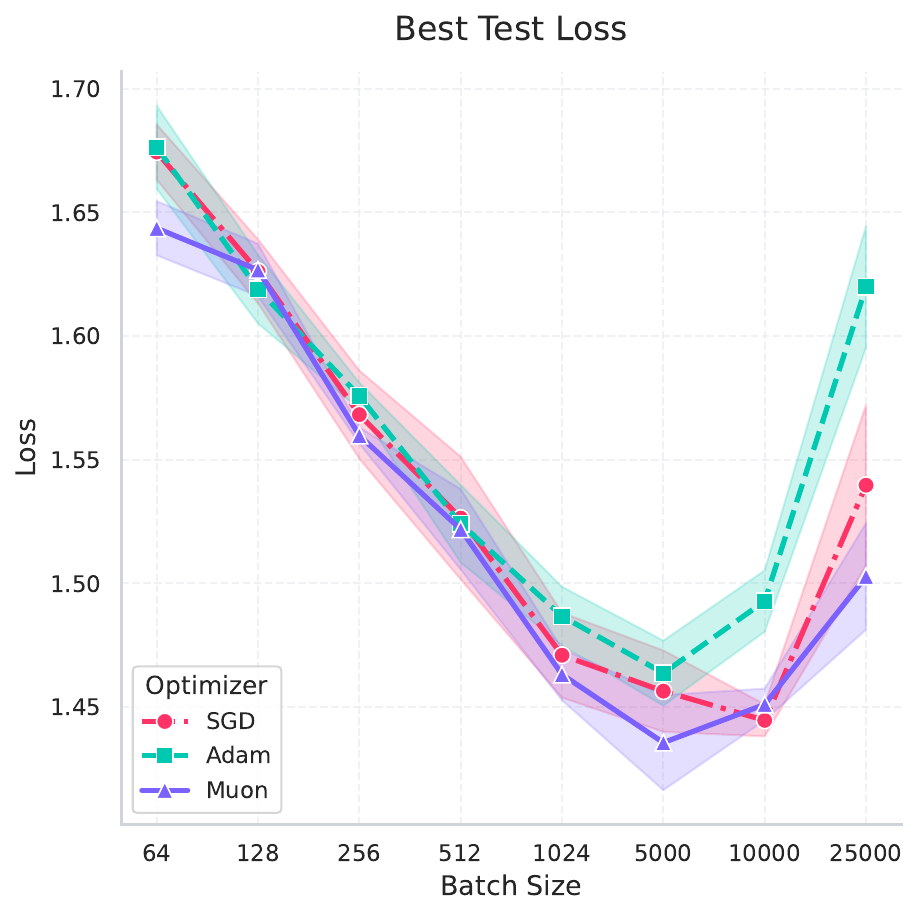}
    \caption{Best test loss as a function of batch size for SGD, Adam, and Muon. All plotted values represent the optimal configuration found across the learning rate and momentum grid search. \textbf{(Left)} Standard non-private training. \textbf{(Right)} Differentially private (DP) training for 80 epochs. We perform three independent runs to compute the reported deviation intervals.}
    \label{fig:cnn_baseline_loss_vs_bs}
\end{figure*}

\paragraph{Image classification.}
We train the CIFAR Airbench architecture \citep{jordan_cifar10_airbench} (2M parameters). Without privacy (Figure~\ref{fig:cnn_baseline_loss_vs_bs}, left), Muon attains the lowest test loss at every batch size, with its minimum (below $1.05$) at $B=256$, whereas Adam and SGD are best at $B\le128$. With privacy (right), the picture changes in two ways. First, all optimizers prefer batches that are one to two orders of magnitude larger, with minima at $B=5000$--$10000$, in line with prior observations for DP-SGD \citep{de2022unlocking, sander2023tanburnscalinglaws}. Second, the gaps between optimizers largely disappear: for $128\le B\le1024$ the three curves overlap within one standard deviation. DP-Muon attains the lowest mean loss at every batch size except $B=10000$, where DP-SGD is marginally better, but the differences exceed one standard deviation only at the extremes $B=64$ and $B=25000$, where DP-Muon degrades the least. Across privacy levels at $B=1024$ (Appendix~\ref{app:epsilon_ablation}), DP-Muon is on par with or slightly better than DP-SGD and DP-Adam for $\varepsilon\in \{1,5,8\}$, with the largest margin over DP-Adam at $\varepsilon=1$; at $\varepsilon=3$ the three methods are within noise.

\begin{figure*}[t]
    \centering
    \includegraphics[width=0.48\textwidth]{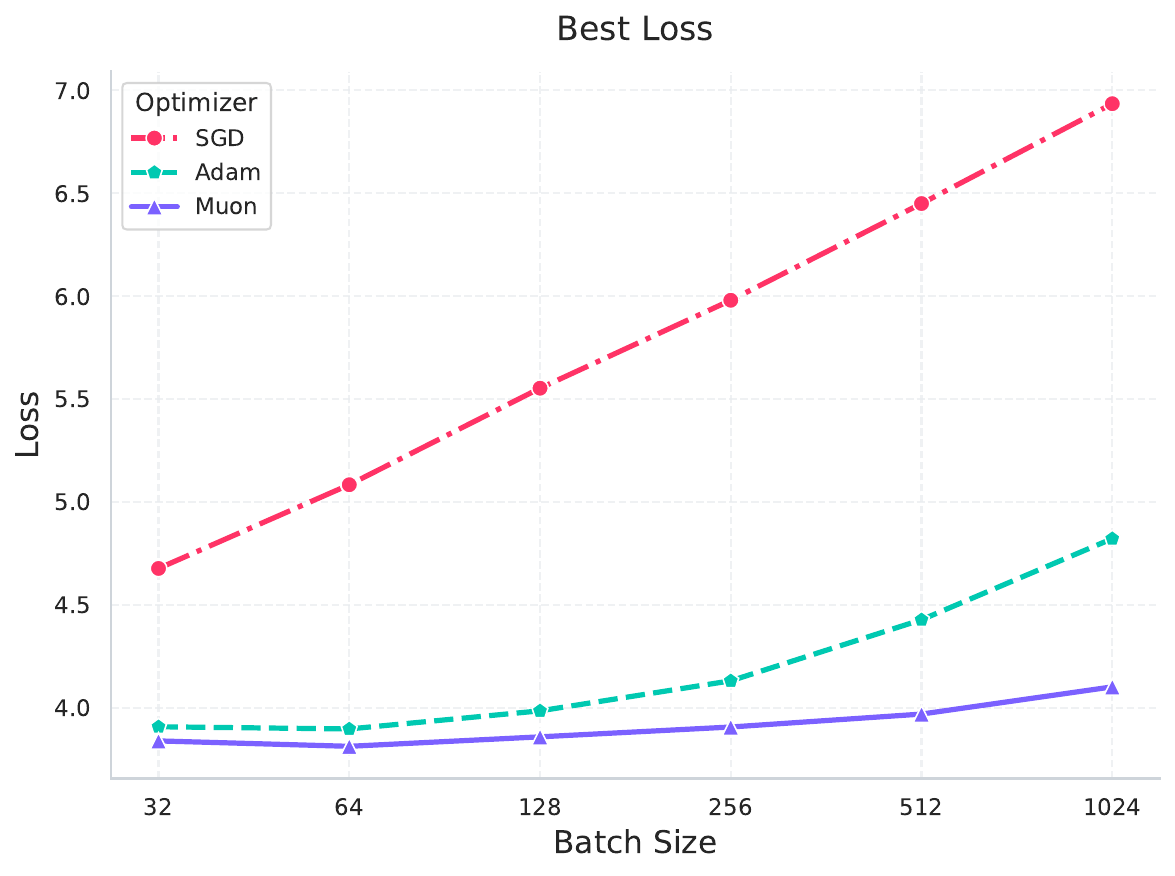}
    \hfill
    \includegraphics[width=0.48\textwidth]{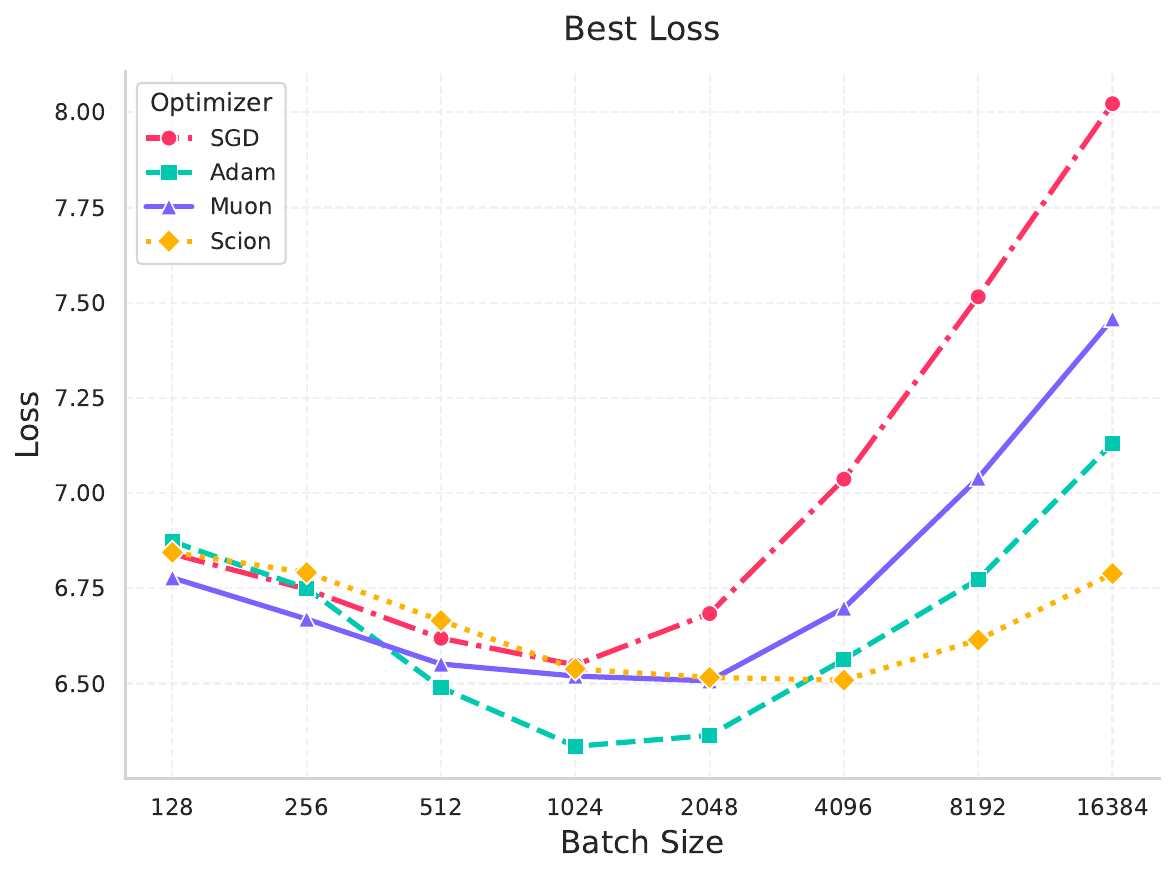}
    \caption{Language modeling (30M GPT): best loss versus batch size at the best learning rate and momentum for each batch size. \textbf{Left:} non-private. \textbf{Right:} DP ($\varepsilon=8$), including Scion. Note the different batch-size and loss ranges.}
    \label{fig:babygpt_baseline_loss_vs_bs}
\end{figure*}

\paragraph{Language modeling.}
The model consists of 6 layers, 6 attention heads, an embedding dimension of 384, and a vocabulary of 50{,}304 tokens. It is trained on sequences of length 1{,}024 for 600 million tokens from the 10-billion-token subset of FineWeb~\citep{penedo2024finewebdatasetsdecantingweb}, equating to roughly 20 tokens per parameter.

The training set is a fixed random subset of $n\approx5.86\cdot10^{5}$ sequences from this pool, processed for one epoch with Poisson sampling rate $q=B/n$ and $\delta=1/(2n)$; this translates to approximately 18{,}300 steps for a batch size of $B=32$, down to 36 steps for a maximum batch size of $B=16{,}384$. We define a single sequence of 1{,}024 tokens as the privacy unit, so a document that spans several sequences is not protected as a whole.

In the non-private baseline, the performance ranking (Muon $<$ Adam $<$ SGD) holds across all batch sizes, and Muon's advantage grows with $B$ (achieving a final loss of 4.10 versus 4.82 for Adam at $B=1024$). Under differential privacy (right), the optimal batch size shifts from 32--64 up to 1024--4096, and the optimizer ranking becomes dependent on $B$. DP-Muon has the lowest loss for $B \le 256$, by at most $0.09$ nats in single runs; DP-Adam is optimal for $512 \le B \le 2048$ and attains the lowest overall loss (6.33 at $B=1024$, versus 6.51 for DP-Muon at $B=2048$); and DP-Scion, which utilizes sign updates for the embedding and output layers, performs best for $B \ge 4096$ and exhibits the least degradation at very large batch sizes. At their respective optimal batch sizes, the four optimizers are within 0.25 nats of each other, compared to a gap of 0.9 nats between Muon and SGD without privacy.

\paragraph{What changes under privacy.}
We highlight three findings.
\emph{(i) Batch size is the dominant hyperparameter} on both tasks: at a fixed budget of epochs or tokens, the optimal batch size grows by a factor of 15--300, and the ranking of optimizers depends on it, so conclusions drawn at a single batch size, or transferred from non-private training, can be misleading. In both tasks the root-mean-square size of the privacy noise exceeds the root-mean-square bound $c/\sqrt B$ on the sampling noise of clipped gradients by one to two orders of magnitude at every batch size and decays with $B$ faster than $B^{-1/2}$, and the observed optima lie where its norm becomes comparable to the clipping radius (Figure~\ref{fig:dp_noise_scaling}, Appendix~\ref{app:scaling_accountant}). There, the noise-to-signal ratio is about $4$--$27$ times larger than the admissibility condition of Theorem~\ref{thm:main} allows for $1-\alpha=0.9$, so the theorem does not cover these configurations and does not explain the location of the optima (Appendix~\ref{app:scaling_batch}).
The dependence of the ranking on the batch size is consistent with reports of large gains for private Muon at large batches in vision \citep{shanmugam2026spectral}.
\emph{(ii) The learning rate of DP-Muon grows with the batch size}: for language modeling, from $0.05$ at $B\le512$ to $0.2$ at $B\ge8192$, whereas the optimal learning rate of DP-Adam stays within $[10^{-3},3\cdot10^{-3}]$ and, without privacy, the optimal learning rate of every method on language modeling is the same at all batch sizes (Appendix~\ref{app:llm_details}).
\emph{(iii) DP-Muon is insensitive to the momentum}: for DP language modeling, at the optimal learning rate, its loss varies by about $0.1$ nats or less over momentum values $1-\alpha\in[0.8,0.95]$ at every batch size, so the selected momentum appears erratic across batch sizes (Appendix~\ref{app:llm_details}). This is compatible with Theorem~\ref{thm:main}, in which a longer momentum window $1/\alpha$ reduces the filtered sampling and privacy noise ($\propto\sqrt\alpha$) but increases the tracking error ($\propto L\gamma/\alpha$).

\section{Related Work}
\label{sec:related-work}

\paragraph{LMO-based methods.}
Beyond Scion and the trust-region analyses of \citet{kovalev2025understanding, kovalev2025non}, LMO-based methods include the layer-wise Gluon \citep{riabinin2025gluon}, and the effect of replacing the polar factor by a few Newton--Schulz steps is analyzed by \citet{shulgin2025beyond} and \citet{kim2026muonns}.
None of these works consider per-example clipping or privacy noise; distributed and compressed variants are discussed in Appendix~\ref{app:related}.

\paragraph{Clipping in private optimization.}
Analyses of per-example clipping are compared in Section~\ref{sec:comparison}.
A complementary line replaces clipping by per-sample normalization, which removes the threshold \citep{bu2023automatic, yang2022normalized}; we instead keep the standard clipping of DP-SGD and normalize only the aggregated momentum through the LMO.
Nonlinear post-processing of privatized gradients creates its own bias, which can be corrected for Adam \citep{tang2024dpadambc} and for Newton--Schulz \citep{kim2026dpmuon}.

\paragraph{Training dynamics and hyperparameters under privacy.}
Large batches are crucial for accurate private training \citep{de2022unlocking}; \citet{sander2023tanburnscalinglaws} show that the performance of DP-SGD is governed mainly by the total amount of injected noise, which allows hyperparameters to be tuned at small batch sizes, \citet{panda2024new} propose a linear scaling rule to transfer hyperparameters across privacy budgets, \citet{mckenna2025scaling} derive scaling laws for private language models, and \citet{papernot2022hyperparameter} account for the privacy cost of tuning.
Under privacy, the noise dominates Adam's second-moment estimate, so that DP-Adam behaves like DP-SGD unless corrected \citep{tang2024dpadambc}, and \citet{compagnoni2026adaptive} compare DP-SignSGD with DP-SGD through stochastic differential equations.
For non-private LMO methods, \citet{shulgin2026deriving} derive hyperparameter scaling laws by minimizing convergence bounds of the same form as Theorem~\ref{thm:main}.

\section{Conclusion and Limitations}

Muon is compatible with differential privacy because, for a scale-invariant update, the leading effect of per-example clipping is a harmless rescaling; the remaining directional bias is controlled by the centered variance alone, which yields convergence guarantees for private momentum-LMO methods in any norm whose clipping floor matches that of clipped SGD up to the norm-conversion constant $\rho_1$. Empirically, privacy makes the batch size the dominant hyperparameter and reorders optimizers, so DP-Muon is most beneficial in some batch-size regimes rather than uniformly.
Our analysis has limitations: it assumes an exact LMO rather than a few Newton--Schulz steps and fixed-size batches, whereas our experiments use Poisson subsampling; it treats the Gaussian noise through worst-case dual-norm bounds, which leads to the admissibility condition in~\eqref{eq:main_conditions} and excludes the hyperparameter regime of our experiments (entirely in the spectral geometry of DP-Muon, and except for CIFAR-10 at the largest batch size in the $\ell_2$ specialization); and its fixed-threshold bound retains a clipping floor.
Our language-modeling experiments are small-scale pretraining from scratch; private fine-tuning of pretrained models \citep{li2022largelanguagemodelsstrong, kim2026dpmuon} is a natural next step.

\section*{Acknowledgements}

The research reported in this publication was supported by funding from King Abdullah University of Science and Technology (KAUST): i) KAUST Baseline Research Scheme, ii) CRG Grant ORFSCRG12-2024-6460, and iii) Center of Excellence for Generative AI, under award number 5940.

\bibliography{sample}

\newpage

\appendix

\tableofcontents

\section{Prerequisites}
\label{app:prereq}

Throughout the appendix, Assumptions~\ref{assum:smoothness} and~\ref{assum:stochastic_oracle} from Section~\ref{sec:setup} hold.

\begin{definition}[Dual norm]
\label{def:dual_norm}
Let $(\mathcal{X},\|\cdot\|)$ be a finite-dimensional normed vector space with dual space $\mathcal X^\star$. The dual norm is
\begin{equation*}
\|v\|_\star := \sup_{\|d\| \le 1} \langle v, d \rangle, \quad v \in \mathcal{X}^\star.
\end{equation*}
\end{definition}

\begin{definition}[Linear minimization oracle (LMO)]
\label{def:lmo}
For $\mathcal{C}=\{x \in \mathcal{X} : \|x\|\leq1\}$, $\operatorname{LMO}(v) := \arg\min_{d \in \mathcal{C}} \langle v, d \rangle$ for $v \in \mathcal{X}^\star$; when the minimizer is not unique, any minimizer may be used.
\end{definition}

\begin{corollary}
\label{cor:lmo_basic_equality}
For any $d\in\operatorname{LMO}(v)$, Definitions~\ref{def:dual_norm} and~\ref{def:lmo} give $\langle v, d \rangle = -\|v\|_\star$; moreover, $\operatorname{LMO}(\lambda v)=\operatorname{LMO}(v)$ for every $\lambda>0$.
\end{corollary}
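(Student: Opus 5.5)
The plan is to read both claims of Corollary~\ref{cor:lmo_basic_equality} off the definitions, using only that $\mathcal C$ is the closed unit ball of a finite-dimensional normed space. First, I will argue that the minimum in Definition~\ref{def:lmo} is attained. The ball $\mathcal C$ is compact, since $\mathcal X$ is finite-dimensional, and the map $d\mapsto\langle v,d\rangle$ is continuous, so $\operatorname{LMO}(v)$ is nonempty.

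For the equality, the ball is symmetric under $d\mapsto -d$, so
\[
\min_{d\in\mathcal C}\langle v,d\rangle=-\max_{d\in\mathcal C}\langle v,d\rangle=-\sup_{\|d\|\le1}\langle v,d\rangle=-\|v\|_\star,
\]
where the last step is Definition~\ref{def:dual_norm}. Hence any minimizer $d\in\operatorname{LMO}(v)$ satisfies $\langle v,d\rangle=-\|v\|_\star$.

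For scale invariance, fix $\lambda>0$. For every $d\in\mathcal C$ we have $\langle\lambda v,d\rangle=\lambda\langle v,d\rangle$. Multiplying the objective by a positive constant preserves the order of objective values on $\mathcal C$, so the set of minimizers is unchanged, that is, $\arg\min_{d\in\mathcal C}\langle\lambda v,d\rangle=\arg\min_{d\in\mathcal C}\langle v,d\rangle$.

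One point needs care: the LMO may be set-valued. I will therefore state the identity $\operatorname{LMO}(\lambda v)=\operatorname{LMO}(v)$ as an equality of sets of minimizers, which is what the argument above gives. If a single-valued selection is used, it should be chosen consistently along rays. The degenerate case $v=0$ is trivial, since then every $d\in\mathcal C$ is a minimizer and both sides equal $\mathcal C$. There is no genuine obstacle here; the only subtleties are attainment of the minimum and the set-valued reading of the LMO.
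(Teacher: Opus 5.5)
Your proposal is correct and matches the paper, which gives no separate argument and reads both claims directly off Definitions~\ref{def:dual_norm} and~\ref{def:lmo}: symmetry of the ball gives $\min_{d\in\mathcal C}\langle v,d\rangle=-\|v\|_\star$, and multiplying the objective by $\lambda>0$ leaves the set of minimizers unchanged. Your additions, namely attainment of the minimum by compactness of $\mathcal C$ and reading $\operatorname{LMO}(\lambda v)=\operatorname{LMO}(v)$ as an equality of minimizer sets, are details the paper leaves implicit, and the set-valued reading is exactly what its later use of the corollary requires.
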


\begin{definition}[Norm-equivalence constants]
\label{assum:norm_equivalence}
Since $\mathcal X$ is finite-dimensional, there exist $\rho_1,\rho_2>0$ such that
\begin{equation}
    \|v\|_\star\le\rho_1\|v\|_2,
    \qquad
    \|v\|_2\le\rho_2\|v\|_\star
    \qquad\forall v\in\mathcal X^\star.
\end{equation}
\end{definition}

Table~\ref{tab:lmo_norms} lists the LMO, the constants $\rho_1,\rho_2$, and the order of $\mathbb E\|Z\|_\star$ for $Z\sim\mathcal N(0,I)$ (which enters the privacy-noise term) for common geometries.

\begin{table}[h]
\centering
\small
\caption{Common LMO geometries on $\mathbb R^d$ (or on $\mathbb R^{m\times n'}$ for the spectral norm).}
\label{tab:lmo_norms}
\begin{tabular}{lllccc}
\toprule
\textbf{Norm} $\|\cdot\|$ & \textbf{Method} & $\operatorname{LMO}(g)$ & $\rho_1$ & $\rho_2$ & $\mathbb E\|Z\|_\star$ \\
\midrule
Spectral & Muon & $-UV^\top$ for $g=U\Sigma V^\top$ & $\sqrt{\min\{m,n'\}}$ & $1$ & $\asymp\min\{m,n'\}\sqrt{\max\{m,n'\}}$ \\
$\ell_\infty$ & sign descent & $-\operatorname{sign}(g)$ & $\sqrt d$ & $1$ & $d\sqrt{2/\pi}$ \\
$\ell_2$ & normalized GD & $-g/\|g\|_2$ & $1$ & $1$ & $\approx\sqrt d$ \\
$\ell_1$ & coordinate descent & $-\operatorname{sign}(g_j)e_j$, $j\in\arg\max_i|g_i|$ & $1$ & $\sqrt d$ & $\asymp\sqrt{\log d}$ \\
\bottomrule
\end{tabular}
\end{table}

By Assumption~\ref{assum:smoothness}, for all $x, y \in \mathcal{X}$,
\begin{equation}
\label{eq:smoothtness_bound}
f(y)\le f(x) + \langle \nabla f(x),y-x\rangle + \frac{L}{2}\|y-x\|^2.
\end{equation}

\begin{corollary}
\label{cor:stoch_noise_prop}
For every $x$, write $\nabla f_{i}(x) = \nabla f(x) + \xi_i$ with $i \sim \mathrm{Unif}[n]$. By Assumption~\ref{assum:stochastic_oracle},
\begin{equation*}
    \mathbb{E}_i[\xi_i] = \frac{1}{n}\sum_{i=1}^{n}\xi_i = 0, \qquad \mathbb{E}_i[\|\xi_i\|^2_2] = \frac{1}{n}\sum_{i=1}^{n}\|\xi_i\|^2_2 \leq \sigma^2.
\end{equation*}
\end{corollary}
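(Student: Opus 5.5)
The statement is Corollary~\ref{cor:stoch_noise_prop}, which is essentially a restatement of definitions, so the plan is a two-line verification rather than a genuine argument.

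\textbf{Zero-mean identity.} Fix $x$ and set $\xi_i:=\nabla f_i(x)-\nabla f(x)$. Since $f=\frac1n\sum_{i=1}^n f_i$ and differentiation is linear, we have $\nabla f(x)=\frac1n\sum_i\nabla f_i(x)$. Averaging the definition of $\xi_i$ over $i$ then gives
\[
\frac1n\sum_{i=1}^n\xi_i=\frac1n\sum_{i=1}^n\nabla f_i(x)-\nabla f(x)=0 .
\]
This is exactly $\mathbb E_i[\xi_i]$ for $i\sim\mathrm{Unif}[n]$, because the uniform expectation of any function of $i$ is its arithmetic mean over $[n]$.

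\textbf{Second-moment bound.} By the same observation,
\[
\mathbb E_i\|\xi_i\|_2^2=\frac1n\sum_{i=1}^n\|\nabla f_i(x)-\nabla f(x)\|_2^2 .
\]
The right-hand side is at most $\sigma^2$ directly by Assumption~\ref{assum:stochastic_oracle}, applied at the point $x$.

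There is no real obstacle. The only point to keep straight is that the expectation is over the uniform index $i$ at a fixed $x$, not over any algorithmic randomness, so no conditioning on the iterates is needed. The bound is stated in the Euclidean norm, matching the assumption, so no norm-equivalence constants enter.
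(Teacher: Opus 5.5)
Your proposal is correct and matches the paper, which gives no separate proof and treats the corollary as immediate: $f=\frac1n\sum_i f_i$ gives $\nabla f=\frac1n\sum_i\nabla f_i$ and hence the zero mean, and Assumption~\ref{assum:stochastic_oracle} applied at the fixed point $x$ gives the second-moment bound. Your remark that the expectation is over the uniform index at a fixed $x$, and involves no algorithmic randomness, is the right reading.
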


Define the clipping factor
\begin{equation*}
    a_c(v):=\frac{c}{\max\{c,\|v\|_2\}},
    \qquad\text{so that}\qquad
    \clip_c(v)=a_c(v)\,v.
\end{equation*}

\begin{corollary}
Since $v\mapsto\max\{c,\|v\|_2\}$ is $1$-Lipschitz,
\begin{equation}
\label{eq:multiplicative_lipschitz}
    |a_c(y)-a_c(x)|
    \le
    \frac{a_c(y)a_c(x)}{c}\|y-x\|_2.
\end{equation}
\end{corollary}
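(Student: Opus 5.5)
The inequality \eqref{eq:multiplicative_lipschitz} follows by writing the difference of clipping factors as a single fraction and using the $1$-Lipschitz property of $h(v):=\max\{c,\|v\|_2\}$. Since $a_c(v)=c/h(v)$, we have
\[
    a_c(y)-a_c(x)=\frac{c}{h(y)}-\frac{c}{h(x)}=\frac{c\,(h(x)-h(y))}{h(x)h(y)}.
\]
Multiplying numerator and denominator by $c$ rewrites the right-hand side as $\frac{a_c(x)a_c(y)}{c}\,(h(x)-h(y))$, because $c/h(x)=a_c(x)$ and $c/h(y)=a_c(y)$. Taking absolute values then reduces the claim to $|h(x)-h(y)|\le\|x-y\|_2$.

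For the Lipschitz bound, note that $v\mapsto\|v\|_2$ is $1$-Lipschitz by the reverse triangle inequality, $\bigl|\|x\|_2-\|y\|_2\bigr|\le\|x-y\|_2$. Moreover, $t\mapsto\max\{c,t\}$ is $1$-Lipschitz on $\mathbb R$, since $|\max\{c,s\}-\max\{c,t\}|\le|s-t|$; this is checked by considering whether each of $s,t$ lies above or below $c$. A composition of $1$-Lipschitz maps is $1$-Lipschitz, so $|h(x)-h(y)|\le\|x-y\|_2$.

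Combining the two steps gives $|a_c(y)-a_c(x)|\le\frac{a_c(y)a_c(x)}{c}\|y-x\|_2$. The only step that needs a moment's care is the case check for $\max\{c,\cdot\}$. One also needs $h>0$ so that the fractions are defined, which holds because $c>0$.
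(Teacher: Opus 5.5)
Your proof is correct and is the same argument as the paper's one-line justification, written out in full. The identity $a_c(y)-a_c(x)=\frac{a_c(x)a_c(y)}{c}\bigl(h(x)-h(y)\bigr)$ with $h(v)=\max\{c,\|v\|_2\}$, combined with the $1$-Lipschitz property of $h$, is exactly what the paper's ``since $v\mapsto\max\{c,\|v\|_2\}$ is $1$-Lipschitz'' refers to.
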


\section{DP-Muon convergence}
\label{app:convergence}

\subsection{Notation}

Let:
\begin{equation*}
    f(x) := \frac{1}{n}\sum_{i=1}^n f_i(x), \quad n=|\mathcal{D}|.
\end{equation*}
At iteration k, independently of previous sampling and Gaussian noise, draw
\begin{equation*}
    S_k \sim \text{Unif}\{S \subseteq [n] : |S|=B\}.
\end{equation*}
And compute private averaged gradient
\begin{equation*}
    \hat{g}_k = \frac{1}{B}\sum_{i \in S_k} \clip_c (\nabla f_i (x_k)) + z_k.
\end{equation*}

We define the clipped mini-batch average:
\begin{equation*}
    \bar g_k = \frac{1}{B} \sum_{i \in S_k} \clip_c (\nabla f_{i}(x_k)).
\end{equation*}

In the subsequent lemmas, we will use a simplified notation. Assuming a deterministic vector $g$ and a stochastic noise vector $\xi_i$ satisfying Corollary~\ref{cor:stoch_noise_prop}, we write:
\begin{equation*}
    \mu(g) := \frac{1}{n} \sum_{i=1}^n \clip_c(g + \xi_i).
\end{equation*}
We inject Gaussian privacy noise: 
\begin{equation*}
    z_k\sim \mathcal{N}(0,c^2\sigma^2_{\mathrm{DP}}I),\qquad \mathbb{E}[z_k]=0,\qquad 
\mathbb{E}\|z_k\|_2^2 =: dc^2\sigma^2_{\mathrm{DP}}.
\end{equation*}

From the algorithm definition, the overall oracle is:
\begin{equation}\label{eq:private-oracle}
\hat g_k := \bar g_k + z_k.
\end{equation}

For notational convenience, define
\begin{equation}
\label{eq:effective_sigma_bias}
    \widetilde\sigma:=\rho_1\sigma,
    \qquad
    \widetilde b_c
    :=
    \rho_1\min\left\{\sigma,\frac{\sigma^2}{c}\right\}.
\end{equation}

Define the pre-batch filtration representing the complete history of the algorithm prior to drawing the batch $S_t$ and privacy noise $z_t$ ($x_t$ is determined):
\begin{equation}
    \mathcal{F}_t := \sigma(x_0, \{S_s, z_s\}_{s=0}^{t-1}).
\end{equation}

\subsection{Useful lemmas}
\label{app:projective_lmo_lemma}

Lemma~\ref{lem:ray_main} in the main text combines Lemmas~\ref{lem:clipping_ray} and~\ref{lem:clipping_relative_variance} below, and Lemma~\ref{lem:projective_main} is Lemma~\ref{lem:projective_lmo}.

Throughout this subsection $\xi$ denotes $\xi_i$ with $i \sim \text{Unif}[n]$, and $\mathbb{E}$, $\mathbb{P}$ are with respect to i; by Corollary~\ref{cor:stoch_noise_prop}, $\mathbb{E} \xi = 0$ and $\mathbb{E} \|\xi\|_2^2 \leq \sigma^2$.

For $m,g\in\mathcal X^\star$, define the error
\begin{equation}
\label{eq:projective_error}
    \mathcal R(m,g)
    :=
    \inf_{\lambda>0}
    \frac{\|m-\lambda g\|_\star}{\lambda}.
\end{equation}

\begin{lemma}[Inner Product Bound]
\label{lem:projective_lmo}
Let $d\in\operatorname{LMO}(m)$. Then
\begin{equation}
\label{eq:projective_lmo_bound}
    \langle g,d\rangle
    \le
    -\|g\|_\star
    +
    2\mathcal R(m,g).
\end{equation}
\end{lemma}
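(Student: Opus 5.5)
The plan is to prove the bound first for a fixed $\lambda>0$ and then take the infimum over $\lambda$. Fix $\lambda>0$ and set $\tilde m:=m/\lambda$. By Corollary~\ref{cor:lmo_basic_equality}, $\operatorname{LMO}(\tilde m)=\operatorname{LMO}(m)$, so $d\in\operatorname{LMO}(\tilde m)$ and $\langle \tilde m,d\rangle=-\|\tilde m\|_\star$. This is the normalized-step argument of \citet[Lemma~2]{cutkosky2020momentum}, run with $\tilde m$ in place of the momentum.

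The key steps, in order, are as follows.
\begin{enumerate}
\item Decompose $\langle g,d\rangle=\langle \tilde m,d\rangle+\langle g-\tilde m,d\rangle$.
\item Bound the first term by $-\|\tilde m\|_\star$, using the LMO identity above.
\item Bound the second term by $\|g-\tilde m\|_\star\|d\|$ via the definition of the dual norm (Definition~\ref{def:dual_norm}). Since $d\in\mathcal C$, we have $\|d\|\le1$, so this term is at most $\|g-\tilde m\|_\star$.
\item Apply the reverse triangle inequality, $-\|\tilde m\|_\star\le-\|g\|_\star+\|g-\tilde m\|_\star$.
\end{enumerate}
Combining these steps gives
\[
\langle g,d\rangle\le-\|g\|_\star+2\|\tilde m-g\|_\star=-\|g\|_\star+2\,\frac{\|m-\lambda g\|_\star}{\lambda},
\]
where the last equality uses the homogeneity of the norm.

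Since the left-hand side does not depend on $\lambda$, the bound holds for every $\lambda>0$, and taking the infimum yields \eqref{eq:projective_lmo_bound}. If $\mathcal R(m,g)=\infty$ there is nothing to prove, although in finite dimensions it is always finite.

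There is no genuine obstacle here. The only points needing care are two degenerate cases. When $m=0$, any $d\in\mathcal C$ is a valid LMO output. The argument still goes through, because it only uses $\|d\|\le1$ and $\langle\tilde m,d\rangle=-\|\tilde m\|_\star$, and both hold trivially when $\tilde m=0$. When $g=0$, the claim reduces to $0\le 2\mathcal R(m,0)$, which is immediate.
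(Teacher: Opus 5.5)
Your proof is correct and is essentially the paper's argument. The paper writes $\lambda\langle g,d\rangle=\langle m,d\rangle-\langle m-\lambda g,d\rangle\le-\|m\|_\star+\|m-\lambda g\|_\star\le-\lambda\|g\|_\star+2\|m-\lambda g\|_\star$, divides by $\lambda$, and takes the infimum; you normalize to $\tilde m=m/\lambda$ first, which gives the same chain of inequalities up to the factor $\lambda$.
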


\begin{proof}
Fix any $\lambda>0$. By
Corollary~\ref{cor:lmo_basic_equality},
\begin{align*}
    \lambda\langle g,d\rangle
    &=
    \langle m,d\rangle
    -
    \langle m-\lambda g,d\rangle
\\
    &\le
    -\|m\|_\star
    +
    \|m-\lambda g\|_\star
\\
    &\le
    -\lambda\|g\|_\star
    +
    2\|m-\lambda g\|_\star.
\end{align*}
Divide by $\lambda$ and take the infimum over $\lambda>0$.
\end{proof}

\begin{lemma}[Clipping Bias Smooth Bound]
\label{lem:clipping_ray}
For every $g$, there exists a scalar $\lambda(g)>0$ such
that
\begin{equation}
\label{eq:lambda_comparable}
    \frac{1}{4}a_{c}(g)
    \le
    \lambda(g)
    \le
    2a_{c}(g),
\end{equation}
and
\begin{equation*}
    \|\mu(g)-\lambda(g)g\|_2
    \le
    16\lambda(g)
    \min\left\{\sigma,\frac{\sigma^2}{c}\right\}.
\end{equation*}
Consequently,
\begin{equation*}
    \|\mu(g)-\lambda(g)g\|_\star
    \le
    16\lambda(g)\widetilde b_c.
\end{equation*}
\end{lemma}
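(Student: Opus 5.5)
We split into cases according to the sketch in the main text: (I) $c\ge\sigma$ and $\|g\|_2\ge 4\sigma^2/c$; (II) $c\ge\sigma$ and $\|g\|_2<4\sigma^2/c$; (III) $c<\sigma$. The final dual-norm statement follows from the Euclidean one via $\|v\|_\star\le\rho_1\|v\|_2$ (Definition~\ref{assum:norm_equivalence}), so everything reduces to the $\ell_2$ bound. The only tool beyond Corollary~\ref{cor:stoch_noise_prop} is the two-point estimate~\eqref{eq:multiplicative_lipschitz}, used in the form $|a_c(g+\xi)-a_c(g)|\le a_c(g)\|\xi\|_2/c$ (since $a_c\le1$).

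\emph{Cases I and II.} Write $\mu(g)=\mathbb E[a_c(g+\xi)]\,g+\mathbb E[a_c(g+\xi)\xi]$. Since $\mathbb E\xi=0$, the second term equals $\mathbb E[(a_c(g+\xi)-a_c(g))\xi]$, whose norm is at most $a_c(g)\mathbb E\|\xi\|_2^2/c\le a_c(g)\sigma^2/c$. Similarly, $|\mathbb E a_c(g+\xi)-a_c(g)|\le a_c(g)\sigma/c\le a_c(g)$.

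In Case II take $\lambda=a_c(g)$. The residual is then $(\mathbb E a_c(g+\xi)-a_c(g))g+\mathbb E[(a_c(g+\xi)-a_c(g))\xi]$. The first piece is at most $a_c(g)(\sigma/c)\|g\|_2\le 4a_c(g)\sigma^3/c^2\le 4a_c(g)\sigma^2/c$, so the total is at most $5a_c\sigma^2/c$, and $\sigma^2/c=\min\{\sigma,\sigma^2/c\}$.

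In Case I take $\lambda=\langle\mu,g\rangle/\|g\|_2^2$, so the residual is the orthogonal projection of $\mathbb E[a_c(g+\xi)\xi]$ and is bounded by $a_c\sigma^2/c$. It remains to check $\lambda\in[\tfrac14a_c,2a_c]$. Here $\lambda=\mathbb E a_c(g+\xi)+\langle \mathbb E[a_c(g+\xi)\xi],g\rangle/\|g\|_2^2$, and the correction is at most $a_c(\sigma^2/c)/\|g\|_2\le a_c/4$. The main term needs more care than the crude bound gives, since $\sigma/c$ can be $1$. Using Markov's inequality, $\mathbb P(\|\xi\|_2\ge t)\le\sigma^2/t^2$, together with $a_c(g+\xi)\ge c/(\max\{c,\|g\|_2\}+\|\xi\|_2)$, on the event $\|\xi\|_2\le 2\max\{c,\|g\|_2\}$ (probability $\ge3/4$ because $\sigma\le c$) we get $a_c(g+\xi)\ge a_c(g)/3$. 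This gives $\mathbb E a_c\ge a_c/4$, and after subtracting the correction $\lambda\ge$ a constant multiple of $a_c$. The constants may need adjusting to land at $\tfrac14$, for instance by using a tighter event. The upper bound follows from $\mathbb E a_c(g+\xi)\le 2a_c(g)$.

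\emph{Case III ($c<\sigma$, target residual $16\lambda\sigma$).} If $\|g\|_2\le 8\sigma$, take $\lambda=a_c(g)$. Then $\|\mu\|_2\le c$, and $\|\lambda g\|_2\le c$ or $\le\|g\|_2$. If $\|g\|_2\le c$, the residual is at most $2c\le 2\sigma=2a_c\sigma$. Otherwise $a_c\sigma=c\sigma/\|g\|_2\ge c/8$, so the residual, at most $2c$, is at most $16a_c\sigma$.

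If $\|g\|_2>8\sigma$, most samples satisfy $\|\xi\|_2\le\|g\|_2/2$. For those, $\clip_c(g+\xi)$ lies within angle $O(\|\xi\|_2/\|g\|_2)$ of $g$ and has norm $c$. Take $\lambda=\langle\mu,g\rangle/\|g\|_2^2$. Its lower bound comes from the good samples: each contributes at least $c\cdot\tfrac12$ to $\langle\clip,g\rangle/\|g\|_2$, while the bad event has probability $\le 4\sigma^2/\|g\|_2^2$, so $\lambda\gtrsim c/\|g\|_2=a_c$. The orthogonal component of each sample is at most $c\|\xi\|_2/\|g\|_2\cdot O(1)$, and averaging gives $O(c\sigma/\|g\|_2)=O(a_c\sigma)$. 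Bad samples contribute at most $c\cdot\mathbb P(\text{bad})\le 4c\sigma^2/\|g\|_2^2\le a_c\sigma/2$.

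\emph{Main obstacle.} The main difficulty is bookkeeping constants so that $\lambda$ lands in exactly $[\tfrac14a_c,2a_c]$ and the factor $16$ holds in the boundary subcases. This is most delicate in Case I with $\sigma\approx c$ and in Case III near $\|g\|_2\approx\sigma$. There I would first try choosing the ray scale $\lambda=a_c(g)$ whenever the projection-based $\lambda$ falls outside the window, and absorb the extra longitudinal error using $\|g\|_2\lesssim\sigma^2/c$ or $\|g\|_2\lesssim\sigma$ respectively.
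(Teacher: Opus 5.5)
\textbf{Verdict: there is a genuine gap, in Case I.} Your case structure and choices of $\lambda$ match the paper's proof, and Cases II and III go through essentially as there. The gap is the lower bound $\lambda(g)\ge\tfrac14a_c(g)$ in Case I, which you flag but do not close.

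Your event gives $\mathbb E\,a_c(g+\xi)\ge\tfrac34\cdot\tfrac13a_c(g)=\tfrac14a_c(g)$. The correction $\langle\mathbb E[a_c(g+\xi)\xi],g\rangle/\|g\|_2^2$ is controlled only in absolute value, by $\tfrac14a_c(g)$. So your estimates yield only $\lambda(g)\ge0$. That is not even positivity, so the residual bound $a_c(g)\sigma^2/c\le16\lambda(g)\sigma^2/c$, which needs $\lambda(g)\gtrsim a_c(g)$, does not follow either.

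Neither of your proposed repairs works:
\begin{itemize}
\item A tighter event cannot help. Let $M=\max\{c,\|g\|_2\}$ and use the event $\|\xi\|_2\le tM$. Chebyshev together with $\sigma\le c\le M$ gives at best $\mathbb E\,a_c(g+\xi)\ge a_c(g)(t-1)/t^2$. This is maximized at $t=2$, where it equals exactly $\tfrac14a_c(g)$. Any single-threshold tail argument is therefore capped at the value that the correction cancels.
\item The fallback $\lambda=a_c(g)$ is unavailable in Case I. There $\|g\|_2$ has no upper bound, so the longitudinal error $a_c(g)\sigma\|g\|_2/c$ from the two-point estimate is not of order $a_c(g)\sigma^2/c$.
\end{itemize}

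The missing step is Jensen's inequality in place of a tail bound. Since $a_c(g+\xi)\ge a_c(g)/(1+a_c(g)\|\xi\|_2/c)$ and $t\mapsto(1+t)^{-1}$ is convex, you get $\mathbb E\,a_c(g+\xi)\ge a_c(g)/(1+a_c(g)\mathbb E\|\xi\|_2/c)\ge\tfrac12a_c(g)$, using $\mathbb E\|\xi\|_2\le\sigma\le c$. Subtracting the correction gives $\lambda(g)\ge\tfrac14a_c(g)$. The residual is then at most $a_c(g)\sigma^2/c\le4\lambda(g)\sigma^2/c$.

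Two smaller points:
\begin{itemize}
\item Your upper bound in Case I is too loose. $\mathbb E\,a_c(g+\xi)\le2a_c(g)$ plus a correction of up to $\tfrac14a_c(g)$ gives only $\tfrac94a_c(g)$. The paper splits instead. If $\|g\|_2\ge c$, then $\lambda(g)\le\|\mu(g)\|_2/\|g\|_2\le c/\|g\|_2=a_c(g)$. If $\|g\|_2<c$, then $a_c(g)=1$ and $\mathbb E\,a_c(g+\xi)\le1$, so $\lambda(g)\le\tfrac54$.
\item The point $g=0$ with $\sigma=0$ falls into your Case I, where $\langle\mu(g),g\rangle/\|g\|_2^2$ is undefined. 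Treat $g=0$ separately with $\lambda(0)=1$, as the paper does.
\end{itemize}
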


\begin{proof}
\textbf{Case 1}: $c\ge\sigma$.
Since
\begin{equation*}
    \max\{c,\|g+\xi\|_2\}
    \le
    \max\{c,\|g\|_2\}
    +
    \|\xi\|_2,
\end{equation*}
we have
\begin{equation*}
    a_c(g+\xi)
    =
    \frac{c}{\max\{c,\|g+\xi\|_2\}}
    \ge
    \frac{c}{\max\{c,\|g\|_2\} + \|\xi\|_2}
    =
    \frac{a_{c}(g)}{1+a_{c}(g)\|\xi\|_2/c}.
\end{equation*}
The map $t\mapsto(1+t)^{-1}$ is convex on $[0,\infty)$, hence by Jensen's inequality:
\begin{equation*}
    \mathbb E[a_c(g+\xi)]
    \ge
    \frac{a_{c}(g)}{
        1+a_{c}(g)\mathbb E\|\xi\|_2/c
    }
    \ge
    \frac{a_{c}(g)}{2},
\end{equation*}
where we used
$\mathbb E\|\xi\|_2\le\sigma\le c$.

Using $\mathbb E[\xi]=0$ and
\eqref{eq:multiplicative_lipschitz}:
\begin{equation}
\label{eq:weighted_noise_large_c}
    \left\|
        \mathbb E[a_c(g+\xi)\xi]
    \right\|_2
    =
    \left\|
        \mathbb E[
            (a_c(g+\xi)-a_{c}(g))\xi
        ]
    \right\|_2
    \le
    \frac{a_{c}(g)}{c}
    \mathbb E[
        a_c(g+\xi)\|\xi\|_2^2
    ]
    \le
    a_{c}(g)\frac{\sigma^2}{c}.
\end{equation}

Suppose first that
\begin{equation*}
    \|g\|_2\ge\frac{4\sigma^2}{c}.
\end{equation*}
Define
\begin{equation*}
    \lambda(g)
    :=
    \frac{\langle \mu(g),g/\|g\|_2\rangle}{\|g\|_2} = \frac{\langle \mu(g),g\rangle}{\|g\|^2_2}.
\end{equation*}
Since
\begin{equation*}
    \mu(g)
    =  \mathbb{E} \big[ a_c(g+\xi)(g+\xi) \big]
    =
    \mathbb E[a_{c}(g+\xi)]g
    +
    \mathbb E[a_{c}(g+\xi)\xi],
\end{equation*}
previously proven bound $\mathbb E[a_c(g+\xi)]\ge \frac{a_{c}(g)}{2}$, $\left\|
        \mathbb E[a_c(g+\xi)\xi]
    \right\|_2 \le
    a_{c}(g)\frac{\sigma^2}{c}$ and Cauchy–Schwarz imply:
\begin{equation*}
    \lambda(g)
    =
    \mathbb{E}[a_c(g+\xi)] + \frac{\langle \mathbb{E}[a_c(g+\xi)\xi], g \rangle}{\|g\|_2^2}
    \ge
    \frac{a_{c}(g)}{2} - \frac{\|\mathbb{E}[a_c(g+\xi)\xi]\|_2 \|g\|_2}{\|g\|_2^2}
    \ge
    \frac{a_{c}(g)}{2}
    -
    \frac{a_{c}(g)\sigma^2}{c\|g\|_2}
    \ge
    \frac{a_{c}(g)}{4}.
\end{equation*}
If $\|g\|_2\ge c$, then
$\|\mu(g)\|_2\le c=a_{c}(g)\|g\|_2$, so
$\lambda(g)\le\frac{\|\mu(g)\|_2}{\|g\|_2}\le \frac{c}{\|g\|_2}=a_{c}(g)$. If $\|g\|_2<c$, then $a_{c}(g)=1$ and:
\begin{equation*}
    \lambda(g)
    =
    \mathbb{E}[a_c(g+\xi)] + \frac{\langle \mathbb{E}[a_c(g+\xi)\xi], g \rangle}{\Vert{}g\Vert{}_2^2}
    \le
    1+\frac{\sigma^2}{c\|g\|_2}
    \le
    \frac54 = \frac{5}{4}a_c(g).
\end{equation*}
Thus \eqref{eq:lambda_comparable} holds. By the definition of
$\lambda(g)$, the residual is orthogonal to $g$. Combining with \eqref{eq:weighted_noise_large_c} we have:
\begin{equation*}
    \|\mu(g)-\lambda(g)g\|_2
    \le
    \left\|
        \mathbb E[a_c(g+\xi)\xi]
    \right\|_2
    \le
    a_{c}(g)\frac{\sigma^2}{c}
    \le
    4\lambda(g)\frac{\sigma^2}{c}.
\end{equation*}

Now suppose that
\begin{equation*}
    \|g\|_2<\frac{4\sigma^2}{c},
\end{equation*}
and choose $\lambda(g):=a_{c}(g)$. Then
\begin{equation*}
    \mu(g)-a_{c}(g)g
    =
    \bigl(
        \mathbb E[a_c(g+\xi)]-a_{c}(g)
    \bigr)g
    +
    \mathbb E[a_c(g+\xi)\xi].
\end{equation*}
The second term is bounded by
$a_{c}(g)\sigma^2/c$. For the first term using \eqref{eq:multiplicative_lipschitz},
\begin{align*}
    \|g\|_2\,
    \mathbb E|a_c(g+\xi)-a_{c}(g)|
    \le
    \frac{a_{c}(g)\|g\|_2}{c}
    \mathbb E[
        a_c(g+\xi)\|\xi\|_2
    ]
    \le
    a_{c}(g)\frac{\|g\|_2\sigma}{c}
    \le
    4a_{c}(g)\frac{\sigma^2}{c},
\end{align*}
where the last step uses both
$\|g\|_2<4\sigma^2/c$ and $\sigma/c\le1$. Hence
\[
    \|\mu(g)-a_{c}(g)g\|_2
    \le
    5a_{c}(g)\frac{\sigma^2}{c}.
\]
\textbf{Case 2}: $c<\sigma$.
If $\|g\|_2<8\sigma$, choose $\lambda(g):=a_{c}(g)$. Since
\[
    \|\mu(g)\|_2\le c,
    \qquad
    \|a_{c}(g)g\|_2\le c,
\]
we obtain
\[
    \|\mu(g)-a_{c}(g)g\|_2
    \le2c.
\]
If $\|g\|_2\le c$, then $a_{c}(g)=1$ and
$2c\le2\sigma$. If $\|g\|_2>c$, then $a_{c}(g)=c/\|g\|_2$ and
$\|g\|_2<8\sigma$, which gives
\[
    2c
    \le
    16a_{c}(g)\sigma.
\]
It remains to consider $\|g\|_2\ge8\sigma$. Define the good event
\[
    \mathcal G
    :=
    \{\|\xi\|_2\le \|g\|_2/2\}.
\]
By Markov's inequality,
\[
    q:=\mathbb P(\mathcal G^c)
    \le
    \frac{4\sigma^2}{\|g\|_2^2}
    \le
    \frac1{16}.
\]
On $\mathcal G$, we have
$\|g+\xi\|_2\in[\|g\|_2/2,3\|g\|_2/2]$ and
$\|g\|_2/2>c$. Therefore,
\[
    \left\langle
        \frac{g}{\|g\|_2},
        \clip_c(g+\xi)
    \right\rangle
    =
    c
    \frac{
        \|g\|_2+\langle g/\|g\|_2,\xi\rangle
    }{
        \|g+\xi\|_2
    }
    \ge
    \frac c3.
\]
On $\mathcal G^c$, the same inner product is at least $-c$.
Consequently,
\[
    \left\langle
        \mu(g),
        \frac{g}{\|g\|_2}
    \right\rangle
    \ge
    \frac c3(1-q)-cq
    =
    \frac c3(1-4q)
    \ge
    \frac c4.
\]
Set
\[
    \lambda(g)
    :=
    \frac{1}{\|g\|_2}
    \left\langle
        \mu(g),
        \frac{g}{\|g\|_2}
    \right\rangle.
\]
Then
\[
    \frac{a_{c}(g)}{4}
    \le
    \lambda(g)
    \le
    a_{c}(g).
\]
Let $P_{g^\perp}$ be the Euclidean projection onto the orthogonal
complement of $g$. On $\mathcal G$,
\[
    \left\|
        P_{g^\perp}
        \clip_c(g+\xi)
    \right\|_2
    \le
    \frac{2c}{\|g\|_2}\|\xi\|_2,
\]
whereas on $\mathcal G^c$ it is at most $c$. Hence
\begin{align*}
    \|P_{g^\perp}\mu(g)\|_2
    &\le
    \frac{2c\sigma}{\|g\|_2}
    +
    cq
\\
    &\le
    \frac{5c\sigma}{2\|g\|_2}
    =
    \frac52a_{c}(g)\sigma
    \le
    10\lambda(g)\sigma.
\end{align*}
By construction,
$\mu(g)-\lambda(g)g=P_{g^\perp}\mu(g)$.

\textbf{Case} $g=0$.
Choose $\lambda(0):=1$. If $c<\sigma$, then
\[
    \|\mu(0)\|_2
    \le
    \mathbb E[
        \min\{\|\xi\|_2,c\}
    ]
    \le c<\sigma.
\]
If $c\ge\sigma$, then
\[
    \mu(0)
    =
    \mathbb E[
        \clip_c(\xi)-\xi
    ],
\]
and
\[
    \|\mu(0)\|_2
    \le
    \mathbb E[
        \|\xi\|_2
        \mathbf 1\{\|\xi\|_2>c\}
    ]
    \le
    \frac{\mathbb E\|\xi\|_2^2}{c}
    \le
    \frac{\sigma^2}{c}.
\]
This completes the proof.
\end{proof}
\begin{lemma}[Clipping Variance]
\label{lem:clipping_relative_variance}
With the notation of Lemma~\ref{lem:clipping_ray},
\begin{equation}
\label{eq:relative_clipping_variance}
    \mathbb E\left[
        \|
        \clip_c(g+\xi)
            -
            \mu(g)
        \|_2^2
    \right]
    \le
    20a_c(g)^2\sigma^2
    \le
    320\lambda(g)^2\sigma^2.
\end{equation}
\end{lemma}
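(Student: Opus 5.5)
The second inequality is immediate from Lemma~\ref{lem:clipping_ray}: $\lambda(g)\ge a_c(g)/4$ gives $a_c(g)^2\le16\lambda(g)^2$, and $20\cdot16=320$. The work is in the first inequality. The plan is to replace $\mu(g)$ by the deterministic reference point $\clip_c(g)=a_c(g)g$. Since the mean minimizes the expected squared distance,
\[
    \mathbb E\|\clip_c(g+\xi)-\mu(g)\|_2^2\le\mathbb E\|\clip_c(g+\xi)-\clip_c(g)\|_2^2 ,
\]
and it suffices to bound the right-hand side by $20a_c(g)^2\sigma^2$.

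Next we split the increment as
\[
    \clip_c(g+\xi)-\clip_c(g)=a_c(g+\xi)\,\xi+\bigl(a_c(g+\xi)-a_c(g)\bigr)g .
\]
By $(u+v)^2\le2u^2+2v^2$, we bound each term separately.

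For the first term we want $a_c(g+\xi)\lesssim a_c(g)$. This can fail when $\|g+\xi\|_2\ll\|g\|_2$, but then $\|\xi\|_2\gtrsim\|g\|_2$. Concretely, $\max\{c,\|g\|_2\}\le\max\{c,\|g+\xi\|_2\}+\|\xi\|_2$ gives
\[
    a_c(g+\xi)\|\xi\|_2\le a_c(g)\|\xi\|_2+a_c(g)a_c(g+\xi)\|\xi\|_2^2/c .
\]
This is awkward. It is cleaner to note that $a_c(g+\xi)\|\xi\|_2\le\|\xi\|_2$ always, and that $a_c(g+\xi)\|\xi\|_2\le\|\clip_c(g+\xi)\|_2+a_c(g+\xi)\|g\|_2\le c+\|g\|_2$. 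When $\|g\|_2\le c$ we have $a_c(g)=1$, and the first term is at most $\mathbb E\|\xi\|_2^2\le\sigma^2$.

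The case $\|g\|_2>c$ is the main obstacle. Here I would treat the whole increment at once via the $1$-Lipschitz property of $\clip_c$ near $g$: the projection onto the ball is $1$-Lipschitz, but that only gives $\sigma^2$, not $a_c(g)^2\sigma^2$. Instead, use the event $\mathcal G=\{\|\xi\|_2\le\|g\|_2/2\}$ as in Case~2 of Lemma~\ref{lem:clipping_ray}.

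On $\mathcal G$, both $g$ and $g+\xi$ lie outside the ball of radius $c$ at distance at least $\|g\|_2/2$ from the origin. The radial projection $v\mapsto cv/\|v\|_2$ is $(c/r)$-Lipschitz outside radius $r$, so
\[
    \|\clip_c(g+\xi)-\clip_c(g)\|_2\le\frac{2c}{\|g\|_2}\|\xi\|_2=2a_c(g)\|\xi\|_2,
\]
giving a contribution $4a_c(g)^2\sigma^2$.

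On $\mathcal G^c$, the increment is at most $2c$, and $\mathbb P(\mathcal G^c)\le4\sigma^2/\|g\|_2^2$ by Markov's inequality. The contribution is therefore at most $4c^2\cdot4\sigma^2/\|g\|_2^2=16a_c(g)^2\sigma^2$.

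Adding the two gives exactly $20a_c(g)^2\sigma^2$. This suggests the intended proof avoids the two-term split entirely. So I would use this event-based argument whenever $\|g\|_2>c$. For $\|g\|_2\le c$ I would use global nonexpansiveness of the Euclidean projection $\clip_c$ onto the $c$-ball, which gives $\sigma^2\le20a_c(g)^2\sigma^2$ since $a_c(g)=1$.

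In the event-based case, the key check is the Lipschitz constant of the radial map on $\{\|v\|_2\ge\|g\|_2/2\}$. For $\|u\|,\|v\|\ge r$, one has $\|u/\|u\|-v/\|v\|\|\le\|u-v\|/r$. This holds because the radial projection onto the sphere of radius $r$ is a projection onto a convex set's boundary from outside, so it is nonexpansive. With $r=\|g\|_2/2$ we get $\|c\,u/\|u\|-c\,v/\|v\|\|\le(2c/\|g\|_2)\|u-v\|$.
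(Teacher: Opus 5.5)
Your argument is essentially the paper's proof. You reduce to the deterministic reference $\clip_c(g)$ because the mean minimizes mean-square error, and use nonexpansiveness when $\|g\|_2$ is small. When $\|g\|_2$ is large you split on $\mathcal G=\{\|\xi\|_2\le\|g\|_2/2\}$: Lipschitz constant $2a_c(g)$ on $\mathcal G$, diameter $2c$ plus Markov on $\mathcal G^c$, giving $4+16=20$.

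There is one slip in where you place the case split. You run the event argument for all $\|g\|_2>c$ and assert that on $\mathcal G$ both $g$ and $g+\xi$ lie outside the ball of radius $c$. When $c<\|g\|_2\le 2c$ this is false. $\mathcal G$ only gives $\|g+\xi\|_2\ge\|g\|_2/2$, which can be below $c$, and then $\clip_c(g+\xi)=g+\xi$ is not the radial map. So the radial Lipschitz bound does not apply as you stated it.

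The inequality $\|\clip_c(g+\xi)-\clip_c(g)\|_2\le 2a_c(g)\|\xi\|_2$ still holds in that range, because $2a_c(g)=2c/\|g\|_2\ge 1$ and $\clip_c$ is globally $1$-Lipschitz. The paper avoids the issue by splitting at $2c$ instead. There $a_c(g)\ge\tfrac12$, so plain nonexpansiveness gives $\sigma^2\le 4a_c(g)^2\sigma^2$ without any event. With either repair the proof is complete.

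Your justification of the radial Lipschitz constant is better phrased as nonexpansiveness of the projection onto the convex ball $\{\|v\|_2\le r\}$. That projection agrees with $v\mapsto rv/\|v\|_2$ whenever $\|v\|_2\ge r$. This endpoint argument is a slightly cleaner substitute for the paper's Jacobian-along-the-segment argument, since it never needs the whole segment to avoid the ball.
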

\begin{proof}
Because the mean minimizes the mean-square error:
\begin{equation*}
    \mathbb E\|
        \clip_c(g+\xi)-\mu(g)
    \|_2^2
    \le
    \mathbb E\|
        \clip_c(g+\xi)
        -
        \clip_c(g)
    \|_2^2.
\end{equation*}

If $\|g\|_2\le 2c$, the clipping scalar satisfies $a_c(g)\ge 1/2$. Because Euclidean projection onto the radius-$c$ ball is non-expansive,
\begin{equation*}
    \mathbb E\|
        \clip_c(g+\xi)
        -
        \clip_c(g)
    \|_2^2
    \le
    \mathbb E\|\xi\|_2^2
    \le
    \sigma^2
    \le
    4a_c(g)^2\sigma^2.
\end{equation*}

Now suppose $\|g\|_2 > 2c$. Define the event where the noise is bounded relative to the gradient:
\begin{equation*}
    \mathcal G
    :=
    \big\{ \|\xi\|_2\le\|g\|_2/2 \big\}.
\end{equation*}

By the reverse triangle inequality and the definition of $\mathcal{G}$ ($\|\xi\|_2 \le \|g\|_2/2$), any vector $v$ on the line segment connecting $g$ and $g+\xi$ has a length of at least:
\begin{equation*}
    \|v\|_2 \ge \|g\|_2 - \|\xi\|_2 \ge \frac{\|g\|_2}{2} > c.
\end{equation*}
Thus, the entire segment remains strictly outside the clipping ball. 

Outside the ball, the clipping operator is the radial projection $\clip(v) = cv/\|v\|_2$. Its Jacobian matrix is:
\begin{equation*}
    J_{\clip}(v) = \frac{c}{\|v\|_2} \left( I - \frac{vv^\top}{\|v\|_2^2} \right).
\end{equation*}
The term $(I - vv^\top/\|v\|_2^2)$ is an orthogonal projection matrix onto the subspace perpendicular to $v$. Because orthogonal projections have a spectral norm of exactly $1$, the operator norm of the Jacobian is strictly determined by the scalar prefix:
\begin{equation*}
    \|J_{\clip}(v)\|_{2} = \frac{c}{\|v\|_2} \cdot 1 \le \frac{c}{\|g\|_2/2} = \frac{2c}{\|g\|_2} = 2a_c(g).
\end{equation*}

Finally, by the local Lipschitz property of the clipping operator, we bound the distance between the outputs using the maximum operator norm along the line segment connecting the inputs:
\begin{equation*}
    \|\clip_c(g+\xi) - \clip_c(g)\|_2 
    \le \left( \max_{v} \|J_{\clip}(v)\|_{2} \right) \|(g+\xi) - g\|_2 \le 2a_c(g)\|\xi\|_2.
\end{equation*}
On the complement event $\mathcal G^c$, the distance is bounded by the ball's diameter, $2c = 2a_c(g)\|g\|_2$. Applying Markov's inequality to $\|\xi\|_2^2$ yields:
\begin{equation*}
    \mathbb P(\mathcal G^c)
    =
    \mathbb P\Big(\|\xi\|_2^2 > \|g\|_2^2 / 4\Big)
    \le
    \frac{4\sigma^2}{\|g\|_2^2}.
\end{equation*}
Summing over both events gives:
\begin{align*}
    \mathbb E\|
        \clip_c(g+\xi)
        -
        \clip_c(g)
    \|_2^2
    &\le
    \mathbb E\big[ (2a_c(g)\|\xi\|_2)^2 \big]
    +
    (2a_c(g)\|g\|_2)^2 \, \mathbb P(\mathcal G^c) \\
    &\le
    4a_c(g)^2\sigma^2
    +
    4a_c(g)^2\|g\|_2^2
    \left(\frac{4\sigma^2}{\|g\|_2^2}\right) \\
    &=
    20a_c(g)^2\sigma^2.
\end{align*}
Finally, substituting the lower bound from \eqref{eq:lambda_comparable} implies $a_c(g)^2\le 16\lambda(g)^2$, concluding the proof.
\end{proof}

\subsection{Bias-corrected momentum and an exact master theorem}
\label{app:projective_master}

Define
\begin{equation*}
    \overline m_k
    :=
    \frac{m_{k+1}}{1-(1-\alpha)^{k+1}}
    =
    \sum_{t=0}^k
    w_{k,t}\widehat g_t,
    \quad
    w_{k,t}
    :=
    \frac{
        \alpha(1-\alpha)^{k-t}
    }{
        1-(1-\alpha)^{k+1}
    }.
\end{equation*}
Note that,
\begin{equation*}
    \mathrm{LMO}(m_{k+1})
    =
    \mathrm{LMO}(\overline m_k),
    \quad
    \sum_{t=0}^k w_{k,t}=1.
\end{equation*}

Denote 

\[
g_k = \nabla f(x_k), \quad d_k := LMO(m_{k+1}).
\]

For every $t$, define
\[
    \mu_t
    := \mu (g_t) = \frac{1}{n} \sum_{i=1}^{n} \clip_c(\nabla f_i(x_t)).
\]    

According to the Lemma~\ref{lem:clipping_ray}, we choose
$2a_{c}(g_t) \geq \lambda_t \geq \frac{1}{4}a_{c}(g_t) > 0$ and $r_t$ such that
\begin{equation}
\label{eq:mu_t_ray_decomposition}
    \mu_t
    =
    \lambda_t\nabla f(x_t)+r_t,
    \qquad
    \|r_t\|_\star
    \le
    16\lambda_t\widetilde b_c.
\end{equation}

Define the scalar ray average
\begin{equation*}
    A_k
    :=
    \sum_{t=0}^k
    w_{k,t}\lambda_t
    >0.
\end{equation*}

Define the centered mini-batch fluctuation
\begin{equation}
\label{eq:centered_clipped_noise}
\varepsilon_t \;:=\; \frac{1}{B}\sum_{i \in S_t}
        \clip_c\big(\nabla f_i(x_t)\big) - \mu_t.
\end{equation}

Because $x_t$, $\lambda_t$, and $A_{t-1}$ are determined by $\mathcal{F}_t$, they are $\mathcal{F}_t$-measurable. Now we can derive the expectation and variance of $\varepsilon_t$ (a direct consequence of~\eqref{eq:relative_clipping_variance} and sampling without replacement):

\begin{equation}
\label{eq:centered_clipped_noise_properties}
        \mathbb E[\varepsilon_t\mid \mathcal{F}_t]=0,
    \qquad
    \mathbb E[
        \|\varepsilon_t\|_2^2
        \mid \mathcal{F}_t
    ]
    \le
    320\lambda_t^2\sigma^2 \frac{n-B}{B(n-1)} \leq \frac{320\lambda_t^2\sigma^2}{B}.
\end{equation}

Define the tracking terms

\begin{align*}
    D_k &:= \sum_{t=0}^k w_{k,t}\lambda_t(g_t-g_k)\\
    N_k &:= \sum_{t=0}^k w_{k,t}\varepsilon_t \\
    P_k &:= \sum_{t=0}^k w_{k,t}z_t
\end{align*}

\begin{theorem}[Exact projective convergence reduction]
\label{thm:projective_master}
Under Assumptions~\ref{assum:smoothness}--\ref{assum:stochastic_oracle},
the iterates of Algorithm \ref{alg:dpmuon_}
satisfy, for every $\gamma>0$, $\alpha\in(0,1)$, and $K\ge1$,
\begin{equation}
\label{eq:projective_master_bound}
\boxed{
    \frac1K\sum_{k=0}^{K-1}
    \mathbb E\|g_k\|_\star
    \le
    \frac{\Phi_0}{K\gamma}
    +
    \frac{L\gamma}{2}
    +
    32\widetilde b_c
    +
    \frac2K\sum_{k=0}^{K-1} 
    \mathbb E\big[\frac{\|D_k\|_{\star}}{A_k}
         + \frac{\|N_k\|_{\star}}{A_k} + \frac{\|P_k\|_{\star}}{A_k}
    \big],
}
\end{equation}
where $\Phi_0:=f(x_0)-f^\star$.
\end{theorem}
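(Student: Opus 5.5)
The plan is the standard one-step descent argument for LMO methods, combined with Lemma~\ref{lem:projective_lmo} applied at the scale $\lambda=A_k$. Since $x_{k+1}=x_k+\gamma d_k$ with $\|d_k\|\le1$, the smoothness inequality~\eqref{eq:smoothtness_bound} gives
\[
    f(x_{k+1})\le f(x_k)+\gamma\langle g_k,d_k\rangle+\frac{L\gamma^2}{2}.
\]
Because $d_k\in\operatorname{LMO}(m_{k+1})=\operatorname{LMO}(\overline m_k)$, Lemma~\ref{lem:projective_lmo} with $m=\overline m_k$ and the particular choice $\lambda=A_k>0$ yields
\[
    \langle g_k,d_k\rangle\le-\|g_k\|_\star+\frac{2\|\overline m_k-A_kg_k\|_\star}{A_k}.
\]
Summing over $k=0,\dots,K-1$, telescoping $f(x_0)-f(x_K)\le\Phi_0$, dividing by $\gamma K$, and taking expectations then gives the first two terms of~\eqref{eq:projective_master_bound}. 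It remains to bound $2\,\mathbb E\|\overline m_k-A_kg_k\|_\star/A_k$ by the bracketed terms plus $32\widetilde b_c$.

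For that I will expand $\overline m_k$ exactly. By~\eqref{eq:private-oracle} and~\eqref{eq:centered_clipped_noise}, $\widehat g_t=\mu_t+\varepsilon_t+z_t$. By~\eqref{eq:mu_t_ray_decomposition}, $\mu_t=\lambda_tg_t+r_t$. Hence
\[
    \overline m_k=\sum_t w_{k,t}\lambda_t g_t+\sum_t w_{k,t}r_t+N_k+P_k
    = A_kg_k+D_k+\sum_t w_{k,t}r_t+N_k+P_k,
\]
using $\sum_tw_{k,t}\lambda_tg_k=A_kg_k$. The triangle inequality then splits $\|\overline m_k-A_kg_k\|_\star/A_k$ into the three tracking ratios plus the residual ratio $\|\sum_tw_{k,t}r_t\|_\star/A_k$.

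The only step needing care is the residual ratio, and it is handled by the choice of normalization. Since $\|r_t\|_\star\le16\lambda_t\widetilde b_c$ and the weights are nonnegative,
\[
    \Bigl\|\sum_tw_{k,t}r_t\Bigr\|_\star\le16\widetilde b_c\sum_tw_{k,t}\lambda_t=16A_k\widetilde b_c.
\]
The ratio is therefore at most $16\widetilde b_c$ deterministically, and after the factor $2$ it contributes $32\widetilde b_c$. This is exactly why $A_k$, rather than any $a_c(g_k)$, is the right scale: the residual bound is relative to each $\lambda_t$ individually, and averaging with the same weights reproduces $A_k$.

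Finally, two technical points need checking. First, the expectation is over both the sampling and the privacy noise, and all the ratios are nonnegative, so no integrability issues arise beyond finiteness, which holds because $A_k\ge\min_t\lambda_t>0$. Second, Lemma~\ref{lem:projective_lmo} holds pathwise for any selection $d_k$ from the LMO. No conditioning on $\mathcal F_t$ is needed at this stage; that is deferred to the later bounds on $N_k$ and $P_k$.
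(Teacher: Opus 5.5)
Your proposal is correct and follows essentially the same route as the paper. Both arguments use the one-step smoothness inequality, the scale invariance $\operatorname{LMO}(m_{k+1})=\operatorname{LMO}(\overline m_k)$, Lemma~\ref{lem:projective_lmo} at $\lambda=A_k$, the exact decomposition $\overline m_k=A_kg_k+D_k+\sum_tw_{k,t}r_t+N_k+P_k$, and telescoping. Your step $\|\sum_tw_{k,t}r_t\|_\star\le16\widetilde b_c\sum_tw_{k,t}\lambda_t=16A_k\widetilde b_c$ spells out a bound that the paper states without justification.
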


\begin{proof}
By the $L$-smoothness of $f$ and the constraint $\|d_k\|\le1$, the single-step descent inequality is:
\begin{equation*}
    f(x_{k+1})
    \le
    f(x_k)
    +
    \gamma\langle g_k,d_k\rangle
    +
    \frac{L\gamma^2}{2}.
\end{equation*}

The update direction is generated by the scale-invariant LMO, meaning $\operatorname{LMO}(m_{k+1})=\operatorname{LMO}(\overline m_k)$. Applying the projective LMO property (Lemma~\ref{lem:projective_lmo}) to this direction yields:
\begin{equation*}
    \langle g_k,d_k\rangle
    \le
    -\|g_k\|_\star
    +
    2\mathcal R(\overline m_k,g_k).
\end{equation*}

To bound this projective error, we expand the bias-corrected momentum. Using \eqref{eq:mu_t_ray_decomposition} and \eqref{eq:centered_clipped_noise}, we decompose it as:
\begin{align*}
    \overline m_k &= \sum_{t=0}^k w_{k,t} ( \bar g_t + z_t ) \\
    &= \sum_{t=0}^k w_{k,t} \Big( \lambda_t g_t + r_t + \varepsilon_t + z_t \Big) \\
    &= \sum_{t=0}^k w_{k,t} \lambda_t (g_t - g_k) + A_k g_k \\
    &+ \sum_{t=0}^k w_{k,t} r_t + \sum_{t=0}^k w_{k,t}\varepsilon_t + \sum_{t=0}^k w_{k,t}z_t.
\end{align*}

The clipping-ray residual satisfies
\begin{equation*}
   \frac{\|\sum_{t=0}^k w_{k,t}r_t\|_\star}{A_k} \le 16\widetilde b_c.
\end{equation*}

By taking the specific scaling factor $\lambda=A_k$ in \eqref{eq:projective_error}, we can bound the projective error by the sum of these normalized component norms:
\begin{equation*}
    \mathcal R(\overline m_k,g_k)
    \le
    16\widetilde b_c
    + \frac{\|D_k\|_{\star}}{A_k}
         + \frac{\|N_k\|_{\star}}{A_k} + \frac{\|P_k\|_{\star}}{A_k}.
\end{equation*}

Substituting this back into the LMO bound gives:
\begin{equation*}
    \mathbb{E}\langle g_k,d_k\rangle \le - \mathbb{E}\|g_k\|_\star + 32\widetilde b_c + 2\mathbb E\big[
        \frac{\|D_k\|_{\star}}{A_k}
         + \frac{\|N_k\|_{\star}}{A_k} + \frac{\|P_k\|_{\star}}{A_k}\big].
\end{equation*}

Plugging this inner product bound into the initial smoothness inequality yields:
\begin{equation*}
    f(x_{k+1})
    \le
    f(x_k)
    -
    \gamma\|g_k\|_\star
    +
    \gamma\Big(
        32\widetilde b_c
        +
        2\big[  \frac{\|D_k\|_{\star}}{A_k}
         + \frac{\|N_k\|_{\star}}{A_k} + \frac{\|P_k\|_{\star}}{A_k}\big]
    \Big)
    +
    \frac{L\gamma^2}{2}.
\end{equation*}

Rearranging to isolate $\gamma\|g_k\|_\star$, summing from $k=0$ to $K-1$, taking expectations, using the global lower bound $f(x_K)\ge f^\star$, and dividing the entire inequality by $K\gamma$ proves \eqref{eq:projective_master_bound}.
\end{proof}

\subsection{A closed-form specialization}
\label{app:projective_closed_form}

This subsection gives one sufficient condition under which the exact
tracking terms in Theorem~\ref{thm:projective_master} have the familiar
momentum-LMO scaling. This condition is sufficient, not claimed to be
necessary. Define:

\begin{equation*}
    a_k
    :=
    \min\left\{1,\frac{c}{\|g_k\|_2}\right\}.
\end{equation*}

\begin{lemma}[$A_k$ Bound]
\label{lem:ray_scale_comparability}
Suppose
\begin{equation}
\label{eq:sufficient_scale_stability}
    \frac{\rho_2 L\gamma}{c}
    \le
    \frac{\alpha}{4}.
\end{equation}
Then, for all $k$,
\begin{equation}
\label{eq:A_comparable_a}
    \frac{a_k}{5}
    \le
    A_k
    \le
    \frac{8a_k}{3},
    \qquad
    \frac3{32}
    \le
    \frac{\lambda_k}{A_k}
    \le
    10.
\end{equation}
\end{lemma}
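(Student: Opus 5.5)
The plan is to compare the clipping factors $a_t=a_c(g_t)$ along the window of the momentum buffer with the current factor $a_k$. Then I pass from $a_t$ to $\lambda_t$ using the sandwich $\tfrac14 a_t\le\lambda_t\le 2a_t$ from Lemma~\ref{lem:clipping_ray}, and average with the weights $w_{k,t}$.

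\textbf{Step 1: the reciprocal clipping factor is Lipschitz.} Dividing \eqref{eq:multiplicative_lipschitz} by $a_c(x)a_c(y)$ gives
\[
\bigl|1/a_c(y)-1/a_c(x)\bigr|\le \|y-x\|_2/c .
\]
This also follows directly, since $1/a_c(v)=\max\{c,\|v\|_2\}/c$.

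\textbf{Step 2: one step moves the gradient by at most $\rho_2L\gamma$.} Since $\|x_{t+1}-x_t\|=\gamma\|d_t\|\le\gamma$, Assumption~\ref{assum:smoothness} and Definition~\ref{assum:norm_equivalence} give
\[
\|g_{t+1}-g_t\|_2\le\rho_2\|g_{t+1}-g_t\|_\star\le\rho_2L\gamma .
\]
Under \eqref{eq:sufficient_scale_stability}, with $j:=k-t$, this yields
\[
\bigl|1/a_t-1/a_k\bigr|\le j\rho_2L\gamma/c\le j\alpha/4 .
\]

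\textbf{Step 3: lower bound on $A_k$.} From Step~2, $1/a_t\le 1/a_k+j\alpha/4$, hence $a_t\ge a_k/(1+a_kj\alpha/4)\ge a_k/(1+j\alpha/4)$ because $a_k\le1$. The weights $w_{k,t}$ form a truncated geometric distribution on $j\in\{0,\dots,k\}$. Its mean $\sum_t w_{k,t}(k-t)$ is at most the untruncated mean $(1-\alpha)/\alpha\le1/\alpha$, since truncation removes only the largest values. By convexity of $s\mapsto 1/(1+s)$ and Jensen,
\[
\sum_t w_{k,t}\,a_t\ \ge\ \frac{a_k}{1+\tfrac{\alpha}{4}\sum_t w_{k,t}(k-t)}\ \ge\ \frac{4a_k}{5}.
\]
Then $\lambda_t\ge a_t/4$ gives $A_k\ge a_k/5$.

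\textbf{Step 4: upper bound on $A_k$.} Here inverting $1/a_t\ge 1/a_k-j\alpha/4$ is problematic, because the right side may be nonpositive. Instead I will use the multiplicative form directly: $a_t-a_k\le a_ta_k\,j\alpha/4\le a_k\,j\alpha/4$, using $a_t\le1$. Averaging and using the same mean bound gives $\sum_t w_{k,t}a_t\le a_k(1+\tfrac14)=\tfrac54a_k$. Then $\lambda_t\le 2a_t$ gives $A_k\le\tfrac52a_k\le\tfrac83a_k$.

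\textbf{Step 5: the ratio bounds.} Combining $\lambda_k\in[a_k/4,2a_k]$ with $A_k\in[a_k/5,8a_k/3]$ gives
\[
\lambda_k/A_k\ge (a_k/4)/(8a_k/3)=3/32,\qquad \lambda_k/A_k\le 2a_k/(a_k/5)=10 .
\]

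The only delicate point is Step~4, where the naive inversion fails. The workaround of using $a_t\le1$ to linearize should suffice. I also need to check the truncated-geometric mean bound, which is elementary.
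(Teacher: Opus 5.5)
Your proof is correct. It matches the paper's lower constant ($A_k\ge a_k/5$) and gives a slightly better upper one, $A_k\le\tfrac52 a_k\le\tfrac83 a_k$.

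The skeleton is the same as the paper's: the one-step bound $\|g_{t+1}-g_t\|_2\le\rho_2L\gamma$, the transfer from $a_t$ to $\lambda_t$ via $\lambda_t\in[a_t/4,2a_t]$, and a weighted average. The window comparison is where you differ.

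The paper sets $h=\rho_2L\gamma/c$ and turns the one-step bound into a multiplicative per-step ratio $(1+h)^{-1}\le a_k/a_{k-1}\le 1+h$. It compounds this to $a_k(1+h)^{-(k-t)}\le a_t\le a_k(1+h)^{k-t}$. It then bounds ratios of finite geometric sums by their infinite limits, which needs $(1-\alpha)(1+h)<1$. Because the ratio form is symmetric and involves only positive quantities, the inversion problem you flag in Step 4 never arises there.

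You keep the comparison additive in $1/a$, so the discrepancy grows only linearly in the lag $j=k-t$ rather than exponentially. You then average using only the first moment of the truncated geometric weights. For the lower bound you add Jensen for the convex map $s\mapsto 1/(1+s)$. For the upper bound you linearize, $a_t-a_k\le a_ta_k\,jh\le a_k\,jh$.

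Your route needs only $h\sum_t w_{k,t}(k-t)\le 1/4$. It would therefore apply verbatim to any averaging weights with mean lag $\mathcal O(1/\alpha)$, without requiring the weights to decay faster than $(1+h)^{j}$ grows. The paper's route is more mechanical and is tailored to the exponential-moving-average weights.
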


\begin{proof}
Since $\|d_k\| \le 1$,
\begin{equation*}
    \|g_k - g_{k-1}\|_2 \le \rho_2 \|g_k - g_{k-1}\|_\star \le \rho_2 L \gamma\|d_{k-1}\| \le \rho_2 L\gamma.
\end{equation*}
By the reverse triangle inequality, setting $h := \rho_2 L\gamma/c$ implies
\begin{equation*}
    \|g_k\|_2 - \|g_{k-1}\|_2 \le ch \quad \text{and} \quad \|g_{k-1}\|_2 - \|g_k\|_2 \le ch.
\end{equation*}
Because $ch \ge 0$, we can bound the maximum functions as follows:
\begin{align*}
    \max\{c, \|g_k\|_2\} &\le \max\{c, \|g_{k-1}\|_2 + ch\} \le \max\{c, \|g_{k-1}\|_2\} + ch, \\
    \max\{c, \|g_{k-1}\|_2\} &\le \max\{c, \|g_k\|_2 + ch\} \le \max\{c, \|g_k\|_2\} + ch.
\end{align*}
Since the scaling factor is defined as
\begin{equation*}
    a_k = \frac{c}{\max\{c, \|g_k\|_2\}},
\end{equation*}
we can expand the ratio $a_k / a_{k-1}$ and apply the upper bound derived above:
\begin{equation*}
    \frac{a_k}{a_{k-1}} = \frac{\max\{c, \|g_{k-1}\|_2\}}{\max\{c, \|g_k\|_2\}} \le \frac{\max\{c, \|g_k\|_2\} + ch}{\max\{c, \|g_k\|_2\}} = 1 + \frac{ch}{\max\{c, \|g_k\|_2\}}.
\end{equation*}
Because the denominator $\max\{c, \|g_k\|_2\}$ is strictly bounded below by $c$, the fraction is at most $ch/c = h$. Thus,
\begin{equation*}
    \frac{a_k}{a_{k-1}} \le 1 + h.
\end{equation*}
By exact symmetry, applying the bound for $\max\{c, \|g_k\|_2\}$ to the inverse ratio yields:
\begin{align*}
    \frac{a_{k-1}}{a_k} = \frac{\max\{c, \|g_k\|_2\}}{\max\{c, \|g_{k-1}\|_2\}} \le \frac{\max\{c, \|g_{k-1}\|_2\} + ch}{\max\{c, \|g_{k-1}\|_2\}} \le 1 + h.
\end{align*}
Inverting this result provides the lower bound, giving us the final two-sided inequality:
\begin{equation*}
    \frac{1}{1+h} \le \frac{a_k}{a_{k-1}} \le 1+h.
\end{equation*}
Hence, for $0\le t\le k$,
\begin{equation*}
    \frac{a_k}{(1+h)^{k-t}}
    \le
    a_t
    \le
    a_k(1+h)^{k-t}.
\end{equation*}
Using $\lambda_t\in[a_t/4,2a_t]$ (Lemma~\ref{lem:clipping_ray}), and the recursive bound $a_t \le a_k(1+h)^{k-t}$, we can rewrite the moving average by re-indexing the sum with $j = k-t$:
\begin{equation*}
    A_k = \frac{\sum_{t=0}^k (1-\alpha)^{k-t} \lambda_t}{\sum_{t=0}^k (1-\alpha)^{k-t}}
    \le \frac{\sum_{t=0}^k (1-\alpha)^{k-t} 2a_k(1+h)^{k-t}}{\sum_{t=0}^k (1-\alpha)^{k-t}} = 2a_k \frac{ \sum_{j=0}^k [(1-\alpha)(1+h)]^j }{\sum_{j=0}^k (1-\alpha)^j }.
\end{equation*}
To bound this ratio of finite geometric series, we compare it to the ratio of their infinite series limits. For positive $x,y \in (0,1)$, such that $x > y$, the relationship $(\sum_{j=0}^k x^j) / (\sum_{j=0}^k y^j) \le (1-y)/(1-x)$ holds. Applying this with $y = 1-\alpha$ and $x = (1-\alpha)(1+h)$ yields:
\begin{equation*}
    \frac{ \sum_{j=0}^k [(1-\alpha)(1+h)]^j }{ \sum_{j=0}^k (1-\alpha)^j } \le \frac{1 - (1-\alpha)}{1 - (1-\alpha)(1+h)} = \frac{\alpha}{\alpha - h + \alpha h}.
\end{equation*}
Because $h\le\alpha/4$ (as we assumed in the lemma statement) and $\alpha < 1$, we can find the maximum of this fraction by minimizing its denominator:
\begin{equation*}
    \alpha - h(1-\alpha) \ge \alpha - \frac{\alpha}{4}(1-\alpha) = \frac{3\alpha}{4} + \frac{\alpha^2}{4} \ge \frac{3\alpha}{4}.
\end{equation*}
Substituting this minimum back into the fraction gives the final upper bound:
\begin{equation*}
    \frac{\alpha}{\alpha-h+\alpha h} \le \frac{\alpha}{3\alpha/4} = \frac{4}{3}.
\end{equation*}
Similarly, we establish the lower bound using $\lambda_t \ge a_t/4$ and the inverse recursive bound $a_t \ge a_k(1+h)^{-(k-t)}$:
\begin{equation*}
    A_k \ge \frac{a_k}{4} \frac{ \sum_{j=0}^k [(1-\alpha)/(1+h)]^j }{ \sum_{j=0}^k (1-\alpha)^j }.
\end{equation*}
Here, the numerator's geometric base $z = (1-\alpha)/(1+h)$ is strictly less than the denominator's base $y = 1-\alpha$. Consequently, the finite series ratio is bounded below by its infinite counterpart $(1-y)/(1-z)$:
\begin{equation*}
    \frac{ \sum_{j=0}^k [(1-\alpha)/(1+h)]^j }{ \sum_{j=0}^k (1-\alpha)^j } \ge \frac{1 - (1-\alpha)}{1 - \frac{1-\alpha}{1+h}} = \frac{\alpha(1+h)}{\alpha+h}.
\end{equation*}
Consider the function $\phi(h) = \frac{\alpha(1+h)}{\alpha+h}$ on the interval $[0, \alpha/4]$. Its derivative with respect to $h$ is strictly negative, meaning the function is monotonically decreasing and must attain its absolute minimum at the upper boundary $h = \alpha/4$. Thus:
\begin{equation*}
    \frac{\alpha(1+h)}{\alpha+h} \ge \frac{\alpha(1+\alpha/4)}{\alpha+\alpha/4} = \frac{4+\alpha}{5} \ge \frac{4}{5}.
\end{equation*}
Plugging this scalar back into our initial expression yields:
\begin{equation*}
    A_k \ge \frac{a_k}{4} \left( \frac{4}{5} \right) = \frac{a_k}{5}.
\end{equation*}
Combining these bounds with
$\lambda_k\in[a_k/4,2a_k]$ (Lemma~\ref{lem:clipping_ray}) proves
\eqref{eq:A_comparable_a}.

\end{proof}

\subsection{Drift and sampling-noise bounds}

Recall that
\begin{equation*}
    w_{k,t}
    :=
    \frac{
        \alpha(1-\alpha)^{k-t}
    }{
        1-(1-\alpha)^{k+1}
    }.
\end{equation*}
We define
\begin{equation}
\label{eq:beta_w_relation}
    \beta_k
    :=
    \frac{\alpha}{
        1-(1-\alpha)^{k+1}
    } = w_{k,k}.
\end{equation}
Consequently, the weights satisfy the recursion:
\begin{equation}
\label{eq:w_beta_recursion}
    w_{k,t} = \left( \beta_t \prod_{i=t+1}^{k-1} (1-\beta_i) \right) (1-\beta_k) = w_{k-1,t} (1-\beta_k).
\end{equation}
Then
\[
    \overline m_0=\widehat g_0,
    \qquad
    \overline m_k
    = \frac{m_{k+1}}{1-(1-\alpha)^{k+1}} =
    (1-\beta_k)\overline m_{k-1}
    +
    \beta_k\widehat g_k.
\]
Moreover, $\beta_k\ge\alpha$, and after the run-in iterations
\begin{equation}
\label{eq:burnin_definition}
    k_{\rm b}
    :=
    \left\lceil
        \frac{\log2}{\alpha}
    \right\rceil,
\end{equation}
we have $\beta_k\le2\alpha$.

\noindent Let
\begin{equation*}
    N_k
    :=
    \sum_{t=0}^k
    w_{k,t}\varepsilon_t,
    \qquad
    P_k
    :=
    \sum_{t=0}^k
    w_{k,t}z_t,
\end{equation*}
and
\[
    D_k
    :=
    \sum_{t=0}^k
    w_{k,t}\lambda_t(g_t-g_k).
\]

\begin{lemma}[Useful Recursions]
\label{lem:anpd_recursions}

\begin{align*}
    A_k &= (1 - \beta_k)A_{k-1} + \beta_k \lambda_k, \\
    D_k &= (1 - \beta_k)D_{k-1} + (1-\beta_k)A_{k-1}(g_{k-1} - g_k), \\
    N_k &= (1 - \beta_k)N_{k-1} + \beta_k \varepsilon_k.
\end{align*}

\begin{proof}

From \eqref{eq:beta_w_relation} and \eqref{eq:w_beta_recursion}:
\begin{align*}
    A_k &= \sum_{t=0}^k w_{k,t} \lambda_t = \sum_{t=0}^{k-1} w_{k,t} \lambda_t + w_{k,k} \lambda_k = \sum_{t=0}^{k-1} (1-\beta_k) w_{k-1,t} \lambda_t + \beta_k \lambda_k \\
    &= (1-\beta_k)\left( \sum_{t=0}^{k-1} w_{k-1,t} \lambda_t \right) + \beta_k \lambda_k = (1-\beta_k)A_{k-1} + \beta_k \lambda_k, \\
    D_k &= \sum_{t=0}^{k} w_{k,t}\lambda_t(g_t - g_k) = \sum_{t=0}^{k-1} w_{k,t}\lambda_t(g_t - g_k) \\
    &\stackrel{\eqref{eq:w_beta_recursion}}{=} (1-\beta_k) \sum_{t=0}^{k-1} w_{k-1,t}\lambda_t (g_t - g_{k-1} + g_{k-1} - g_k) \\
    &= (1-\beta_k) \sum_{t=0}^{k-1} w_{k-1,t}\lambda_t(g_t - g_{k-1}) + (1-\beta_k) \left( \sum_{t=0}^{k-1} w_{k-1,t}\lambda_t \right) (g_{k-1} - g_k) \\
    &= (1-\beta_k)D_{k-1} + (1-\beta_k)A_{k-1}(g_{k-1} - g_k), \\
    N_k &= \sum_{t=0}^k w_{k,t} \varepsilon_t = \sum_{t=0}^{k-1} w_{k,t} \varepsilon_t + w_{k,k} \varepsilon_k = \sum_{t=0}^{k-1} (1-\beta_k) w_{k-1,t} \varepsilon_t + \beta_k \varepsilon_k = (1-\beta_k)N_{k-1} + \beta_k \varepsilon_k.
\end{align*}
    
\end{proof}
    
\end{lemma}

\begin{lemma}[Drift and sampling-noise bounds]
\label{lem:closed_form_tracking}
Under \eqref{eq:sufficient_scale_stability}, there exists a universal
constant $C_{\rm stoch}>0$ such that
\begin{equation}
\label{eq:drift_closed_form}
    \frac{\|D_k\|_\star}{A_k}
    \le
    \frac{32L\gamma}{3\alpha},
\end{equation}
and, for
$T:=K-k_{\rm b}>0$,
\begin{equation}
\label{eq:stochastic_closed_form}
    \frac1T
    \sum_{k=k_{\rm b}}^{K-1}
    \mathbb E
    \left[
        \frac{\|N_k\|_\star}{A_k}
    \right]
    \le
    C_{\rm stoch}
    \frac{\widetilde\sigma}{\sqrt B}
    \left(
        \sqrt\alpha
        +
        \frac1{\alpha T}
    \right).
\end{equation}
\end{lemma}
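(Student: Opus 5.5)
For the drift bound \eqref{eq:drift_closed_form}, I will unroll the recursion for $D_k$ in Lemma~\ref{lem:anpd_recursions} rather than work with the definition directly. Setting $D_0=0$ and iterating gives
\[
D_k=\sum_{s=1}^k \Bigl(\prod_{i=s}^k(1-\beta_i)\Bigr)A_{s-1}(g_{s-1}-g_s).
\]
Assumption~\ref{assum:smoothness} together with $\|d_{s-1}\|\le1$ gives $\|g_{s-1}-g_s\|_\star\le L\gamma$. Since $\beta_i\ge\alpha$, each product is at most $(1-\alpha)^{k-s+1}$.

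The remaining task is to compare $A_{s-1}$ with $A_k$. For this I will use Lemma~\ref{lem:ray_scale_comparability}. It gives $A_{s-1}\le \tfrac83 a_{s-1}$, and the ratio estimate from its proof gives $a_{s-1}\le a_k(1+h)^{k-s+1}$. Combining these with $a_k\le 5A_k$, I get
\[
\frac{A_{s-1}}{A_k}\le \tfrac{40}{3}(1+h)^{k-s+1}.
\]
Summing the resulting geometric series with ratio $(1-\alpha)(1+h)$, and using $1-(1-\alpha)(1+h)\ge 3\alpha/4$ as in that proof, produces a bound of order $L\gamma/\alpha$.

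The constant I want is $32/3=\tfrac83\cdot 4$. To get it, I will try to avoid the lossy factor $5$ coming from the lower bound on $A_k$. The idea is to first prove a direct ratio bound $A_{s-1}\le A_k(1+h)^{k-s+1}$ times a small constant, via the recursion $A_k=(1-\beta_k)A_{k-1}+\beta_k\lambda_k$. If that fails, I will simply accept a larger constant, since only the universal form matters downstream.

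For the sampling-noise bound \eqref{eq:stochastic_closed_form}, I first pass from the dual norm to the Euclidean norm: $\|N_k\|_\star\le\rho_1\|N_k\|_2$. The main difficulty is the division by $A_k$, which is random and depends on the trajectory, so it cannot be pulled through the expectation. My plan is to normalize the noise at each step. Define
\[
\tilde\varepsilon_t:=\varepsilon_t/A_k,
\]
but since this is not adapted, I will instead write
\[
\frac{N_k}{A_k}=\sum_t w_{k,t}\frac{A_t}{A_k}\cdot\frac{\varepsilon_t}{A_t}.
\]
Here $\varepsilon_t/A_t$ is a martingale difference with respect to $\mathcal F_t$, because $A_t$ depends only on $\lambda_t$, which is $\mathcal F_t$-measurable. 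Its conditional second moment is at most $320\cdot 10^2\sigma^2/B$, by \eqref{eq:centered_clipped_noise_properties} and $\lambda_t/A_t\le10$.

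The ratio $A_t/A_k$ is deterministically at most $\tfrac{40}{3}(1+h)^{k-t}$, but it is not adapted. To handle this, I will bound
\[
\Bigl\|\sum_t w_{k,t}\tfrac{A_t}{A_k}u_t\Bigr\|_2 \le \sum_t w_{k,t}\tfrac{A_t}{A_k}\,\|u_t\|_2\cdots
\]
This triangle inequality loses the martingale cancellation, so it is too weak. The better route, and the step I expect to be the main obstacle, is to run the recursion on the normalized quantity $Y_k:=N_k/A_k$. Dividing $N_k=(1-\beta_k)N_{k-1}+\beta_k\varepsilon_k$ by $A_k$ gives
\[
Y_k=(1-\beta_k)\frac{A_{k-1}}{A_k}Y_{k-1}+\beta_k\frac{\varepsilon_k}{A_k}.
\]
The coefficient $A_{k-1}/A_k$ is $\mathcal F_k$-measurable, because $A_k$ depends on $\lambda_k$, which is determined by $x_k$. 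Therefore the cross term vanishes in conditional expectation, and
\[
\mathbb E\|Y_k\|_2^2\le (1-\beta_k)^2\Bigl(\frac{A_{k-1}}{A_k}\Bigr)^2\mathbb E\|Y_{k-1}\|_2^2+\beta_k^2 C\sigma^2/B.
\]
Next I need $(1-\beta_k)A_{k-1}/A_k\le 1-\alpha/2$ or similar. From the recursion, $A_{k-1}/A_k=(1-\beta_k\lambda_k/A_k)/(1-\beta_k)$, hence $(1-\beta_k)A_{k-1}/A_k=1-\beta_k\lambda_k/A_k\le 1-\tfrac{3}{32}\beta_k$. This is exactly what the ratio bound $\lambda_k/A_k\ge 3/32$ in Lemma~\ref{lem:ray_scale_comparability} provides.

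Solving the resulting linear recursion, with $\alpha\le\beta_k\le 2\alpha$ for $k\ge k_{\rm b}$, gives a stationary part of order $\alpha\sigma^2/B$ plus a transient part. Jensen's inequality then converts the second-moment bound into $\mathbb E\|Y_k\|_2\lesssim\sigma(\sqrt\alpha+(1-c\alpha)^{k}\cdot\text{init})/\sqrt B$. The initial quantity $Y_0=\varepsilon_0/A_0$ has size $\sigma/\sqrt B$. Averaging the transient $(1-c\alpha)^{k}$ over $k\in[k_{\rm b},K)$ yields $\lesssim 1/(\alpha T)$. Multiplying by $\rho_1$ gives \eqref{eq:stochastic_closed_form} with $\widetilde\sigma=\rho_1\sigma$.
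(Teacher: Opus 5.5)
Your sampling-noise argument is essentially the paper's. You run the recursion on $N_k/A_k$ and kill the cross term because $A_{k-1}/A_k$ and $N_{k-1}$ are $\mathcal F_k$-measurable. You bound the coefficient deterministically by $q_k=(1-\beta_k)A_{k-1}/A_k=1-\beta_k\lambda_k/A_k\le 1-\tfrac{3}{32}\beta_k$, unroll, and finish with Jensen and $\|\cdot\|_\star\le\rho_1\|\cdot\|_2$. One step to make precise is the run-in phase, where $\beta_k$ is not $O(\alpha)$. The paper uses $\beta_k^2\le\beta_k$ to get the uniform bound $\mathbb E[\|N_k\|_2^2/A_k^2]\lesssim\sigma^2/B$ for all $k$, and then unrolls from $k_{\rm b}$; this is what your ``transient part'' needs.

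The drift bound, however, is not proved as stated. Your unrolled route through the unsmoothed factors $a_t$ is valid, but it gives $\frac{40}{3}\cdot\frac{4}{3}\cdot\frac{L\gamma}{\alpha}=\frac{160}{9}\frac{L\gamma}{\alpha}$, not $\frac{32}{3}\frac{L\gamma}{\alpha}$. Your fallback comparison $A_{s-1}\lesssim A_k(1+h)^{k-s+1}$ is left unproven. It is also not true that only the order matters downstream. The constant $\frac{32}{3}$ becomes $\frac{64}{3}$ in Theorem~\ref{thm:dp_muon_closed_form}, and Theorem~\ref{thm:main} relies on $1+\frac{128}{3}\le 45$, which fails with $\frac{160}{9}$.

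The missing step is the identity you derive yourself for the noise term. Do not separate the factors $(1-\beta_i)$ from the ratio $A_{s-1}/A_k$; telescope them together:
\[
\prod_{i=s}^{k}(1-\beta_i)\,\frac{A_{s-1}}{A_k}=\prod_{i=s}^{k}\frac{(1-\beta_i)A_{i-1}}{A_i}=\prod_{i=s}^{k}q_i\le\Bigl(1-\frac{3\alpha}{32}\Bigr)^{k-s+1}.
\]
This gives $\|D_k\|_\star/A_k\le L\gamma\sum_{j\ge1}\bigl(1-\tfrac{3\alpha}{32}\bigr)^{j}\le\frac{32L\gamma}{3\alpha}$. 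Equivalently, as in the paper, $\|D_k\|_\star/A_k\le q_k\bigl(\|D_{k-1}\|_\star/A_{k-1}+L\gamma\bigr)$ with $D_0=0$. The constant is $1/(1-\bar q)$ for $\bar q=1-\tfrac{3\alpha}{32}$, and no comparison of $A_{s-1}$ with $A_k$ through the $a_t$ is needed.
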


\begin{proof}

 Consider the recursions of Lemma~\ref{lem:anpd_recursions}:

\begin{equation*}
    D_k = (1-\beta_k)D_{k-1} + (1-\beta_k)A_{k-1}(g_{k-1} - g_k).
\end{equation*}

\begin{equation*}
    A_k = (1-\beta_k)A_{k-1} + \beta_k \lambda_k.
\end{equation*}
Now we can define:
\begin{align*}
    q_k
    :=
    \frac{
        (1-\beta_k)A_{k-1}
    }{
        A_k
    }
    =
    1-\beta_k\frac{\lambda_k}{A_k}.
\end{align*}
By
\eqref{eq:A_comparable_a} and burn in ($\beta_k\ge\alpha$),
\begin{equation*}
    0\le q_k
    \le
    1-\frac{3\alpha}{32}
    =:\overline q.
\end{equation*}
Therefore,
\begin{align*}
    \frac{\|D_k\|_\star}{A_k} &= \frac{\|(1-\beta_k)D_{k-1} + (1-\beta_k)A_{k-1}(g_{k-1} - g_k)\|_\star}{A_k} \\
    &\le q_k \frac{\|D_{k-1}\|_\star}{A_{k-1}} + q_k \|g_{k-1} - g_k\|_\star \\
    &\le \overline q
    \frac{\|D_{k-1}\|_\star}{A_{k-1}}
    + \overline q L\gamma \\
    &\le \sum_{i=1}^{k}\overline{q}^i L \gamma + \overline{q}^{k}\frac{\|D_{0}\|_\star}{A_{0}} \\
    &\le L \gamma \frac{1}{1-\overline{q}} + \overline{q}^{k}\frac{\|D_{0}\|_\star}{A_{0}}. 
\end{align*}
Since $D_0=0$, summing the geometric series proves
\eqref{eq:drift_closed_form}.

\noindent For the sampling noise, Lemma~\ref{lem:anpd_recursions} gives
\begin{equation*}
    \frac{N_k}{A_k} = \frac{(1-\beta_k)N_{k-1} + \beta_k \varepsilon_k}{A_k} = \frac{(1-\beta_k)A_{k-1}}{A_k} \frac{N_{k-1}}{A_{k-1}} + \frac{\beta_k}{A_k} \varepsilon_k = q_k \frac{N_{k-1}}{A_{k-1}} + \frac{\beta_k}{A_k} \varepsilon_k.
\end{equation*}
Next, we expand the squared $L_2$ norm of this expression:
\begin{equation*}
    \frac{\|N_k\|_2^2}{A_k^2} = \left\| q_k \frac{N_{k-1}}{A_{k-1}} + \frac{\beta_k}{A_k} \varepsilon_k \right\|_2^2 = q_k^2 \frac{\|N_{k-1}\|_2^2}{A_{k-1}^2} + \frac{\beta_k^2}{A_k^2} \|\varepsilon_k\|_2^2 + 2 q_k \frac{\beta_k}{A_k} \left\langle \frac{N_{k-1}}{A_{k-1}}, \varepsilon_k \right\rangle.
\end{equation*}
We now take the conditional expectation:
\begin{align*}
    \mathbb{E}\left[ \left\langle \frac{N_{k-1}}{A_{k-1}}, \varepsilon_k \right\rangle \mathrel{\Bigg|} \mathcal{F}_k \right] &= \left\langle \frac{N_{k-1}}{A_{k-1}}, \mathbb{E}[\varepsilon_k \mid \mathcal{F}_k] \right\rangle = 0.
\end{align*}
This factorization is valid because the past accumulated noise $N_{k-1}$ and the denominator $A_{k-1}$ are $\mathcal{F}_k$-measurable.

Hence using $\mathbb E[\|\varepsilon_k\|_2^2 \mid \mathcal{F}_k] \le \frac{320\lambda_k^2\sigma^2}{B}$ \eqref{eq:centered_clipped_noise_properties} and
$\lambda_k/A_k\le10$ \eqref{eq:A_comparable_a}:
\begin{equation*}
    \mathbb E\left[
        \frac{\|N_k\|_2^2}{A_k^2}
        \,\middle|\,
        \mathcal{F}_k
    \right]
    \le
    q_k^2
    \frac{\|N_{k-1}\|_2^2}{A_{k-1}^2}
    + \frac{\beta_k^2}{A_k^2} \frac{320\lambda_k^2\sigma^2}{B}
    \le
    q_k^2
    \frac{\|N_{k-1}\|_2^2}{A_{k-1}^2}
    +
    32000\beta_k^2\frac{\sigma^2}{B},
\end{equation*}
Now using \eqref{eq:A_comparable_a} and $q_k\in[0,1]$,
\begin{equation*}
    q_k^2\le q_k
    =
    1-\beta_k\frac{\lambda_k}{A_k}
    \le
    1-\frac{3\beta_k}{32},
\end{equation*}
which is a deterministic bound. Let $X_k:=\mathbb E\bigl[\|N_k\|_2^2/A_k^2\bigr]$, where the expectation is over the mini-batch sampling (conditionally on the Gaussian noise sequence, which is independent of the batches; the argument is identical for the total expectation). Taking expectations of the previous conditional inequality gives
\begin{equation*}
    X_k
    \le
    \Bigl(1-\frac{3\beta_k}{32}\Bigr)X_{k-1}
    +
    32000\,\beta_k^2\frac{\sigma^2}{B},
    \qquad
    X_0=\mathbb E\frac{\|\varepsilon_0\|_2^2}{\lambda_0^2}\le\frac{320\sigma^2}{B}.
\end{equation*}
Since $\beta_k\le1$, induction yields the uniform bound $X_k\le C_0\sigma^2/B$ with $C_0:=32000\cdot\frac{32}{3}$. After the run-in, $\alpha\le\beta_k\le2\alpha$, so unrolling from $k_{\rm b}$ gives
\[
    X_k
    \le
    C_0
    \left(
        1-\frac{3\alpha}{32}
    \right)^{k-k_{\rm b}}
    \frac{\sigma^2}{B}
    +
    C_1\alpha\frac{\sigma^2}{B},
    \qquad
    C_1:=4\cdot32000\cdot\frac{32}{3},
    \qquad k\ge k_{\rm b}.
\]
By Jensen's inequality, $\|v\|_\star\le\rho_1\|v\|_2$, and $\sqrt{x + y} \le \sqrt{x} + \sqrt{y}$,
\[
\mathbb E\left[
        \frac{\|N_k\|_\star}{A_k}
    \right] \le \frac{\rho_1\sigma}{\sqrt{B}} \left( \sqrt{C_0} \left( 1-\frac{3\alpha}{32} \right)^{\frac{k-k_{\rm b}}{2}} + \sqrt{C_1}\sqrt{\alpha} \right).
\]
Summing over $k=k_{\rm b},\dots,K-1$ and using $\sum_{j\ge0}r^{j/2}=\frac{1}{1-\sqrt r}\le\frac{2}{1-r}=\frac{64}{3\alpha}$ for $r=1-\frac{3\alpha}{32}$ gives
\[
    \frac1T
    \sum_{k=k_{\rm b}}^{K-1}
    \mathbb E\left[
        \frac{\|N_k\|_\star}{A_k}
    \right]
    \le
    C_{\rm stoch}
    \frac{\rho_1\sigma}{\sqrt B}
    \left(
        \sqrt\alpha
        +
        \frac1{\alpha T}
    \right)
\]
for a universal $C_{\rm stoch}$. This is \eqref{eq:stochastic_closed_form}.
\end{proof}

\subsection{Privacy-noise bound}
\label{app:gaussian_dp_tracking}

Let $Z\sim\mathcal N(0,I)$. The map $z\mapsto\|z\|_\star$ is $\rho_1$-Lipschitz with respect to $\|\cdot\|_2$. The Gaussian component of the bias-corrected momentum is $P_k = \sum_{t=0}^k w_{k,t}z_t$. Because it is a linear combination of independent Gaussian vectors, $P_k$ is Gaussian with covariance $\operatorname{Cov}(P_k) = c^2\sigma_{\mathrm{DP}}^2 \big( \sum_{t=0}^k w_{k,t}^2 \big)I$.

To evaluate this variance multiplier, we explicitly sum the squares of the normalized exponential moving average weights $w_{k,t} = \frac{\alpha(1-\alpha)^{k-t}}{1-(1-\alpha)^{k+1}}$:
\begin{align*}
    \sum_{t=0}^k w_{k,t}^2
    &=
    \frac{\alpha^2}{(1-(1-\alpha)^{k+1})^2} \sum_{t=0}^k (1-\alpha)^{2(k-t)} \nonumber \\
    &=
    \frac{\alpha^2}{(1-(1-\alpha)^{k+1})^2} \frac{1-(1-\alpha)^{2(k+1)}}{1-(1-\alpha)^2} \nonumber \\
    &=
    \frac{\alpha^2}{(1-(1-\alpha)^{k+1})^2} \frac{(1-(1-\alpha)^{k+1})(1+(1-\alpha)^{k+1})}{\alpha(2-\alpha)} \nonumber \\
    &=
    \frac{\alpha}{2-\alpha}
    \frac{
        1+(1-\alpha)^{k+1}
    }{
        1-(1-\alpha)^{k+1}
    }.
\end{align*}
For $k\ge k_{\rm b} =\lceil \frac{\log2}{\alpha}\rceil$, we have $(1-\alpha)^{k+1}\le\frac12$, so the right-hand side is at most $3\alpha$. Therefore $P_k\overset{d}{=}s_kZ$ with $Z\sim\mathcal N(0,I)$ and $s_k\le\sqrt{3\alpha}\,c\sigma_{\mathrm{DP}}$, and $\|P_k\|_\star$ is stochastically dominated by $\sqrt{3\alpha}\,c\sigma_{\mathrm{DP}}\|Z\|_\star$.

To bound the dual norm of this noise, we apply the Tsirelson-Ibragimov-Sudakov concentration inequality. For the standard Gaussian vector $Z$ and the $\rho_1$-Lipschitz mapping $z \mapsto \|z\|_\star$, the one-sided tail bound is:
\begin{equation*}
    \mathbb{P}\Big( \|Z\|_\star - \mathbb E\|Z\|_\star \ge t \Big) \le \exp\left(-\frac{t^2}{2\rho_1^2}\right).
\end{equation*}
To guarantee a global failure probability of at most $\eta \in (0,1)$ across the $T$ tail iterations, we allocate a failure budget of $\eta/T$ to each step. Setting the right-hand side to $\eta/T$ yields the deviation threshold $t = \rho_1 \sqrt{2\log(T/\eta)}$. Scaling the expectation and the deviation threshold by the standard-deviation bound $\sqrt{3\alpha}\,c\sigma_{\mathrm{DP}}$ yields the momentum-filtered DP-noise scale:

\begin{equation*}
    H_{T,\eta}:= \mathbb E\|Z\|_\star
        +
        \rho_1\sqrt{
            2\log\left(\frac{T}{\eta}\right)
        }, \quad Z\sim\mathcal N(0,I),
\end{equation*}

\begin{equation}
\label{eq:nu_dp}
    \nu_{\rm DP}(\eta)
    :=
    c\sigma_{\mathrm{DP}}\sqrt{3\alpha}\,
    H_{T,\eta},
\end{equation}
This ensures that for any single step $k$, $\mathbb{P}\big( \|P_k\|_\star \ge \nu_{\rm DP}(\eta) \big) \le \frac{\eta}{T}$. Applying a union bound over the $T$ tail iterations guarantees that with probability at least $1-\eta$ over the Gaussian vectors $\{z_k\}$, the concentration holds:
\begin{equation}
\label{eq:uniform_dp_event}
    \|P_k\|_\star
    \le
    \nu_{\rm DP}(\eta)
    \qquad
    \forall k\in\{k_{\rm b},\ldots,K-1\}.
\end{equation}

Under \eqref{eq:A_comparable_a}, the smoothed normalization factor is bounded as $A_k \ge \frac{a_k}{5}$, which implies $\frac{1}{A_k} \le 5 \max\big\{1, \frac{\|g_k\|_2}{c}\big\}$. Consequently, on the event \eqref{eq:uniform_dp_event}, using the algebraic inequality $\max\{1, x\} \le 1 + x$ and the norm equivalence $\|g_k\|_2 \le \rho_2 \|g_k\|_\star$, we can bound:
\begin{align}
\label{eq:dp_projective_bound}
    \mathcal P_k
    =
    \frac{\|P_k\|_\star}{A_k}
    &\le
    5\nu_{\rm DP}(\eta)
    \max\left\{
        1,\frac{\|g_k\|_2}{c}
    \right\}
\nonumber\\
    &\le
    5\nu_{\rm DP}(\eta) \left( 1 + \frac{\|g_k\|_2}{c} \right)
\nonumber\\
    &\le
    5\nu_{\rm DP}(\eta)
    +
    5\rho_2
    \frac{\nu_{\rm DP}(\eta)}{c}
    \|g_k\|_\star.
\end{align}

\subsection{Closed-form high-probability bound}
\label{app:dp_muon_final_theorem}

The first $k_{\rm b}$ iterations can be controlled deterministically.
For every lower-bounded $L$-smooth function,
\begin{equation*}
    \|\nabla f(x)\|_\star^2
    \le
    2L(f(x)-f^\star).
\end{equation*}

Since every update has norm at most $\gamma$,
\[
    f(x_{k+1})-f^\star
    \le
    \left(
        \sqrt{f(x_k)-f^\star}
        +
        \gamma\sqrt{L/2}
    \right)^2.
\]
Therefore,
\begin{equation}
\label{eq:burnin_gap}
    f(x_{k_{\rm b}})-f^\star
    \le
    \overline\Phi
    :=
    \left(
        \sqrt{\Phi_0}
        +
        k_{\rm b}\gamma\sqrt{L/2}
    \right)^2.
\end{equation}

\begin{theorem}[Closed-form convergence of DP-Muon]
\label{thm:dp_muon_closed_form}
Assume
Assumptions~\ref{assum:smoothness}--\ref{assum:stochastic_oracle}
and the sufficient ray-scale condition
\eqref{eq:sufficient_scale_stability}. Let
$k_{\rm b}$ be defined by
\eqref{eq:burnin_definition}, and let
$T:=K-k_{\rm b}>0$.
Then, with probability at least $1-\eta$ over the Gaussian privacy
noise,
\begin{equation}
\label{eq:dp_muon_high_probability}
\boxed{
\begin{aligned}
    &
    \left(
        1-
        10\rho_2
        \frac{\nu_{\rm DP}(\eta)}{c} 
    \right)
    \frac1T
    \sum_{k=k_{\rm b}}^{K-1}
    \mathbb E_{S}\|g_k\|_\star
\\
    &\qquad\le
    \frac{\overline\Phi}{\gamma T}
    +
    \frac{L\gamma}{2}
    +
    \frac{64 L\gamma}{3\alpha}
    +
    32\widetilde b_c
\\
    &\qquad\quad+
    C_{\rm stoch}
    \frac{\widetilde\sigma}{\sqrt B}
    \left(
        \sqrt\alpha
        +
        \frac1{\alpha T}
    \right)
    +
    10\nu_{\rm DP}(\eta).
\end{aligned}
}
\end{equation}
Here $\mathbb E_S$ denotes expectation over stochastic-gradient
sampling conditional on the Gaussian-noise realization.
\end{theorem}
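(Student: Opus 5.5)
The proof reruns the argument of Theorem~\ref{thm:projective_master}, but only over the tail window $k\in\{k_{\rm b},\dots,K-1\}$ and on the high-probability event~\eqref{eq:uniform_dp_event}; the run-in iterations are handled by~\eqref{eq:burnin_gap}. First fix a realization of the Gaussian noise in the event~\eqref{eq:uniform_dp_event}, which has probability at least $1-\eta$. All expectations below are $\mathbb E_S$ over the mini-batches conditional on this realization. This is legitimate because the batches are drawn independently of the $z_t$. So the martingale structure~\eqref{eq:centered_clipped_noise_properties} of $\varepsilon_t$ is unaffected, and Lemma~\ref{lem:closed_form_tracking} holds conditionally, as already remarked in its proof.

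For each $k\ge k_{\rm b}$, combine the smoothness step with Lemma~\ref{lem:projective_lmo}, using $\lambda=A_k$ and the decomposition of $\overline m_k$ from the proof of Theorem~\ref{thm:projective_master}. This gives
\[
f(x_{k+1})\le f(x_k)-\gamma\|g_k\|_\star+\gamma\Bigl(32\widetilde b_c+2\frac{\|D_k\|_\star}{A_k}+2\frac{\|N_k\|_\star}{A_k}+2\frac{\|P_k\|_\star}{A_k}\Bigr)+\frac{L\gamma^2}{2}.
\]
Bound the three ratios as follows.
\begin{itemize}
\item \emph{Drift.} The bound~\eqref{eq:drift_closed_form} holds deterministically under~\eqref{eq:sufficient_scale_stability}, via Lemma~\ref{lem:ray_scale_comparability}. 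After doubling it contributes $\frac{64L\gamma}{3\alpha}$.
\item \emph{Privacy noise.} Use~\eqref{eq:dp_projective_bound}. After doubling it contributes $10\nu_{\rm DP}(\eta)+10\rho_2\frac{\nu_{\rm DP}(\eta)}{c}\|g_k\|_\star$.
\item \emph{Sampling noise.} Keep $\frac{\|N_k\|_\star}{A_k}$ inside the expectation.
\end{itemize}

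Sum from $k=k_{\rm b}$ to $K-1$ and take $\mathbb E_S$. Telescope, using $f(x_K)\ge f^\star$ and $f(x_{k_{\rm b}})-f^\star\le\overline\Phi$ from~\eqref{eq:burnin_gap}. The bound $\overline\Phi$ is deterministic because every step has norm at most $\gamma$, so it holds for every realization. Then divide by $\gamma T$. Insert~\eqref{eq:stochastic_closed_form} for the averaged sampling term; after doubling it yields $C_{\rm stoch}\frac{\widetilde\sigma}{\sqrt B}(\sqrt\alpha+\frac1{\alpha T})$, where the factor $2$ is absorbed into the universal constant. Finally, move the term $10\rho_2\frac{\nu_{\rm DP}}{c}\frac1T\sum\mathbb E_S\|g_k\|_\star$ to the left-hand side. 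This produces the prefactor $1-10\rho_2\nu_{\rm DP}(\eta)/c$ and exactly~\eqref{eq:dp_muon_high_probability}.

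The main obstacle is a subtlety rather than a computation. Lemma~\ref{lem:closed_form_tracking} and Lemma~\ref{lem:ray_scale_comparability} are stated for the full process, but here we condition on the Gaussian event, and conditioning on that event could in principle bias the batch noise. I would handle this in two steps. First, condition on the entire noise sequence $\{z_t\}$, which is independent of the $\{S_t\}$. Given this sequence, $\varepsilon_t$ is still conditionally centered with respect to the enlarged filtration $\sigma(\mathcal F_t,\{z_s\}_s)$, because $S_t$ is independent of it. Hence the recursion for $X_k$ goes through pointwise in $\{z_t\}$. Second, restrict to realizations inside the event. The $A_k$ bounds are deterministic, so they need no probabilistic care, and the overall statement holds on an event of probability at least $1-\eta$.
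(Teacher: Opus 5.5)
Your proposal is correct and follows essentially the same route as the paper: rerun the master descent argument on the tail window, using the deterministic run-in gap $\overline\Phi$, the deterministic drift bound, the conditional sampling-noise bound of Lemma~\ref{lem:closed_form_tracking}, and \eqref{eq:dp_projective_bound} on the Gaussian event, then move the $\|g_k\|_\star$-proportional privacy term to the left-hand side. Doubling the sampling term strictly gives $2C_{\rm stoch}$, and absorbing that factor into the universal constant is exactly what the paper does implicitly; your explicit conditioning on the whole noise sequence spells out the paper's remark that the batches are independent of the $z_t$.
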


\begin{proof}
Apply Theorem~\ref{thm:projective_master} to the tail
$k=k_{\rm b},\ldots,K-1$, replacing the initial function gap by
\eqref{eq:burnin_gap}. Use
Lemma~\ref{lem:closed_form_tracking} for
$D_k$ and $N_k$. On the Gaussian event
\eqref{eq:uniform_dp_event}, use
\eqref{eq:dp_projective_bound}. The projective LMO lemma contributes a
factor $2$ in front of each tracking term, which yields the coefficient
$10\rho_2\nu_{\rm DP}(\eta)/c$ on the left and the additive
$10\nu_{\rm DP}(\eta)$ on the right. Summing the smoothness
inequality and dividing by $\gamma T$ proves
\eqref{eq:dp_muon_high_probability}.
\end{proof}

\begin{corollary}[Clean high-probability form]
\label{cor:dp_muon_clean_hp}
If, in addition,
\begin{equation*}
    10\rho_2
    \frac{\nu_{\rm DP}(\eta)}{c}
    \le
    \frac12.
\end{equation*}
then
\begin{align*}
    \frac1T
    \sum_{k=k_{\rm b}}^{K-1}
    \mathbb E_S\|g_k\|_\star
    =
    \mathcal O\Bigg(
        &
        \frac{\overline\Phi}{\gamma T}
        +
        \frac{L\gamma}{\alpha}
        +
        \widetilde b_c
        + \frac{\widetilde\sigma}{\sqrt B}
        \left(
            \sqrt\alpha
            +
            \frac1{\alpha T}
        \right)
        +
    c\sigma_{\mathrm{DP}}\sqrt{\alpha}\,
    H_{T,\eta}
    \Bigg).
\end{align*}
\end{corollary}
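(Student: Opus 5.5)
The plan is to derive this directly from Theorem~\ref{thm:dp_muon_closed_form}, which holds on the same Gaussian event \eqref{eq:uniform_dp_event} of probability at least $1-\eta$. On that event, the additional hypothesis $10\rho_2\nu_{\rm DP}(\eta)/c\le\frac12$ makes the prefactor on the left-hand side of \eqref{eq:dp_muon_high_probability} at least $\frac12$. We then divide both sides by this prefactor, which at most doubles every term on the right.

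Next we collect terms. The two stepsize contributions $\frac{L\gamma}{2}$ and $\frac{64L\gamma}{3\alpha}$ are both $\mathcal O(L\gamma/\alpha)$, since $\alpha\in(0,1)$ gives $\frac{L\gamma}{2}\le\frac{L\gamma}{2\alpha}$. The term $32\widetilde b_c$ is $\mathcal O(\widetilde b_c)$. The sampling-noise term already has the stated form with the universal constant $C_{\rm stoch}$. The remaining term is $10\nu_{\rm DP}(\eta)$, and unfolding \eqref{eq:nu_dp} gives $10\sqrt3\,c\sigma_{\rm DP}\sqrt\alpha\,H_{T,\eta}$. All constants are universal, independent of $L,\gamma,\alpha,c,\sigma,B,T,\eta$, so they can be absorbed into the $\mathcal O(\cdot)$.

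The only point requiring care is that the division is legitimate pointwise on the event, because the prefactor is deterministic there. We also should not mix the probability over the privacy noise with the $\mathbb E_S$ expectation. Since $\mathbb E_S$ is conditional on the Gaussian realization, the inequality holds for every realization in the event, and the conclusion inherits the same $1-\eta$ guarantee. Beyond this, the proof is pure bookkeeping and I anticipate no real obstacle.
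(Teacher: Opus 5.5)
Your proposal is correct and follows the paper's argument: the paper obtains this corollary from Theorem~\ref{thm:dp_muon_closed_form} by the same steps. The added condition makes the left-hand prefactor at least $\frac12$, so the right-hand side is doubled; then $\frac{L\gamma}{2}+\frac{64L\gamma}{3\alpha}$ is absorbed into $\mathcal O(L\gamma/\alpha)$ using $\alpha\le1$, and $10\nu_{\rm DP}(\eta)=10\sqrt3\,c\sigma_{\rm DP}\sqrt\alpha\,H_{T,\eta}$ is absorbed into the last term.
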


Theorem~\ref{thm:main} in the main text is Corollary~\ref{cor:dp_muon_clean_hp} with explicit constants: under $10\rho_2\nu_{\rm DP}(\eta)/c\le\frac12$, the left-hand side of~\eqref{eq:dp_muon_high_probability} is at least half the average, and $L\gamma+\frac{128L\gamma}{3\alpha}\le\frac{45L\gamma}{\alpha}$ for $\alpha\le1$.

\section{Rates}
\label{app:complexity}

Throughout this section, $H_{T,\eta}=\mathbb E\|Z\|_\star+\rho_1\sqrt{2\log(T/\eta)}$, so that $\nu_{\rm DP}(\eta)=\sqrt{3\alpha}\,c\,\sigma_{\rm DP}H_{T,\eta}$, and the two conditions of Theorem~\ref{thm:main} read
\begin{equation}
\label{eq:two_conditions}
    \text{(stability)}\quad \gamma\le\frac{c\,\alpha}{4\rho_2L},
    \qquad\qquad
    \text{(admissibility)}\quad \alpha\le\frac{1}{1200\,\rho_2^2\,\sigma_{\rm DP}^2H_{T,\eta}^2}.
\end{equation}
In addition, $T=K-k_{\rm b}>0$ requires $\alpha=\Omega(1/K)$.

\subsection{Fixed noise scale}
\label{sec:simplified_analysis}

Fix $c=\Theta(1)$ and $\sigma_{\rm DP}=\Theta(1)$, independent of $K$; then $\widetilde b_c=\Theta(1)$ is a constant floor.
Balancing $\overline\Phi/(\gamma T)$ against $L\gamma/\alpha$ gives $\gamma_\star=\sqrt{\overline\Phi\alpha/(LT)}$, and with $T\ge K/2$ the bound of Theorem~\ref{thm:main} becomes
\begin{equation}
\label{eq:complexity_alpha_simplified}
    \frac1T \sum_{k=k_{\rm b}}^{K-1} \mathbb E_S\|\nabla f(x_k)\|_\star
    =
    \mathcal O\Bigg(
        \sqrt{\frac{L\overline\Phi}{K\alpha}}
        +
        \widetilde b_c
        +
        \Bigl( \frac{\widetilde\sigma}{\sqrt B} + c\,\sigma_{\mathrm{DP}}H_{T,\eta} \Bigr) \sqrt\alpha
        +\frac{\widetilde\sigma}{\sqrt B\,\alpha K}
    \Bigg).
\end{equation}
Since $H_{T,\eta}=\Theta(\sqrt{\log K})$ for fixed dimension and $\eta$, balancing $1/\sqrt{K\alpha}$ against $\sqrt{\alpha\log K}$ gives
\begin{equation*}
    \alpha_\star = \Theta\left( \frac{1}{\sqrt{K \log K}} \right),
    \qquad
    \gamma_\star = \Theta\left( \frac{1}{K^{3/4} (\log K)^{1/4}} \right),
\end{equation*}
and the last term in~\eqref{eq:complexity_alpha_simplified} is then of lower order.
These choices are admissible for $K$ large enough: $\alpha_\star\gg1/K$; $\gamma_\star/\alpha_\star=\Theta(K^{-1/4}(\log K)^{1/4})\to0$, so the stability condition holds; and $\alpha_\star H_{T,\eta}^2=\Theta(\sqrt{\log K/K})\to0$, so the admissibility condition holds.
Explicitly, admissibility requires $\sqrt{K\log K}\gtrsim\rho_2^2\sigma_{\rm DP}^2H_{T,\eta}^2$, that is, $K\gtrsim\rho_2^4\sigma_{\rm DP}^4H_{T,\eta}^4$ up to logarithms, which is of order $d^2\sigma_{\rm DP}^4$ for the $\ell_2$ geometry and $\min\{m,n'\}^4\max\{m,n'\}^2\sigma_{\rm DP}^4$ for the spectral geometry; the rate below describes the regime beyond this horizon.
Moreover, $k_{\rm b}\gamma_\star\lesssim\gamma_\star/\alpha_\star\to0$, so $\overline\Phi\to\Phi_0$ and treating $\overline\Phi$ as a constant is self-consistent.
Substituting $\alpha_\star$ into~\eqref{eq:complexity_alpha_simplified} gives the first part of Corollary~\ref{cor:rates}:
\begin{equation*}
    \frac1T \sum_{k=k_{\rm b}}^{K-1} \mathbb E_S\|\nabla f(x_k)\|_\star
    =
    \mathcal{O}\left( \frac{(\log K)^{1/4}}{K^{1/4}} \right) + \mathcal O(\widetilde{b}_c).
\end{equation*}

\subsection{Noise calibrated to a fixed privacy budget}
\label{app:calibrated}

We now fix $(\varepsilon,\delta)$ and set $\sigma_{\rm DP}=c_2\sqrt{K\log(1/\delta)}/(n\varepsilon)$ with the constants of the fixed-size calibration~\eqref{eq:dp_noise}, so that Algorithm~\ref{alg:dpmuon_} is $(\varepsilon,\delta)$-DP whenever $\varepsilon<c_1q^2K$ (Proposition~\ref{prop:privacy}) and the noise scale grows with $K$.
Write $\ell_\delta:=\log(1/\delta)$ and
\[
    D:=\frac{c\,c_2\sqrt{\ell_\delta}\,H}{n\varepsilon},
    \qquad H:=\mathbb E\|Z\|_\star+\rho_1\sqrt{2\log(K/\eta)}\ \ge H_{T,\eta},
\]
so that $\nu_{\rm DP}(\eta)\le\sqrt3\,D\sqrt{\alpha K}$.
The key observation is that, with calibrated noise, the privacy term depends on $\alpha$ and $K$ only through the effective horizon $s:=\alpha K$.

\begin{corollary}[Calibrated rate]
\label{cor:calibrated}
Let $\Phi_0=f(x_0)-f^\star>0$, let $K$ be an integer, choose $s_\star:=\sqrt{L\Phi_0}/D$, $\alpha=s_\star/K$, and $\gamma=\sqrt{\Phi_0\alpha/(LK)}$, and suppose that
\begin{equation}
\label{eq:calibrated_regime}
    D\ \le\ \min\Bigl\{\frac{c^2}{1200\,\rho_2^2\sqrt{L\Phi_0}},\ \frac{\sqrt{L\Phi_0}}{4}\Bigr\}
\end{equation}
and $K\ge\max\{2s_\star,\ s_\star\widetilde\sigma^2/(B\sqrt{L\Phi_0}D)\}$, where $D$ and $s_\star$ depend on $K$ through $H$. Then, with probability at least $1-\eta$ over the privacy noise,
\[
    \frac1T \sum_{k=k_{\rm b}}^{K-1} \mathbb E_S\|\nabla f(x_k)\|_\star
    \ \lesssim\
    (L\Phi_0)^{1/4}\sqrt{D}\;+\;\frac{\widetilde\sigma\,D}{\sqrt{B\,L\Phi_0}}\;+\;\widetilde b_c,
\]
where $\lesssim$ hides a universal constant. In particular, the bound is $\mathcal O\bigl((L\Phi_0)^{1/4}\sqrt{c\,H\sqrt{\ell_\delta}/(n\varepsilon)}\bigr)+\mathcal O(\widetilde b_c)$, since the middle term is of order $1/(n\varepsilon)$.
\end{corollary}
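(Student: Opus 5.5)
The plan is to apply Theorem~\ref{thm:main} (equivalently Theorem~\ref{thm:dp_muon_closed_form} with Corollary~\ref{cor:dp_muon_clean_hp}) with the prescribed $\alpha=s_\star/K$ and $\gamma=\sqrt{\Phi_0\alpha/(LK)}$. I would first check that its hypotheses hold, and then bound each term on the right-hand side in terms of $D$. The organizing fact is that $\nu_{\rm DP}(\eta)\le\sqrt3\,D\sqrt{\alpha K}=\sqrt3\,D\sqrt{s_\star}$. This uses $\sigma_{\rm DP}=c_2\sqrt{K\ell_\delta}/(n\varepsilon)$ and $H_{T,\eta}\le H$. So with calibrated noise everything depends on $(\alpha,K)$ only through $s=\alpha K$, up to the optimization terms.

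\textbf{Step 1 (hypotheses).} From $K\ge 2s_\star$ we get $\alpha\le 1/2$. Also $k_{\rm b}=\lceil\log 2/\alpha\rceil\le 1+K\log2/s_\star$, and I would argue $T\ge K/2$ up to constants; this needs $s_\star$ bounded below, which follows from $D\le\sqrt{L\Phi_0}/4$, giving $s_\star\ge4$. For stability, compute $\gamma/\alpha=\sqrt{\Phi_0/(L\alpha K)}=\sqrt{\Phi_0/(Ls_\star)}$, so the condition $\rho_2L\gamma/c\le\alpha/4$ becomes $\rho_2\sqrt{L\Phi_0/s_\star}\le c/4$. Substituting $s_\star=\sqrt{L\Phi_0}/D$ turns this into $\rho_2\sqrt{D\sqrt{L\Phi_0}}\le c/4$, i.e.\ $D\le c^2/(16\rho_2^2\sqrt{L\Phi_0})$, which is implied by the first entry of \eqref{eq:calibrated_regime}. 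For admissibility, I need $10\rho_2\sqrt3\,D\sqrt{s_\star}/c\le1/2$, i.e.\ $1200\rho_2^2D^2s_\star\le c^2$, i.e.\ $1200\rho_2^2D\sqrt{L\Phi_0}\le c^2$. This is exactly the first entry of \eqref{eq:calibrated_regime}.

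\textbf{Step 2 (terms).} Since $k_{\rm b}\gamma\lesssim\gamma/\alpha=\sqrt{\Phi_0/(Ls_\star)}\le\sqrt{\Phi_0/L}$, we get $\bar\Phi\lesssim\Phi_0$. With $T\gtrsim K$ the optimization terms become
\[
\frac{\bar\Phi}{\gamma T}+\frac{L\gamma}{\alpha}\lesssim\sqrt{\frac{L\Phi_0}{s_\star}}=(L\Phi_0)^{1/4}\sqrt D .
\]
The privacy term is $\nu_{\rm DP}\lesssim D\sqrt{s_\star}=(L\Phi_0)^{1/4}\sqrt D$ as well, which is why $s_\star$ is balanced this way. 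For the sampling term, $\sqrt\alpha\,\widetilde\sigma/\sqrt B=\widetilde\sigma\sqrt{s_\star/(BK)}$. The assumption $K\ge s_\star\widetilde\sigma^2/(B\sqrt{L\Phi_0}D)$ makes this at most $\sqrt{\sqrt{L\Phi_0}D}=(L\Phi_0)^{1/4}\sqrt D$. The remaining term is $\widetilde\sigma/(\sqrt B\,\alpha T)\lesssim\widetilde\sigma/(\sqrt B s_\star)=\widetilde\sigma D/\sqrt{BL\Phi_0}$, which is the middle term of the claim. The clipping floor $64\widetilde b_c$ passes through unchanged. Finally, since $D\propto1/(n\varepsilon)$, substituting $D$ gives the stated $\mathcal O$ form.

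\textbf{Main obstacle.} The main obstacle is the circularity that $D$, $s_\star$ and hence $\alpha$ depend on $K$ through $H=H(K)$. I would handle it by noting that $H\ge H_{T,\eta}$ is all the argument uses (valid since $T\le K$), and treating $K$ as given, so nothing needs to be solved for. A secondary point is making $T\ge K/2$ rigorous from $k_{\rm b}\le 1+K\log 2/s_\star$ with $s_\star\ge4$. That gives $k_{\rm b}\le 1+0.18K$, which is fine once $K\ge 2s_\star\ge 8$.
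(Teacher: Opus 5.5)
Your proposal is correct and follows the paper's proof essentially line by line. You use $s_\star\ge4$ to get $T\ge K/2$ and $\bar\Phi\lesssim\Phi_0$, reduce stability and admissibility to the two entries of \eqref{eq:calibrated_regime} via $\nu_{\rm DP}(\eta)\le\sqrt3\,D\sqrt{s_\star}$, and bound each term of Theorem~\ref{thm:main} by $(L\Phi_0)^{1/4}\sqrt D$ or $\widetilde\sigma D/\sqrt{BL\Phi_0}$. The only cosmetic difference is that the paper obtains $T\ge K/2$ directly from $k_{\rm b}\le 2/\alpha=2K/s_\star\le K/2$, rather than from $k_{\rm b}\le 1+K\log 2/s_\star$ combined with $K\ge 8$.
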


\begin{proof}
Condition~\eqref{eq:calibrated_regime} gives $D\le\sqrt{L\Phi_0}/4$, that is, $s_\star\ge4$.

\emph{Run-in and initial gap.} Since $\alpha=s_\star/K\le1/2$, we have $k_{\rm b}\le\log2/\alpha+1\le2/\alpha=2K/s_\star\le K/2$, so $T\ge K/2$. Moreover, $k_{\rm b}\gamma\sqrt{L/2}\le\frac2\alpha\sqrt{\frac{\Phi_0\alpha}{LK}}\sqrt{\frac L2}=\sqrt{\frac{2\Phi_0}{s_\star}}\le\sqrt{\Phi_0}$, so $\overline\Phi\le4\Phi_0$.

\emph{Stability.} $\rho_2L\gamma/c\le\alpha/4$ is equivalent to $\sqrt{\alpha K}\ge4\rho_2\sqrt{L\Phi_0}/c$, that is, $s_\star\ge16\rho_2^2L\Phi_0/c^2$, which is equivalent to $D\le c^2/(16\rho_2^2\sqrt{L\Phi_0})$ and is implied by~\eqref{eq:calibrated_regime}.

\emph{Admissibility.} $10\rho_2\nu_{\rm DP}(\eta)/c\le10\sqrt3\rho_2D\sqrt{s_\star}/c\le\frac12$ is equivalent to $s_\star\le c^2/(1200\rho_2^2D^2)$, which is equivalent to $D\le c^2/(1200\rho_2^2\sqrt{L\Phi_0})$, as assumed.

\emph{Bound.} Theorem~\ref{thm:main} applies. With $\overline\Phi\le4\Phi_0$ and $T\ge K/2$,
\[
    \frac{2\overline\Phi}{\gamma T}+\frac{45L\gamma}{\alpha}
    \le\frac{16\Phi_0}{\gamma K}+\frac{45L\gamma}{\alpha}
    =(16+45)\sqrt{\frac{L\Phi_0}{\alpha K}}
    =61\,(L\Phi_0)^{1/4}\sqrt D,
\]
\[
    20\,\nu_{\rm DP}(\eta)\le20\sqrt3\,D\sqrt{s_\star}=20\sqrt3\,(L\Phi_0)^{1/4}\sqrt D,
\]
\[
    C\frac{\widetilde\sigma}{\sqrt B}\Bigl(\sqrt\alpha+\frac{1}{\alpha T}\Bigr)
    \le C\frac{\widetilde\sigma}{\sqrt B}\sqrt{\frac{s_\star}{K}}+\frac{2C\widetilde\sigma}{\sqrt B\,s_\star}
    \le C(L\Phi_0)^{1/4}\sqrt D+\frac{2C\widetilde\sigma D}{\sqrt{B\,L\Phi_0}},
\]
where the last step uses $K\ge s_\star\widetilde\sigma^2/(B\sqrt{L\Phi_0}D)$ for the first term and $s_\star=\sqrt{L\Phi_0}/D$ for the second. Adding $64\widetilde b_c$ completes the proof.
\end{proof}

\paragraph{Discussion.}
For the $\ell_2$ geometry, $\mathbb E\|Z\|_2\le\sqrt d$ and $\rho_1=\rho_2=1$, so the bound reads
\[
    \mathcal O\bigl((L\Phi_0)^{1/4}(c^2d\log(1/\delta))^{1/4}(n\varepsilon)^{-1/2}\bigr)+\mathcal O(\widetilde b_c)
\]
up to logarithmic factors in $K/\eta$, which matches the $(\sqrt d/(n\varepsilon))^{1/2}$ dependence of private normalized and clipped SGD \citep{yang2022normalized}.
Condition~\eqref{eq:calibrated_regime} requires $n\varepsilon\gtrsim H\sqrt{\log(1/\delta)}\max\{\rho_2^2\sqrt{L\Phi_0}/c,\ c/\sqrt{L\Phi_0}\}$ (with the constant $1200$ in the first term).
The privacy contribution to the bound is below the initial gradient bound $\sqrt{2L\Phi_0}$ as soon as $n\varepsilon\gtrsim cH\sqrt{\log(1/\delta)}/\sqrt{L\Phi_0}$ (the clipping floor $\widetilde b_c$ does not depend on $n\varepsilon$); for $c\asymp\sqrt{L\Phi_0}$ and the $\ell_2$ geometry both requirements reduce to $n\varepsilon\gtrsim\sqrt{d\log(1/\delta)}$, but for $c\ll\sqrt{L\Phi_0}$, that is, under strongly active clipping, the admissibility requirement is stronger by the factor $\rho_2^2L\Phi_0/c^2$.
In the experiments of Section~\ref{sec:experiments}, even with $\sqrt{L\Phi_0}/c=1$ and $H\approx\sqrt d$, the first term requires $n\varepsilon\gtrsim6\cdot10^{6}$ on CIFAR-10 against the actual $n\varepsilon=4\cdot10^{5}$.
Because $H$, and hence $D$, grow with $\log(K/\eta)$ (so that $s_\star=\sqrt{L\Phi_0}/D$ shrinks), condition~\eqref{eq:calibrated_regime} bounds the horizon $K$ from above at fixed $(n,\varepsilon,\delta)$: the admissible horizons are the integers $K$ satisfying the two lower bounds of the corollary, condition~\eqref{eq:calibrated_regime}, and the accountant condition $\varepsilon<c_1q^2K$ of Proposition~\ref{prop:privacy}. Since the dependence on $K$ is logarithmic, this range is very large whenever~\eqref{eq:calibrated_regime} holds with room, but it is not unrestricted.
Faster rates for private nonconvex optimization under Lipschitz losses use variance reduction or tree aggregation \citep{tran2022momentum, arora2023faster, lowy2024make}; obtaining them for momentum-LMO methods under active clipping is left for future work.

\section{Privacy Noise and Batch Size in the Experiments}
\label{app:scaling}

Throughout this appendix the privacy unit is one example (CIFAR-10) or one 1{,}024-token sequence (language modeling).
CIFAR-10 experiments use $n=50$k, 80 epochs, and $d\approx2$M. Language-modeling experiments use $n\approx5.86\cdot10^{5}$ sequences (a 600M-token budget, one epoch) and $d\approx30$M.
This appendix quantifies the privacy noise in our experiments (Section~\ref{app:scaling_accountant}), checks the conditions of Theorem~\ref{thm:main} against the tuned configurations (Section~\ref{app:scaling_batch}), and reports the spread between optimizers (Section~\ref{app:scaling_evidence}).

\subsection{The privacy noise as a function of the batch size}
\label{app:scaling_accountant}

With Poisson sampling at rate $q=B/n$, $E$ epochs, and hence $K=En/B$ steps, the noise multiplier $\sigma_{\rm mult}(B)$ needed for a target $(\varepsilon,\delta)$ is determined by the accountant, and the noise added to the averaged gradient is $z_k\sim\mathcal N(0,c^2\sigma_{\rm DP}^2I)$ with $\sigma_{\rm DP}=\sigma_{\rm mult}/B$.
We computed $\sigma_{\rm mult}(B)$ with the R\'enyi-DP accountant of the Poisson-subsampled Gaussian mechanism \citep{wang2019subsampled} (the default fractional and integer orders of Opacus and its conversion to $(\varepsilon,\delta)$) for $\varepsilon=8$ and $\delta=1/(2n)$.
Two regimes are relevant:
\begin{itemize}[leftmargin=1.2em,itemsep=1pt]
\item \emph{Large-multiplier regime} ($\sigma_{\rm mult}\gtrsim1$, many epochs): the accountant gives $\sigma_{\rm mult}\approx c_2\,q\sqrt{K\log(1/\delta)}/\varepsilon\propto\sqrt B$, so the noise on the average decays as $B^{-1/2}$, like sampling noise.
\item \emph{Small-multiplier regime} ($\sigma_{\rm mult}\lesssim1$, few epochs): $\sigma_{\rm mult}$ varies slowly with $B$, so the noise on the average decays almost as $B^{-1}$.
\end{itemize}
Table~\ref{tab:accountant} and Figure~\ref{fig:dp_noise_scaling} report the results.
Three facts stand out.
First, the root-mean-square norm of the privacy noise, $c\,\sigma_{\rm mult}\sqrt d/B$, exceeds the root-mean-square bound $c/\sqrt B$ on the sampling error of the clipped mini-batch average by a factor of 37--103 on image classification and 26--192 on language modeling. Because every clipped gradient has norm at most $c$, this bound holds both for fixed-size sampling, where $\mathbb E\|\bar g_k-\mu_k\|_2^2=\frac{n-B}{B(n-1)}\,\frac1n\sum_i\|\clip_c(\nabla f_i(x_k))-\mu_k\|_2^2\le c^2/B$, and for Poisson sampling normalized by the expected batch size, where the mean-square error equals $\frac{1-q}{B}\,\frac1n\sum_i\|\clip_c(\nabla f_i(x_k))\|_2^2\le c^2/B$. It bounds the mean-square error, not each realization: with $n=4$, $B=2$, and clipped scalar gradients $(c,c,-c,-c)$, the sample of the two positive gradients has probability $1/6$ and error $c>c/\sqrt2$. In root-mean-square terms the gradient noise is therefore set mainly by the privacy budget rather than by the data. This compares the noises themselves; it does not show that the sampling term of Theorem~\ref{thm:main}, which is expressed through the centered variance $\sigma$ of the unclipped gradients, is negligible, since $\sigma$ is not measured in our experiments. Second, the decay exponent $q_{\rm eff}:=-\,\mathrm d\log(\sigma_{\rm mult}/B)/\mathrm d\log B$ lies strictly between $1/2$ and $1$: from $0.87$ to $0.53$ on image classification and from $0.95$ to $0.88$ on language modeling.
Third, the observed optimal batch sizes lie where the norm of the privacy noise, $\|z_k\|_2\approx c\,\sigma_{\rm mult}\sqrt d/B$, is comparable to the clipping radius $c$, which upper-bounds the norm of the expected clipped gradient $\mu_k$: the ratio is $0.4$--$0.6$ on CIFAR-10 and $0.7$--$2.4$ on language modeling at the optima.

\begin{table}[h]
\centering
\small
\caption{Noise multiplier for $(\varepsilon,\delta)=(8,1/(2n))$ from the R\'enyi-DP accountant, ratio of the privacy-noise norm to the clipping radius ($\mathrm{NSR}=\sigma_{\rm mult}\sqrt d/B$), and ratio of the root-mean-square norm of the privacy noise to the root-mean-square bound $c/\sqrt B$ on the sampling error of clipped gradients (DP/samp.\ $=\sigma_{\rm mult}\sqrt{d/B}$). Since $c/\sqrt B$ is an upper bound, this ratio is a lower bound on the ratio of root-mean-square noise levels; it does not bound individual realizations. Bold: observed DP-optimal batch sizes (Figures~\ref{fig:cnn_baseline_loss_vs_bs} and~\ref{fig:babygpt_baseline_loss_vs_bs}). The row $B=2500$ for CIFAR-10 is included for the accountant only; it is not part of the DP grid of Appendix~\ref{app:cnn_details}.}
\label{tab:accountant}
\begin{tabular}{rrrrr@{\hspace{2.2em}}rrrrr}
\toprule
\multicolumn{5}{c}{CIFAR-10 ($n=50$k, 80 epochs, $d\approx2$M)} & \multicolumn{5}{c}{Language modeling ($n\approx586$k, 1 epoch, $d\approx30$M)} \\
$B$ & $K$ & $\sigma_{\rm mult}$ & NSR & DP/samp. & $B$ & $K$ & $\sigma_{\rm mult}$ & NSR & DP/samp. \\
\midrule
64 & 62500 & 0.58 & 12.9 & 103 & 128 & 4578 & 0.40 & 17.0 & 192 \\
128 & 31250 & 0.65 & 7.2 & 81 & 256 & 2289 & 0.42 & 8.9 & 142 \\
256 & 15625 & 0.74 & 4.1 & 65 & 512 & 1144 & 0.43 & 4.7 & 105 \\
512 & 7812 & 0.87 & 2.4 & 55 & \textbf{1024} & 572 & 0.46 & 2.4 & 78 \\
1024 & 3906 & 1.07 & 1.5 & 47 & \textbf{2048} & 286 & 0.48 & 1.3 & 59 \\
2500 & 1600 & 1.48 & 0.84 & 42 & \textbf{4096} & 143 & 0.52 & 0.69 & 44 \\
\textbf{5000} & 800 & 1.98 & 0.56 & 40 & 8192 & 72 & 0.55 & 0.37 & 34 \\
\textbf{10000} & 400 & 2.70 & 0.38 & 38 & 16384 & 36 & 0.60 & 0.20 & 26 \\
25000 & 160 & 4.14 & 0.23 & 37 & & & & & \\
\bottomrule
\end{tabular}
\end{table}

\begin{figure}[h]
\centering
\includegraphics[width=0.85\textwidth]{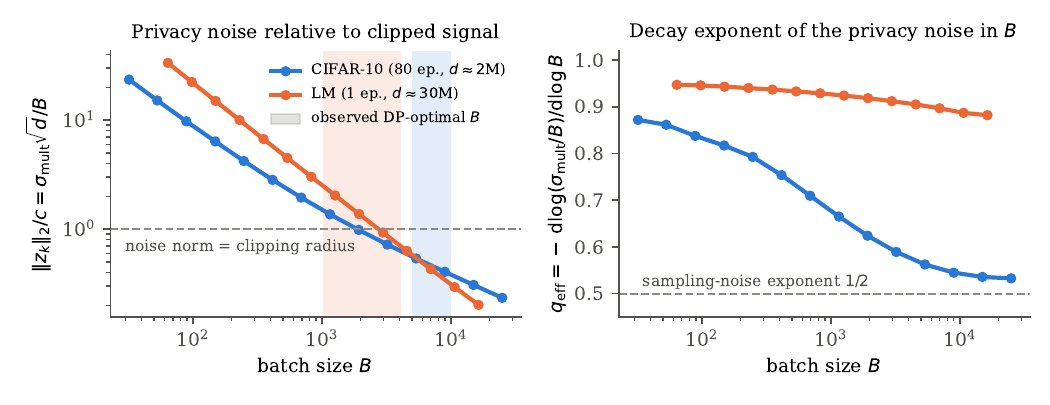}
\caption{Left: norm of the privacy noise added to the averaged gradient relative to the clipping radius, for the two experimental settings; shaded regions mark the observed DP-optimal batch sizes. Right: decay exponent of the privacy noise in the batch size; sampling noise has exponent $1/2$.}
\label{fig:dp_noise_scaling}
\end{figure}

\subsection{Applicability of Theorem~\ref{thm:main} in the experiments}
\label{app:scaling_batch}

With $\sigma_{\rm DP}=\sigma_{\rm mult}/B$, the noise condition in~\eqref{eq:main_conditions} reads
\begin{equation}
\label{eq:dp_admissible_batch}
    B\ \ge\ 20\sqrt3\,\rho_2\sqrt\alpha\,H\,\sigma_{\rm mult}(B),
\end{equation}
with $H=\mathbb E\|Z\|_\star+\rho_1\sqrt{2\log(T/\eta)}$; it is a lower bound on the batch size.
For the $\ell_2$ geometry ($H\approx\sqrt d$, $\rho_2=1$) it is equivalent to $\mathrm{NSR}=\sigma_{\rm mult}\sqrt d/B\le1/(20\sqrt3\,\sqrt\alpha)$, that is, $\mathrm{NSR}\le0.13$, $0.09$, and $0.065$ for $\alpha=0.05$, $0.1$, and $0.2$.
The observed optima sit at $\mathrm{NSR}\approx0.4$--$2.4$ (Table~\ref{tab:accountant}), about $4$--$27$ times above this threshold for $\alpha=0.1$; this is an observation about the experiments, not a consequence of the theorem.
Read as a constraint on the momentum, the condition $\alpha\le1/(1200\rho_2^2\sigma_{\rm DP}^2H^2)$ requires $1-\alpha\ge0.997$ at the CIFAR-10 optimum $B=5000$ and $1-\alpha\ge0.9996$ at $B=1024$, and for language modeling at $B\in\{1024,2048\}$ a momentum window $1/\alpha$ of $2\cdot10^{3}$--$7\cdot10^{3}$ steps, longer than the horizon $K$.
Within our grids it holds only at the largest batch sizes with $1-\alpha=0.99$.
Theorem~\ref{thm:main} also requires the run-in $T=K-\lceil\log2/\alpha\rceil>0$, which for $1-\alpha=0.99$ means $K>70$: this holds for CIFAR-10 at $B=25000$ ($K=160$) but not for language modeling at $B=16384$ ($K=36$).
For the spectral geometry, $H^2\asymp\min\{m,n'\}^2\max\{m,n'\}$ exceeds $d=mn'$ by the factor $\min\{m,n'\}$, so the condition is more restrictive still: $\mathbb E\|Z\|_{\rm nuc}/\sqrt{mn'}>2.5$ for every matrix with $\min\{m,n'\}\ge10$, which covers the weight matrices of our networks, whereas the $\ell_2$ condition holds at CIFAR-10, $B=25000$, $1-\alpha=0.99$ only with a margin of about $1.25$, so no configuration satisfies the spectral condition.
Within our grids, Theorem~\ref{thm:main} therefore covers, in the $\ell_2$ specialization, only CIFAR-10 at $B=25000$ with $1-\alpha=0.99$, and none of the configurations at the observed optima.

\subsection{Spread between optimizers}
\label{app:scaling_evidence}

The following observations are read from the annotated heatmaps.
At the same batch size, and hence the same number of steps, privacy reduces the differences between optimizers on language modeling (Table~\ref{tab:spread}): the spread of the best losses across SGD, Adam, and Muon drops from $1.69$ to $0.09$ nats at $B=128$ and from $2.84$ to $0.22$ nats at $B=1024$. Under privacy the spread grows with the batch size, from $0.08$--$0.09$ nats for $B\le256$ ($\mathrm{NSR}\gtrsim9$) to $0.89$ nats at $B=16384$ ($\mathrm{NSR}=0.2$), and to $1.23$ nats when Scion is included.
At the smallest batch sizes the models also learn little (losses of $6.7$--$6.9$ nats).
On CIFAR-10 the spread grows from at most $1.5$ accuracy points for $B\le512$ to $7.6$ points at $B=25000$ ($\mathrm{NSR}=0.23$); note that Figures~\ref{fig:cnn_baseline_heatmaps}--\ref{fig:cnn_dp_heatmaps} report accuracy, whereas Section~\ref{sec:experiments} reports loss.

\begin{table}[h]
\centering
\small
\caption{Best loss per optimizer and batch size for language modeling (from the heatmaps; lower is better) and spread across optimizers. Non-private runs use $B\le1024$; Scion was run only with privacy and is excluded from the spread for comparability.}
\label{tab:spread}
\begin{tabular}{r ccc c ccc c c}
\toprule
& \multicolumn{4}{c}{non-private} & \multicolumn{5}{c}{private ($\varepsilon=8$)} \\
\cmidrule(lr){2-5}\cmidrule(lr){6-10}
$B$ & SGD & Adam & Muon & spread & SGD & Adam & Muon & spread & Scion \\
\midrule
128 & 5.55 & 3.99 & 3.86 & 1.69 & 6.84 & 6.87 & 6.78 & 0.09 & 6.84 \\
256 & 5.98 & 4.13 & 3.91 & 2.07 & 6.75 & 6.75 & 6.67 & 0.08 & 6.79 \\
512 & 6.45 & 4.43 & 3.97 & 2.48 & 6.62 & 6.49 & 6.55 & 0.13 & 6.67 \\
1024 & 6.94 & 4.82 & 4.10 & 2.84 & 6.55 & 6.33 & 6.52 & 0.22 & 6.54 \\
2048 & -- & -- & -- & -- & 6.68 & 6.36 & 6.51 & 0.32 & 6.52 \\
4096 & -- & -- & -- & -- & 7.04 & 6.56 & 6.70 & 0.48 & 6.51 \\
8192 & -- & -- & -- & -- & 7.52 & 6.77 & 7.04 & 0.75 & 6.61 \\
16384 & -- & -- & -- & -- & 8.02 & 7.13 & 7.46 & 0.89 & 6.79 \\
\bottomrule
\end{tabular}
\end{table}

\section{Detailed Comparison with Existing Guarantees}
\label{app:comparison}

\paragraph{Per-example versus aggregate clipping.}
We compare only with analyses of mechanisms that clip \emph{individual} examples before aggregation, as in~\eqref{eq:per-example-app},
\begin{equation}
\label{eq:per-example-app}
  \widehat g_k \;=\; \frac1B \sum_{i\in S_k} \clip_c\!\big(\nabla f_i(x_k)\big) + z_k .
\end{equation}
Clipping a single vector only rescales it, $\clip_c(g)=a_c(g)\,g$ with $a_c(g)>0$, so a scale-invariant LMO is unaffected by clipping an exact gradient, a batch average, or a client update.
In contrast, the average in~\eqref{eq:per-example-app} need not even be positively correlated with the unclipped mean: with $c=1$ and per-example gradients $(4,-1,-1)$ in one dimension, the unclipped mean is $2/3$ while the clipped mean is $-1/3$, and a larger batch does not help because all examples are already used.
The two mechanisms also differ in privacy calibration: under add/remove adjacency (the Poisson case of Proposition~\ref{prop:privacy}), the sum of individually clipped gradients has $\ell_2$-sensitivity $c$, whereas clipping an aggregate of unbounded gradients only gives sensitivity $2c$ for the aggregate.

\paragraph{Aligning conventions.}
Three conventions differ across the literature.
First, the stationarity measure: a bound $\mathcal O(K^{-1/2})+\mathcal N$ on an averaged \emph{squared} norm gives, by Jensen's inequality, $\mathcal O(K^{-1/4})+\mathcal O(\sqrt{\mathcal N})$ on the norm, and best-iterate, averaged, and expected-minimum statements are ordered as $\mathbb E\min_k\|g_k\|\le\min_k\mathbb E\|g_k\|\le\frac1K\sum_k\mathbb E\|g_k\|$.
Second, the noise convention: writing $s$ for the per-coordinate standard deviation of the noise \emph{after averaging}, our $z_k\sim\mathcal N(0,c^2\sigma_{\rm DP}^2I)$ has $s=c\sigma_{\rm DP}$, a noise multiplier $\sigma_{\rm mult}$ on the sum gives $s=c\sigma_{\rm mult}/B$, and \citet{koloskova2023revisiting} use $s=\sigma_{\rm DP}/\sqrt d$ without the factor $c$.
Third, whether a rate in $K$ holds at a \emph{fixed noise scale}, so that the total privacy budget grows with $K$, or after the noise has been \emph{calibrated} to a fixed $(\varepsilon,\delta)$; a calibrated bound typically does not vanish with $K$.
Theorem~\ref{thm:main} and Appendix~\ref{sec:simplified_analysis} are fixed-noise statements, and Corollary~\ref{cor:calibrated} is the calibrated one.

\paragraph{Centered variance versus raw moments.}
For every $x$,
\begin{equation}
\label{eq:raw-vs-centered}
  \frac1n\sum_{i=1}^n\|\nabla f_i(x)\|_2^2
  \;=\;\|\nabla f(x)\|_2^2+\frac1n\sum_{i=1}^n\|\nabla f_i(x)-\nabla f(x)\|_2^2 .
\end{equation}
Hence a bound on the raw second moment of per-example gradients along the trajectory also bounds the mean gradient, and a light-tail condition such as $\frac1n\sum_i\exp(\|\nabla f_i(x)\|_2^2/G_0^2)\le e$ implies it by Jensen's inequality.
Assumption~\ref{assum:stochastic_oracle} bounds only the centered term on the right, and Lemma~\ref{lem:ray_main} uses nothing else.

\paragraph{Notes on Table~\ref{tab:comparison}.}
\begin{itemize}[leftmargin=1.2em,itemsep=1pt]
\item \citet{koloskova2023revisiting}: Appendix~C.4 (the main-text theorem is stated for the minimum over iterates); the DP extension uses $(L_0,L_1)$-smoothness with a trajectory-dependent constant, and we compare the case $L_1=0$. Their lower bounds show that the floor $\min\{\sigma,\sigma^2/c\}$ cannot be improved for clipped SGD under bounded variance.
\item \citet{yang2022normalized}: Theorem~3.4 (fixed) and Corollary~3.5 (calibrated), under $(L_0,L_1)$-smoothness and $\|\nabla f_i-\nabla f\|\le\tau_0+\tau_1\|\nabla f\|$ almost surely with $c>2\tau_0/(1-\tau_1)$; the bound is on $\mathbb E\min_k\|\nabla f(x_k)\|$.
\item \citet{chen2020understanding}: Theorem~5. The per-example noise is arbitrary; the bound is stated for a weighted clipped stationarity measure plus a Wasserstein distance between the noise distribution and a symmetric reference, which vanishes for symmetric noise.
\item \citet{shulgin2024convergence}: Theorem~2, with $1<q\le2$ and an exact norm-quantile oracle.
\item \citet{vatankhah2025differentially}: Theorem~4.4 and Corollary~4.5 of arXiv v2 (fixed noise) and their Appendix~H (calibrated via advanced composition, without subsampling), with $1<p\le2$, $B=1$, and local smoothness; the bound holds with high probability and carries $\log(K/\eta)$ factors.
\item \citet{kim2026dpmuon} (v2): Theorem~4.3 contains a clipping residual of order $\mathbb E(\|\nabla f_i\|-c)_+$. Corollary~F.2 assumes an exponential moment of per-example gradient norms and uses $c_K\propto\sqrt{\log K}$, giving $K^{-1/4}\sqrt{\log K}$; Corollary~F.3 assumes a raw second moment and uses $c_K\propto K^{1/6}$, giving $K^{-1/6}$. Both are in expectation, for the sum of blockwise nuclear norms of matrix parameters, with a fixed noise \emph{multiplier}, so that the noise standard deviation grows with $c_K$. The vanishing rates require the number of Newton--Schulz steps to grow logarithmically with the horizon.
\item This work: Theorem~\ref{thm:main}, Corollaries~\ref{cor:rates} and~\ref{cor:calibrated}; exact LMO, the conditions~\eqref{eq:main_conditions}, high probability over the privacy noise and expectation over sampling.
\end{itemize}

\paragraph{Discussion.}
Among guarantees for per-example clipping, ours is, to our knowledge, the only one for momentum-LMO dynamics in a general norm that requires only a centered variance bound.
Its fixed-noise rate matches the $\mathcal O(K^{-1/4})$ rate of clipped SGD on the average gradient norm \citep{koloskova2023revisiting}, which is optimal under bounded variance even without clipping \citep{arjevani2023lower}, up to the $(\log K)^{1/4}$ factor from our high-probability treatment of the privacy noise.
Compared with \citet{kim2026dpmuon}, the main differences are as follows.
(i) Our clipping floor $\rho_1\min\{\sigma,\sigma^2/c\}$ vanishes as the sampling noise vanishes, even when $\|\nabla f\|_2\gg c$, because the deterministic part of the clipping bias is a rescaling that the LMO ignores; their residual $\mathbb E(\|\nabla f_i\|-c)_+$ does not vanish in this case.
(ii) We need no moment bounds on the raw gradients, which by~\eqref{eq:raw-vs-centered} would also bound the mean gradient, and no clipping threshold growing with $K$.
(iii) Our bound holds for any norm and with high probability over the privacy noise.
On the other hand, their vanishing rates have no floor (at the price of a growing threshold), and they analyze finite-step Newton--Schulz and a bias-corrected variant, whereas we assume an exact LMO; their rates and ours are therefore not directly comparable.
\citet{shanmugam2026spectral} give a Wedin-type condition under which orthogonalization recovers the leading singular directions of a privatized gradient and document the resulting phase transition empirically; their results concern spectral recovery at a single step rather than convergence of the algorithm.

\section{Extended Related Work}
\label{app:related}

\paragraph{Muon and LMO-based optimization.}
Muon applies an approximate polar-factor transformation to a momentum matrix and can be interpreted as stochastic spectral descent \citep{carlson2015stochastic, kovalev2018stochastic}; practical implementations use a few Newton--Schulz iterations \citep{bernstein2025deriving, shulgin2025beyond}, whose effect on convergence is analyzed by \citet{kim2026muonns}.
The LMO formulation of \citet{pethick2025training} places Muon within a broader class of norm-constrained non-Euclidean methods that includes normalized, sign-based, and coordinate-type updates, extended layer-wise by Gluon \citep{riabinin2025gluon}; trust-region analyses \citep{kovalev2025understanding, kovalev2025non} study the interplay between momentum and noise.
For normalized SGD, \citet{cutkosky2020momentum} established the $K^{-1/4}$ rate with momentum, and \citet{cutkosky2021high} combined clipping, momentum, and normalization in general normed spaces under heavy-tailed noise, with an additive clipping bias controlled by a growing threshold.
Practical and distributed variants include BP-Muon and Drop-Muon \citep{khaled2025muonbp, gruntkowska2025drop} and the distributed or compressed methods Dion, MuLoCo, EF21-Muon, and Compressed Gluon \citep{ahn2025dion, therien2025muloco, gruntkowska2025error, qian2026communication}.

\paragraph{Clipping, normalization, and private optimization.}
Clipping and normalization are used to stabilize training \citep{pascanu2013, merity2017, brown2020, brock2021} and to limit the influence of unreliable updates in robust distributed optimization \citep{karimireddy2021learning, malinovsky2024byzantine, islamov2026byzantine}.
In differential privacy, clipping bounds the sensitivity of each example \citep{Abadi_2016}, also in federated settings \citep{geyer2017differentially, mcmahan2018learning, triastcyn2019federated}, and makes the gradient estimator biased whenever it is active \citep{chen2020understanding, koloskova2023revisiting}.
Per-sample normalization \citep{bu2023automatic, yang2022normalized} removes the dependence on the threshold, and \citet{tang2024dpadambc} show that the nonlinear post-processing of noisy gradients by Adam introduces a bias that can be corrected, the analogue of DP-MuonBC \citep{kim2026dpmuon}.
\citet{compagnoni2026adaptive} compare DP-SignSGD, the $\ell_\infty$ instance of an LMO method, with DP-SGD through stochastic differential equations and find that adaptive methods are preferable at high privacy.
Many convergence analyses of private methods simplify clipping through bounded gradients \citep{zhang2020private, li2022soteriafl, zhang2022understanding, wang2023efficient, lowy2023private, murata2023diff2, wang2024efficient}, bounded noise or heterogeneity \citep{liu2022communication, noble2022differentially}, or inactive clipping \citep{noble2022differentially, zhang2024private}; \citet{das2021convergence} study normalized client updates for convex federated objectives.
Outside privacy, clipping is analyzed for nonsmooth or fast-growing convex functions \citep{shor2012minimization, ermoliev1988stochastic, alber1998projected}, under generalized smoothness \citep{zhang2019gradient, gorbunov2024methods, vankov2024optimizing, lobanov2024linear, hubler2024parameter, tovmasyan2025revisiting}, and under heavy-tailed noise \citep{gorbunov2020stochastic, nguyen2023improved}.
For private nonconvex optimization with Lipschitz losses, faster rates are obtained with variance reduction or tree aggregation \citep{tran2022momentum, arora2023faster, lowy2024make}, and private Frank--Wolfe-type methods in $\ell_p$ geometries are studied by \citet{bassily2021non}.

\section{Additional Experimental Details}
\label{app:experiments}

Our implementation uses PyTorch \citep{paszke2019pytorch} and Opacus \citep{yousefpour2021opacus}, builds on the code of \citet{jordan_cifar10_airbench} for CIFAR-10, and all models are trained on 4 NVIDIA A100-PCIE-40GB GPU. CIFAR-10 is distributed by its authors for research use; FineWeb is released under the ODC-By~1.0 license; the CIFAR Airbench, and NanoGPT code are MIT-licensed.
In all heatmaps, each cell shows the best value over the remaining hyperparameter; the red box marks the best configuration overall and black boxes mark the best value along each column. Due to computational constraints across the extensive hyperparameter sweeps, each cell in the heatmaps represents a single training run. For CIFAR-10, the line plots in Figures~\ref{fig:cnn_baseline_loss_vs_bs}, \ref{fig:epsilon_ablation} report the mean and standard deviation across three independent runs.

\subsection{CIFAR-10}
\label{app:cnn_details}

\textbf{Non-private.} We evaluate learning rates $\gamma \in \{10^{-4}, 10^{-3}, 10^{-2}, 10^{-1}, 1\}$, momentum values $1-\alpha \in \{0, 0.6, 0.8, 0.9, 0.95, 0.99\}$, and batch sizes $B \in \{32, 64, 128, 256, 512, 1024, 2500\}$.
Figure~\ref{fig:cnn_baseline_heatmaps} shows that Muon is robust to the learning rate (accuracy within about one point for $\gamma\in\{10^{-2},10^{-1},1\}$ at most batch sizes), whereas Adam and SGD are more sensitive to it. For SGD, the best learning rate lies at the upper end of the grid ($\gamma=1$) for most batch sizes.

\textbf{Private.} We evaluate learning rates $\gamma \in \{10^{-3}, 10^{-2}, 10^{-1}, 1, 10\}$, momentum values $1-\alpha \in \{0,\allowbreak 0.6,\allowbreak 0.8,\allowbreak 0.9,\allowbreak 0.95,\allowbreak 0.99\}$, and batch sizes $B \in \{64, 128, 256, 512, 1024, 5000, 10000, 25000\}$.
Figure~\ref{fig:cnn_dp_heatmaps} shows that DP-Muon remains comparatively robust to the learning rate, although the window of good learning rates is narrower than without privacy, and that DP-Adam has the same optimal learning rate ($10^{-2}$) at every batch size. For DP-SGD, the best learning rate lies at the upper end of the grid ($\gamma=10$) for $B\ge256$.

\begin{figure}[h]
    \centering
    \includegraphics[width=0.95\textwidth]{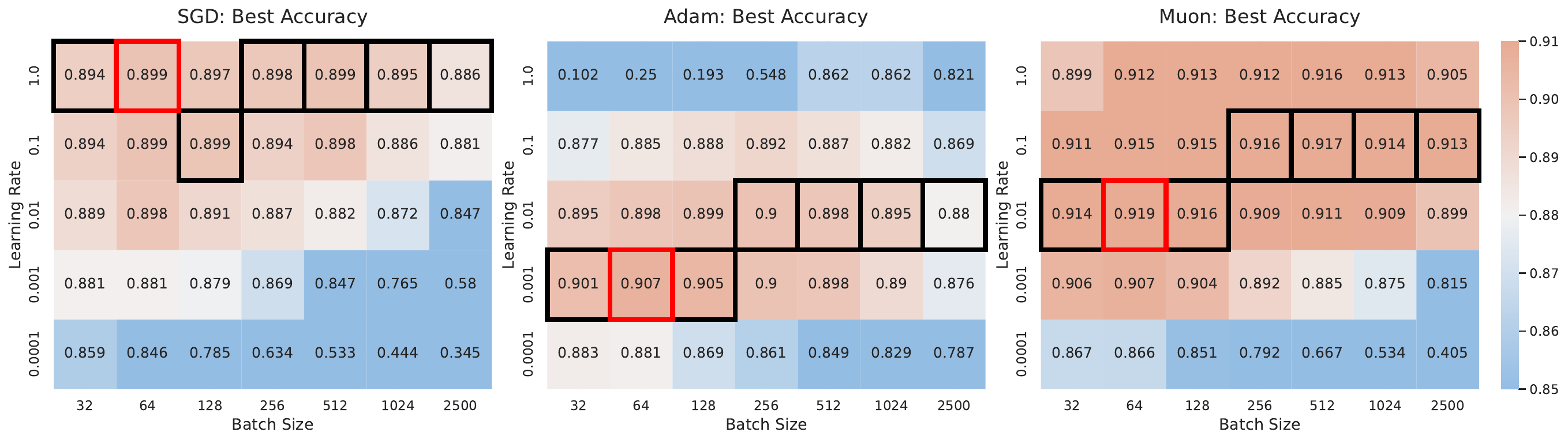}
    \includegraphics[width=0.95\textwidth]{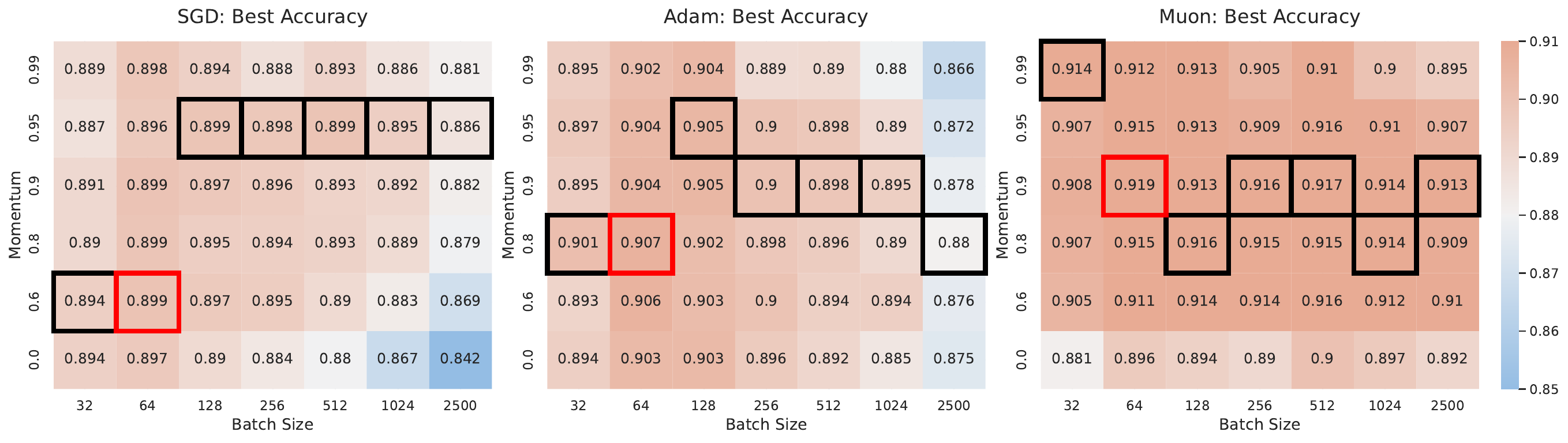}
    \caption{CNN, non-private: peak \textbf{test accuracy} for each batch size and learning rate (\textbf{top}) or momentum (\textbf{bottom}).}
    \label{fig:cnn_baseline_heatmaps}
\end{figure}

\begin{figure}[h]
    \centering
    \includegraphics[width=0.95\textwidth]{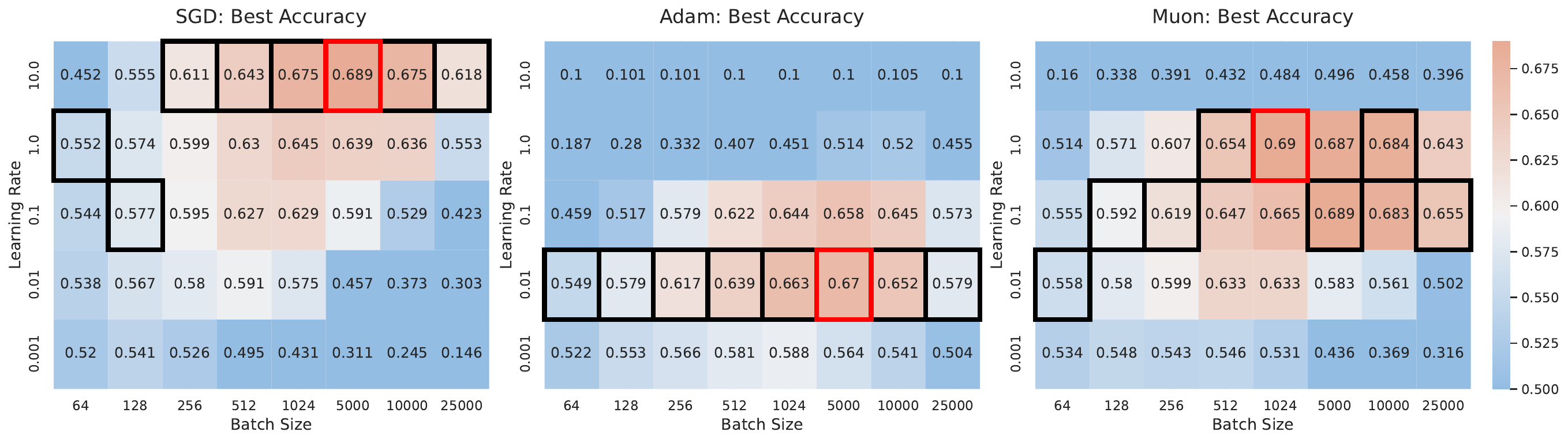}
    \includegraphics[width=0.95\textwidth]{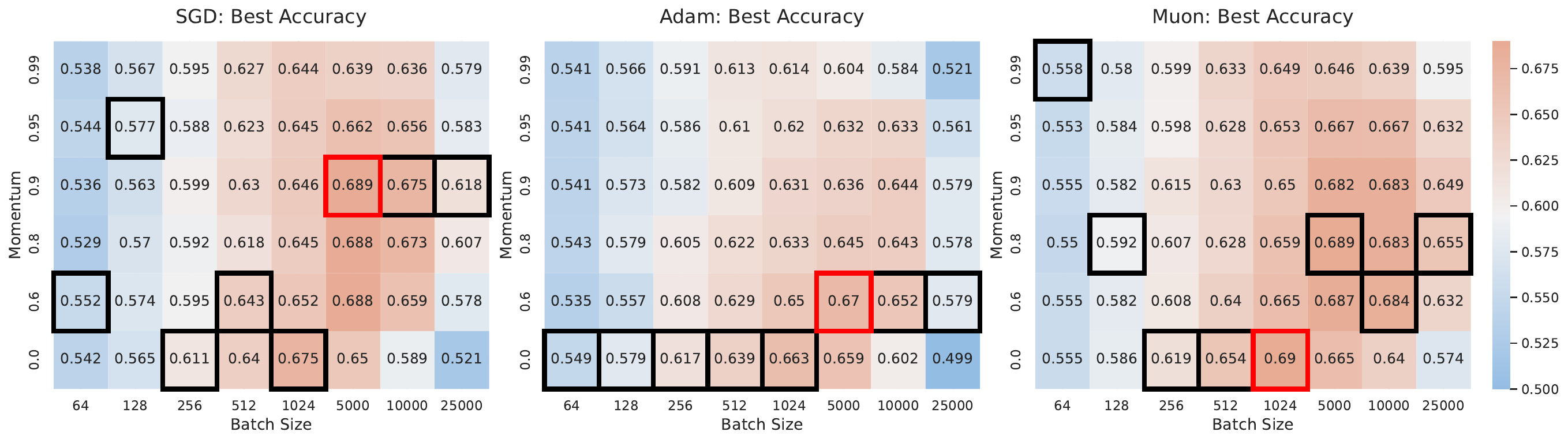}
    \caption{CNN, DP ($\varepsilon=8$): peak \textbf{test accuracy} for each batch size and learning rate (\textbf{top}) or momentum (\textbf{bottom}).}
    \label{fig:cnn_dp_heatmaps}
\end{figure}

\subsection{Privacy-budget ablation}
\label{app:epsilon_ablation}

We vary the privacy budget $\varepsilon\in\{1,3,5,8\}$ at batch size $B=1024$. For each method, we select the best configuration at $\varepsilon=8$ and reuse it for all other values of $\varepsilon$.
As shown in Figure~\ref{fig:epsilon_ablation}, all methods perform similarly. DP-Muon is on par with or slightly better than the baselines for $\varepsilon\in\{1,5,8\}$, with larger variance at $\varepsilon=3$, and DP-Adam is the least accurate at $\varepsilon=1$ and $\varepsilon=8$.

\begin{figure}[h]
    \centering
    \includegraphics[width=0.95\linewidth]{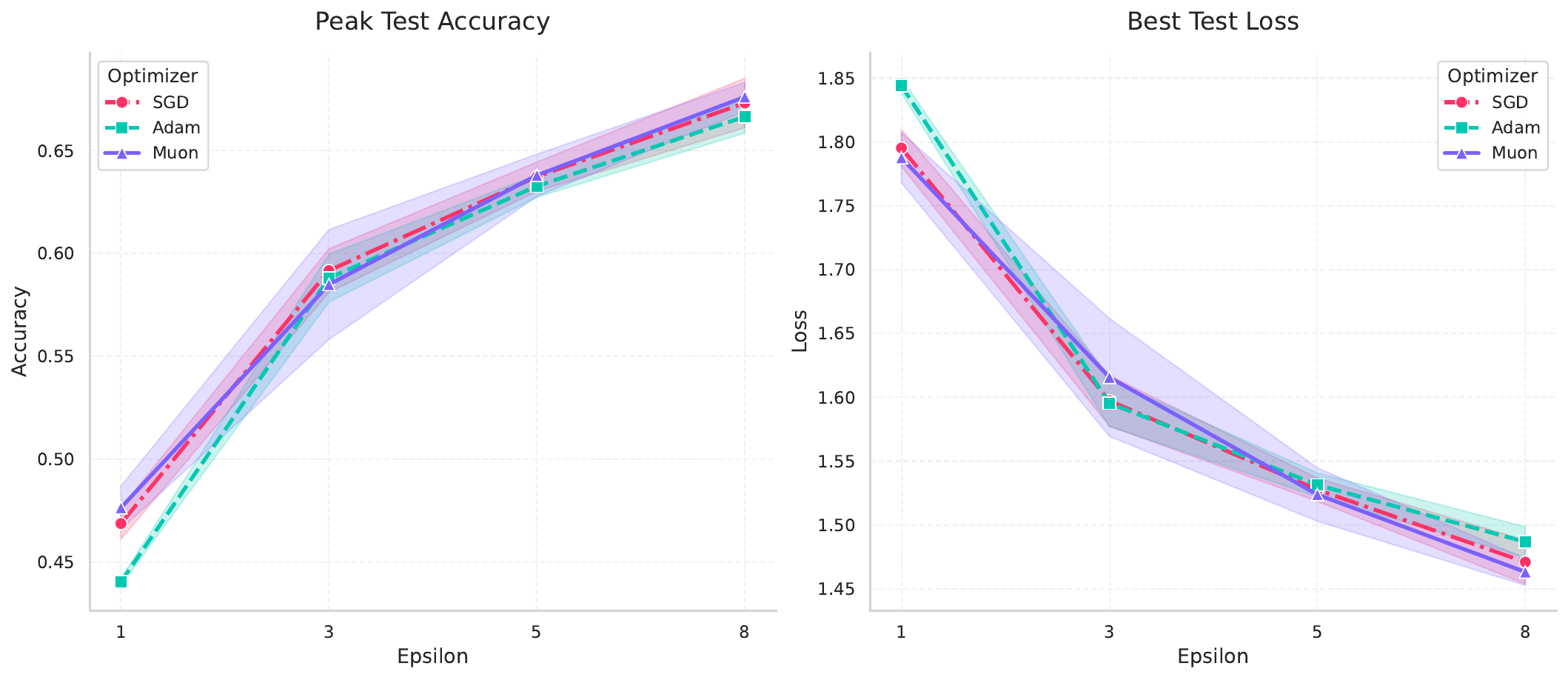}
    \caption{CNN, $B=1024$: peak \textbf{test accuracy} and best \textbf{test loss} versus the privacy budget $\varepsilon$, with hyperparameters tuned at $\varepsilon=8$. Shaded regions show the standard deviation over three runs.}
    \label{fig:epsilon_ablation}
\end{figure}

\subsection{Language modeling}
\label{app:llm_details}

\textbf{Non-private.} We evaluate momentum values $1-\alpha \in \{0.8, 0.9, 0.95, 0.99\}$, batch sizes $B \in \{32,\allowbreak 64,\allowbreak 128,\allowbreak 256,\allowbreak 512,\allowbreak 1024\}$, and the optimizer-specific learning-rate grids shown in Figure~\ref{fig:babygpt_baseline_heatmaps}: $\{10^{-3},5\cdot10^{-3},10^{-2},5\cdot10^{-2},10^{-1}\}$ for SGD and Muon, and $\{5\cdot10^{-4},10^{-3},5\cdot10^{-3},10^{-2},5\cdot10^{-2}\}$ for Adam.
For every optimizer, the optimal learning rate is the same at all batch sizes ($0.05$ for SGD, $10^{-3}$ for Adam, and $0.1$ for Muon, at the upper end of its grid), and Muon achieves the lowest loss at every batch size, with a gap to Adam that grows with $B$.

\textbf{Private.} We evaluate momentum values $1-\alpha \in \{0.8, 0.9, 0.95, 0.99\}$, batch sizes $B \in \{128,\allowbreak 256,\allowbreak 512,\allowbreak 1024,\allowbreak 2048,\allowbreak 4096,\allowbreak 8192,\allowbreak 16384\}$, and the learning rates $\{10^{-3},5\cdot10^{-3},10^{-2},5\cdot10^{-2},10^{-1},0.5,1\}$ for SGD, $\{5\cdot10^{-4},10^{-3},3\cdot10^{-3},5\cdot10^{-3},10^{-2},5\cdot10^{-2}\}$ for Adam, $\{5\cdot10^{-3},10^{-2},5\cdot10^{-2},0.1,0.2,0.5\}$ for Muon, and $\{10^{-2},5\cdot10^{-2},0.1,0.2,0.5,1\}$ for Scion, obtained by refining a common initial grid around each optimizer's best configurations.
Scion applies Newton--Schulz orthogonalization to all hidden layers and sign updates to the embedding and output layers.
Figures~\ref{fig:babygpt_dp_heatmaps}--\ref{fig:babygpt_dp_all_batch_ablation2} show the following.
(i) The optimal learning rate of DP-Muon increases with the batch size ($0.05$ for $B\le512$, $0.1$ for $1024\le B\le4096$, $0.2$ for $B\ge8192$), while that of DP-Adam stays in $[10^{-3},3\cdot10^{-3}]$.
(ii) At its optimal learning rate, the loss of DP-Muon varies by about $0.1$ nats or less over $1-\alpha\in\{0.8,0.9,0.95\}$ at every batch size, and $1-\alpha=0.99$ is worse at small batch sizes; the selected momentum therefore changes from one batch size to the next without a clear trend.
(iii) For DP-SGD, the loss depends strongly on the momentum at a given learning rate, reflecting the coupling between the two through the effective step size.
(iv) For Scion at $B=16384$, the best learning rate is the largest one in the grid.

\begin{figure}[h]
    \centering
    \includegraphics[width=0.95\textwidth]{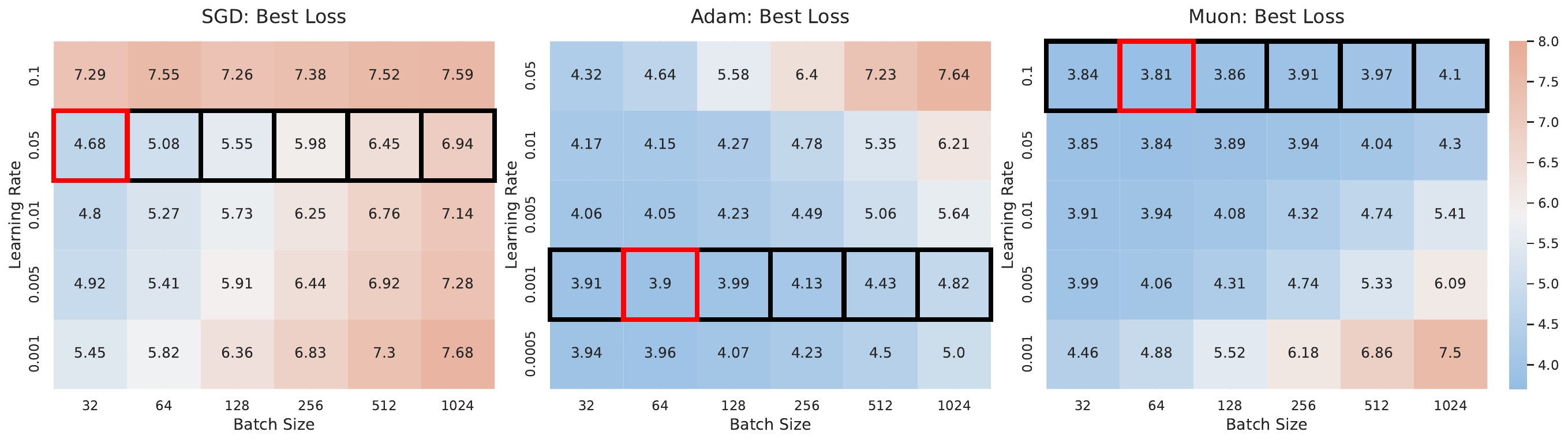}
    \includegraphics[width=0.95\textwidth]{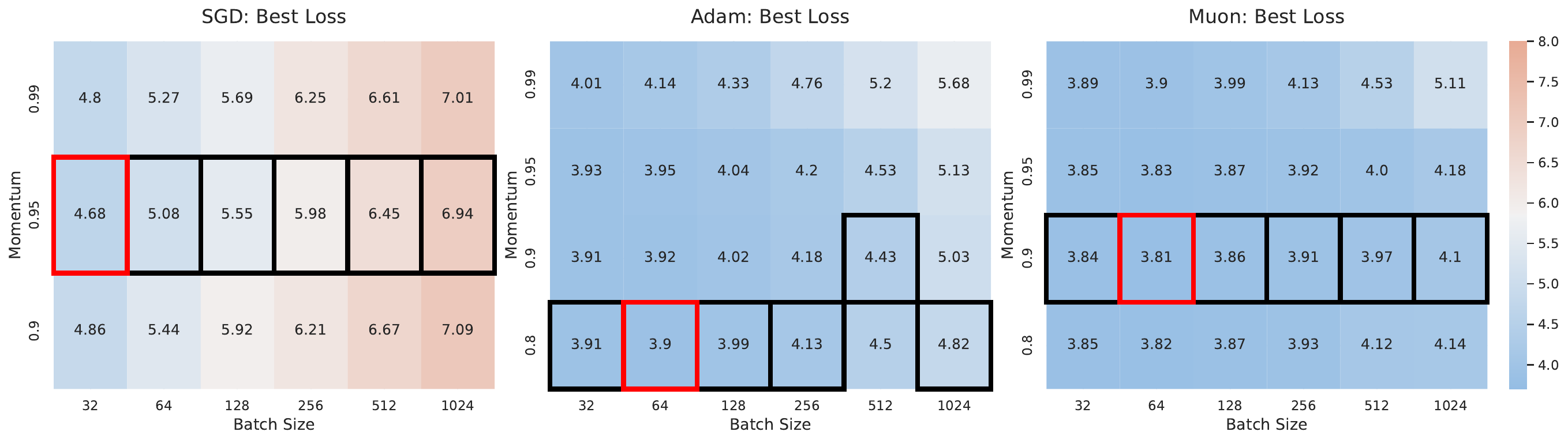}
    \caption{GPT (30M), non-private: best loss for each batch size and learning rate (\textbf{top}) or momentum (\textbf{bottom}).}
    \label{fig:babygpt_baseline_heatmaps}
\end{figure}

\begin{figure}[h]
    \centering
    \includegraphics[width=0.95\textwidth]{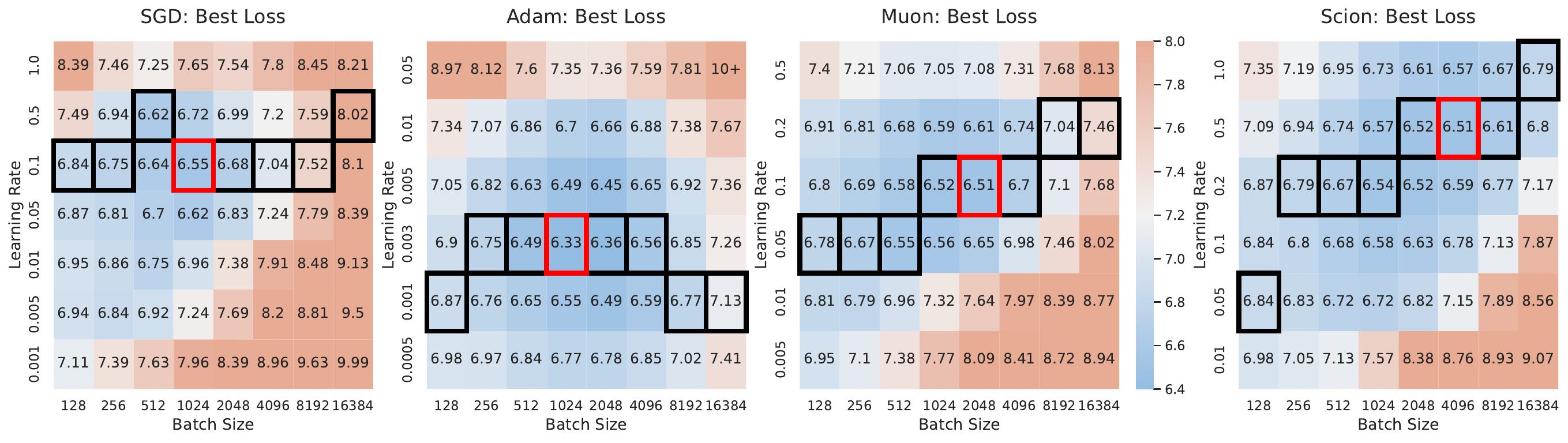}
    \includegraphics[width=0.95\textwidth]{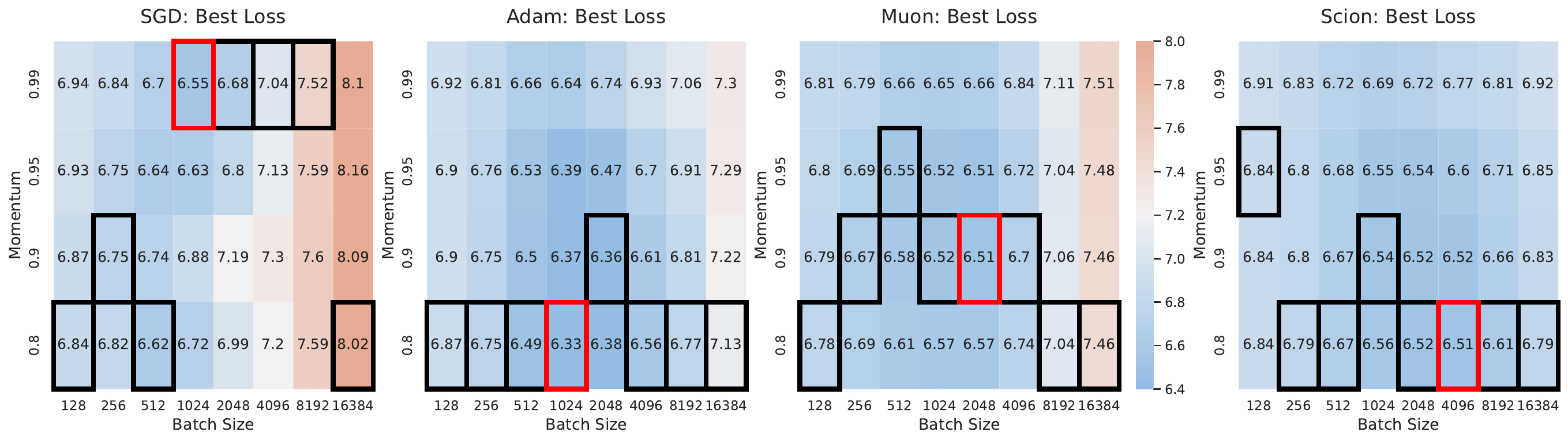}
    \caption{GPT (30M), DP ($\varepsilon=8$): best loss for each batch size and learning rate (\textbf{top}) or momentum (\textbf{bottom}).}
    \label{fig:babygpt_dp_heatmaps}
\end{figure}

\begin{figure}[h]
    \centering
    \includegraphics[width=0.95\textwidth]{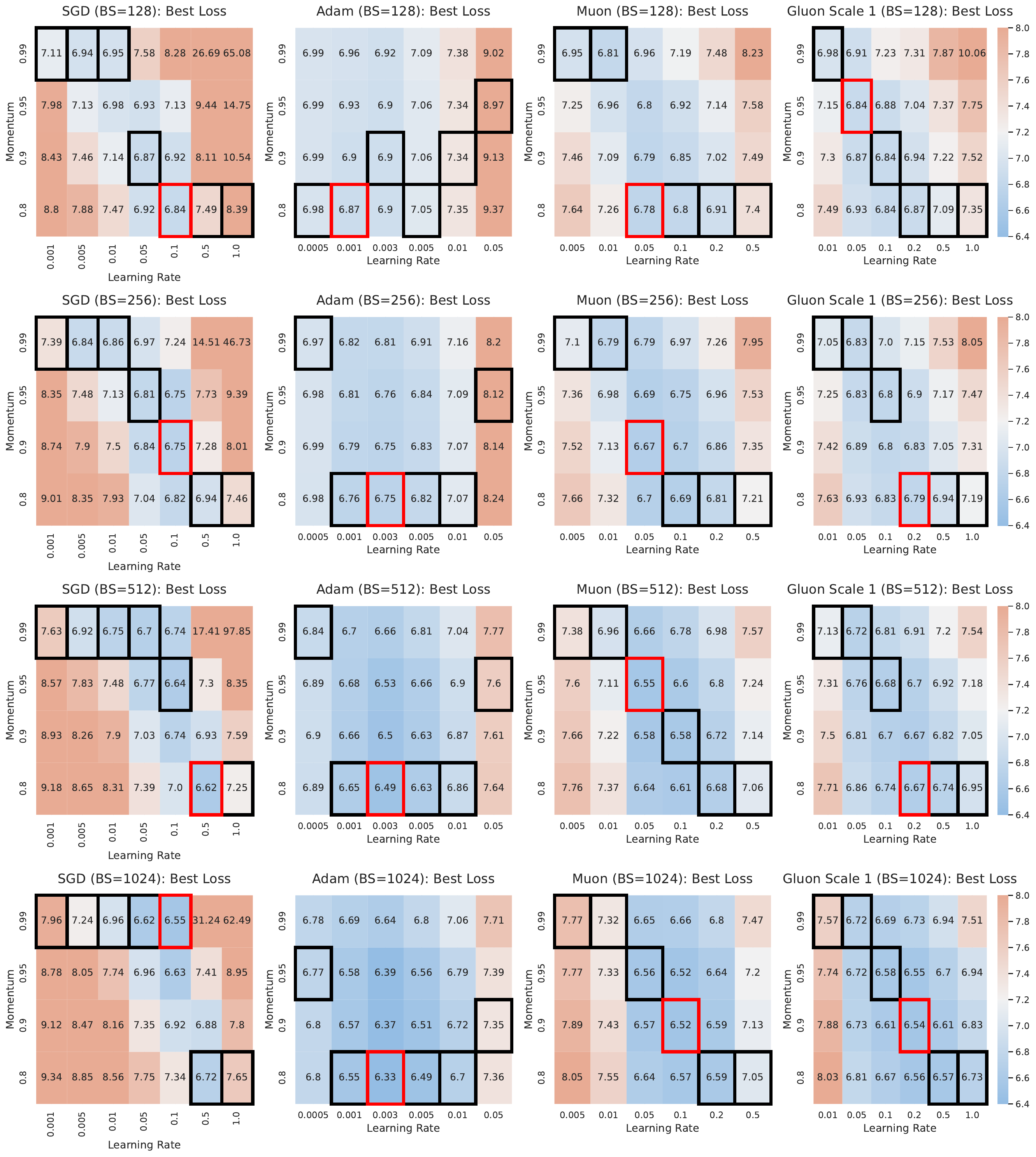}
    \caption{GPT (30M), DP ($\varepsilon=8$): best loss for every learning rate (columns) and momentum (rows), for $B\in\{128,256,512,1024\}$. ``Gluon Scale 1'' denotes the Scion variant used in Figure~\ref{fig:babygpt_baseline_loss_vs_bs}.}
    \label{fig:babygpt_dp_all_batch_ablation1}
\end{figure}

\begin{figure}[h]
    \centering
    \includegraphics[width=0.95\textwidth]{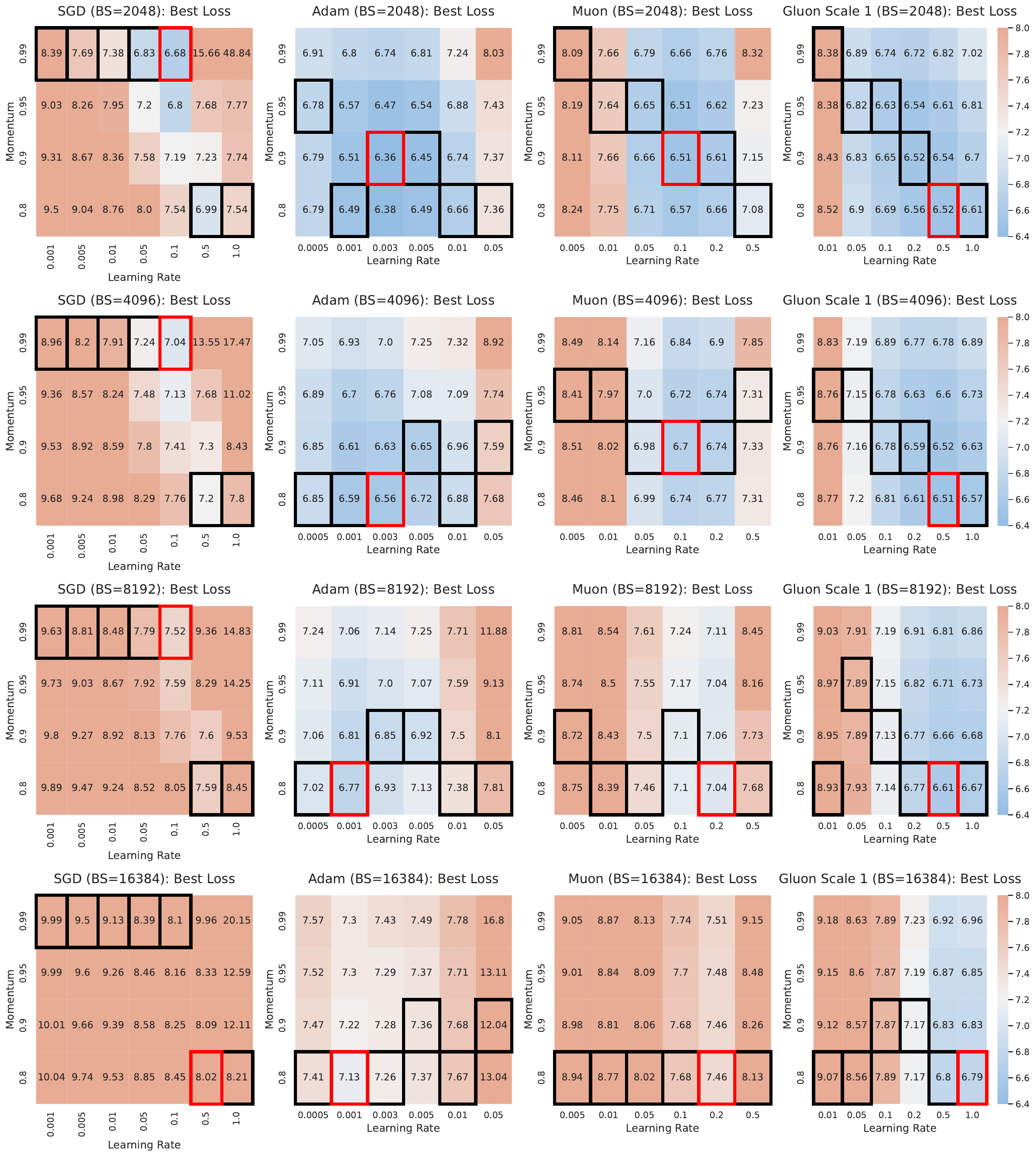}
    \caption{GPT (30M), DP ($\varepsilon=8$): best loss for every learning rate (columns) and momentum (rows), for $B\in\{2048,4096,8192,16384\}$. ``Gluon Scale 1'' denotes the Scion variant used in Figure~\ref{fig:babygpt_baseline_loss_vs_bs}.}
    \label{fig:babygpt_dp_all_batch_ablation2}
\end{figure}
\end{document}